\documentclass{article}

\PassOptionsToPackage{numbers,sort&compress}{natbib}

\usepackage[preprint]{neurips_2026}

\workshoptitle{Neural Network Artifacts as a New Data Modality}

\usepackage[utf8]{inputenc}
\usepackage[T1]{fontenc}
\usepackage{microtype}

\usepackage{mathtools}
\usepackage{amssymb}
\usepackage{amsthm}
\usepackage{bm}

\usepackage{graphicx}
\usepackage{booktabs}
\usepackage{multirow}
\usepackage{makecell}
\usepackage{longtable}
\usepackage{adjustbox}

\usepackage{xcolor}
\usepackage{url}
\usepackage{hyperref}

\newcommand{\R}{\mathbb{R}}
\newcommand{\E}{\mathbb{E}}
\newcommand{\norm}[1]{\left\lVert #1 \right\rVert}

\newcommand{\vecop}{\operatorname{vec}}
\newcommand{\Col}{\operatorname{Col}}
\newcommand{\St}{\operatorname{St}}

\theoremstyle{plain}
\newtheorem{assumption}{Assumption}
\newtheorem{proposition}{Proposition}
\newtheorem{theorem}{Theorem}
\newtheorem{corollary}{Corollary}

\newtheorem{lemma}{Lemma}

\title{Low-Rank Prompt Learning for Vision-Language Models
with Fixed-Token Bases}

\author{Tanvir Muntakim Tonoy, Sajjad Ghiasvand, Mahnoosh Alizadeh, \& Ramtin Pedarsani \\
Department of Electrical and Computer Engineering\\
 UC Santa Barbara\\
Santa Barbara, CA 93106, USA \\
\texttt{\{{tanvirmuntakim,sajjad,alizadeh,ramtin\}@ucsb.edu} 
}
}

\begin{document}

\maketitle

\begin{abstract}

Prompt learning adapts CLIP to downstream recognition by replacing hand-written
templates with learned continuous context vectors, which in Context Optimization
(CoOp) form a dense prompt matrix $\bm{P}\in\mathbb{R}^{m\times d}$ trained from only
a few examples per class. We study whether this matrix is over-parameterized by
factorizing it as $\bm{P}=\bm{B}\bm{A}$, which cuts the trainable prompt parameters
from $md$ to $r(m+d)$, and to $rd$ once the token-side factor $\bm{B}$ is fixed.
Across seven few-shot benchmarks and two CLIP backbones, low-rank prompts match or
improve dense CoOp at far fewer parameters, with the clearest gains on low-shot
base-to-new generalization. We then find that the token-side factor need not be
learned at all: fixing $\bm{B}$ to a Gaussian, orthogonal, SVD-derived, or even random
basis and training only the embedding-side factor $\bm{A}$ stays on par with the fully
trainable factorization, and a source-trained $\bm{B}$ offers no advantage over a
random one. A prompt-factor asymmetry and a local update-space dimension gap show why
fixing $\bm{B}$ is far less restrictive than fixing $\bm{A}$, and a smoothness-only
guarantee certifies that optimizing $\bm{A}$ over a fixed $\bm{B}$ converges. In the
CLIP prompt setting, the embedding-side coefficients carry the adaptation while the
token basis can simply be fixed.

\end{abstract}

\section{Introduction}
\begin{figure}[t]
\centering
\includegraphics[width=0.75\linewidth]{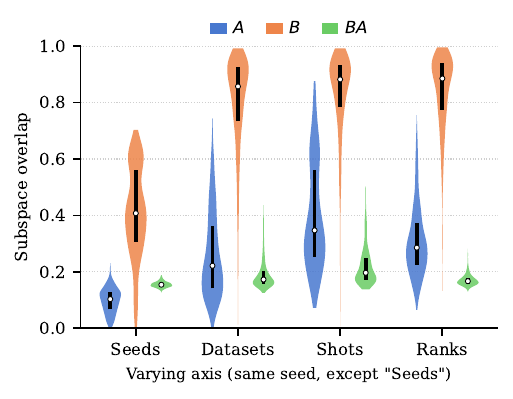}
\caption{Subspace overlap between independently trained factorized prompts, for
the token-side factor $\bm{B}$, embedding-side factor $\bm{A}$, and induced prompt
$\bm{B}\bm{A}$. Each violin aggregates all run pairs that vary the indicated axis
while holding the others (and, except in ``Seeds'', the seed) fixed; white dots mark
medians and bars the interquartile range. $\bm{B}$ is highly aligned across datasets,
shots, and ranks but much less so across seeds, while $\bm{A}$ and $\bm{B}\bm{A}$ stay
weakly aligned throughout.}
\label{fig:factor_geometry}
\end{figure}

Large vision--language models such as CLIP~\cite{radford2021learning} have become
a default interface for visual recognition: an image is classified by comparing its
visual embedding against text embeddings produced from class-name
prompts~\cite{radford2021learning, jia2021scaling}. Because the pretrained encoders
already capture broad open-vocabulary concepts, the dominant way to specialize them
to a downstream task is to keep the backbone frozen and adapt only the textual
prompt~\cite{zhou2022learning, gao2024clip, zhang2022tip}. Context Optimization
(CoOp)~\cite{zhou2022learning} is the canonical instance of this idea: it replaces
the hand-written template with $m$ learnable context vectors, stacked into a dense
prompt matrix $\bm{P}\in\mathbb{R}^{m\times d}$, and trains them by backpropagation
while the CLIP encoders stay frozen. Subsequent work conditions the prompt on the
image~\cite{cocoop} or extends prompting to the visual and cross-modal
branches~\cite{jia2022visual, bahng2022exploring, khattak2023maple, lee2023multimodal},
but the text-side prompt itself remains a dense, dataset-specific matrix.
 
In the few-shot regime that motivates prompt learning, the model sees only a handful
of labeled examples per class, yet $\bm{P}$ still carries $md$ free parameters
($16\times512=8{,}192$ in the standard CoOp configuration). This raises a basic
question: does effective prompt adaptation really require so many degrees of freedom,
or is the dense prompt over-parameterized for the data available? We study this by
constraining the prompt to be explicitly low rank,
\begin{equation}
\label{eq:intro-factorization}
\bm{P}=\bm{B}\bm{A},\qquad
\bm{B}\in\mathbb{R}^{m\times r},\quad
\bm{A}\in\mathbb{R}^{r\times d},\quad
r\ll\min(m,d).
\end{equation}
where the token-side factor $\bm{B}$ mixes $r$ latent components across the context
positions and the embedding-side factor $\bm{A}$ places them in the CLIP
text-embedding space. This reduces the trainable prompt parameters from $md$ to
$r(m+d)$, and to only $rd$ when $\bm{B}$ is held fixed.
 
Low-rank structure is central to parameter-efficient adaptation more broadly. LoRA~\cite{hu2021lora}
injects trainable low-rank updates into frozen weights, and vision--language variants
such as CLIP-LoRA~\cite{zanella2024low} and Block-LoRA~\cite{zhou2025one} adapt the
backbone projections, while prompt-space methods like DIP~\cite{hao2023towards} and
MMLoP~\cite{ghiasvand2026mmlop} impose low-rank structure on the prompt itself. A
common thread is that these methods train \emph{both} low-rank factors. A parallel
line of work on LoRA, however, finds the two factors are not interchangeable:
\citet{zhu2024asymmetry} show it is often preferable to freeze the input-side
projector, even at random, and tune only the output-side factor. Whether the same
asymmetry holds for prompt factorization, where the factors play distinct token-side
and embedding-side roles, has not been examined.
 
A simple diagnostic suggests that it does. When we train factorized prompts
independently and compare their factors across datasets, shot counts, and ranks, the
token-side factor $\bm{B}$ varies little and occupies a nearly common subspace,
whereas the embedding-side factor $\bm{A}$ and the induced prompt $\bm{B}\bm{A}$
change substantially (Fig.~\ref{fig:factor_geometry}). The token basis is largely
shared across tasks while the coefficients carry the adaptation. This sharpens our
question: must $\bm{B}$ be learned at all? If not, prompt adaptation reduces to
optimizing the $rd$ embedding-side coefficients over a fixed low-dimensional token
basis.
 
Our experiments indicate that learning $\bm{B}$ is often
unnecessary in the evaluated CoOp settings. Across seven
few-shot benchmarks and two CLIP backbones, low-rank prompts match or improve dense
CoOp while using far fewer parameters, with the clearest gains on base-to-new
generalization in the low-shot regime. Fixing $\bm{B}$ to a Gaussian, orthogonal,
SVD-derived, or learned-then-frozen basis and training only $\bm{A}$ stays within a
few tenths of a point of the fully trainable factorization, and transfer controls
show a random fixed $\bm{B}$ performs as well as one carried over from a source
dataset, so the token basis need not encode anything task-specific. Theoretically, a
prompt-factor asymmetry analysis shows that fixing $\bm{B}$ restricts the prompt by a
projection in the small $m$-dimensional token space, whereas fixing $\bm{A}$
restricts it in the much larger $d$-dimensional embedding space; a local update-space
argument gives fixing $\bm{B}$ exactly $d/m$ ($=32$ in our setting) times more
adaptation directions than fixing $\bm{A}$; and, under the stated smoothness and lower-boundedness assumptions, we establish a bound on the minimum squared
gradient norm attained by gradient descent over a fixed
$\bm{B}$.
 
We summarize our contributions as follows:
\begin{itemize}
\item We introduce a low-rank factorized formulation of continuous prompt learning
for CLIP, replacing the dense CoOp prompt with $\bm{P}=\bm{B}\bm{A}$, and show that it
matches or improves dense CoOp across seven few-shot benchmarks while using
substantially fewer trainable parameters, with the largest gains on low-shot
base-to-new generalization.
\item Through fixed-basis ablations and transfer controls, we demonstrate that the
token-side factor need not be learned: Gaussian, orthogonal, SVD-derived, and even
random bases match the fully trainable factorization, and a source-trained $\bm{B}$
offers no advantage over a random one.
\item We provide a theoretical account, comprising a prompt-factor asymmetry result,
a local update-space dimension gap, and a smoothness-only convergence guarantee, that
explains why optimizing $\bm{A}$ over a fixed $\bm{B}$ remains effective.
\end{itemize}

\section{Related Work}
\textbf{Vision-Language Models.}
Pre-trained vision-language models (VLMs)~\cite{radford2021learning, jia2021scaling, yao2021filip, yuan2021florence, li2022fine}
have recently attracted significant attention for their strong performance across diverse multi-modal tasks. These models are pre-trained in a self-supervised manner on large-scale image--text pairs collected from the web; CLIP~\cite{radford2021learning},
ALIGN~\cite{jia2021scaling}, LiT~\cite{zhai2022lit}, and FILIP~\cite{yao2021filip}, for instance, leverage hundreds of
millions to billions of pairs, optimizing a contrastive objective that pulls paired image--text features together while pushing unpaired ones apart~\cite{radford2021learning}.
The resulting representations capture rich open-vocabulary concepts, making them effective for a broad range of downstream applications~\cite{gao2024clip,zhang2022tip, bangalath2022bridging, maaz2022class,zhou2022detecting,gu2021open} such as image classification~\cite{zhang2022tip,gao2024clip,zhou2022detecting}, object
detection~\cite{li2024learning,maaz2022class,zhou2022detecting,gu2021open,feng2022promptdet}, and semantic segmentation~\cite{li2024omg,rao2022denseclip,li2024transformer,luddecke2022image}. Nonetheless, adapting these foundational models to downstream tasks without eroding their original
generalization ability remains a key challenge~\cite{khattak2023maple}. \textit{In this work, we focus on the parameter-efficient adaptation of VLMs via prompt learning.}

\textbf{Prompt Learning.}
Prompt learning, originally introduced in NLP~\cite{jiang2020can,shin2020autoprompt}, has become a dominant paradigm for adapting VLMs to downstream tasks without modifying the pretrained backbone~\cite{jiang2020can}. These methods broadly fall into three categories: textual prompt learning~\cite{bulat2023lasp,zhou2022learning,yao2023visual,lu2022prompt} that optimizes continuous prompt vectors in CLIP's language branch; visual prompt learning~\cite{bahng2022exploring,jia2022visual,wang2022learning} that introduces learnable tokens into the visual input space while keeping the
backbone frozen; and multi-modal prompt learning~\cite{khattak2023maple,ghiasvand2025pfedmma,lee2023multimodal, ghiasvand2026zomp} that applies prompts to both branches for stronger cross-modal alignment. As the
canonical text-only method, CoOp~\cite{zhou2022learning} optimizes a continuous context matrix in
the language branch for few-shot recognition, while CoCoOp~\cite{cocoop} conditions the prompt on image features to mitigate overfitting. Subsequent deep multi-modal variants such as MaPLe~\cite{khattak2023maple} further improve performance by learning prompts
jointly across both encoders, though at a higher parameter cost. \textit{Orthogonally to
adding modalities, we instead ask whether the text-side prompt is itself low-rank. We study the effect of low-rank prompts in the CoOp-style text-only setting, factorizing the prompt as $\bm{P} = \bm{B}\bm{A}$ and asking which of the two factors must actually be learned.}

\textbf{Low-Rank Factorization and Factor Asymmetry.}
Inspired by LoRA~\cite{hu2021lora}, recent work brings low-rank adaptation to VLMs in two forms. Weight-space methods inject low-rank adapters into the frozen backbone. For example, CLIP-LoRA~\cite{zanella2024low} adapts the attention projections of both CLIP
encoders, and Block-LoRA~\cite{zhou2025one} shares down-projection blocks across layers to
reduce adapter redundancy, though both update the pretrained weights. Prompt-space methods instead leave the backbone untouched and impose low-rank structure on the learnable prompt itself, as in DIP~\cite{hao2023towards} and MMLoP~\cite{ghiasvand2026mmlop}, which factorizes deep prompts in both encoders. These methods all train both low-rank factors. A parallel
line of work on LoRA shows that two factors are not interchangeable: \citet{zhu2024asymmetry} find it preferable to freeze the input-side
projector, even when random, and tune only the output-side factor. \textit{Whether
this asymmetry carries over to prompt factorization, where the factors play
distinct token-side and embedding-side roles, remains open. We analyze this question both theoretically and empirically.}

\section{Preliminaries}
\label{sec:prelim}
\textbf{CLIP.} We denote CLIP's image and text encoders by $f$ and $g$, with frozen pretrained
parameters $\theta_{\text{CLIP}} = \{\theta_f, \theta_g\}$. An input image
$\bm{X} \in \mathbb{R}^{C \times H \times W}$ is split into $M$ patches and linearly
projected into patch embeddings; together with a class token $\bm{e}_{\text{cls}}$
these form the encoder input
$\tilde{\bm{X}} = \{\bm{e}_{\text{cls}}, \bm{e}_1, \dots, \bm{e}_M\}$, which $f$ maps
to a visual feature $\tilde{\bm{f}} = f(\tilde{\bm{X}}; \theta_f) \in \mathbb{R}^d$.
On the text side, each class name $c_k$ is inserted into a hand-crafted template such
as ``a photo of a \{class\}'' and tokenized as
$\tilde{\bm{Y}}_k = \{\bm{t}_{\text{SOS}}, \bm{t}_1, \dots, \bm{t}_N, c_k,
\bm{t}_{\text{EOS}}\}$, where $\{\bm{t}_n\}_{n=1}^N$ are the template word embeddings
and $\bm{t}_{\text{SOS}}, \bm{t}_{\text{EOS}}$ mark the start and end of the sequence.
The text encoder produces a class feature
$\tilde{\bm{g}}_k = g(\tilde{\bm{Y}}_k; \theta_g) \in \mathbb{R}^d$. Zero-shot
prediction then scores the image against all $C$ class descriptions through cosine
similarity:
\begin{equation}
    p(y = k \mid \bm{X}) = \frac{\exp\!\big(\text{sim}(\tilde{\bm{g}}_k, \tilde{\bm{f}}) / \tau\big)}
    {\sum_{i=1}^{C} \exp\!\big(\text{sim}(\tilde{\bm{g}}_i, \tilde{\bm{f}}) / \tau\big)},
\end{equation}
where $\text{sim}(\cdot, \cdot)$ is cosine similarity and $\tau$ is a temperature.

\textbf{Prompt Learning with CoOp.}
Rather than relying on a fixed template, CoOp~\cite{zhou2022learning} replaces the
template word tokens with a set of learnable continuous context vectors that are
optimized on a few labeled examples. Concretely, the $N$ fixed tokens are substituted
by $m$ learnable vectors $\{\bm{p}_1, \dots, \bm{p}_m\}$ with
$\bm{p}_i \in \mathbb{R}^d$, which we stack into a single prompt matrix
\begin{equation}
    \bm{P} = [\bm{p}_1; \bm{p}_2; \dots; \bm{p}_m] \in \mathbb{R}^{m \times d},
\end{equation}
where $m = n_{\text{ctx}}$ is the number of context tokens and $d$ is the CLIP
text-embedding dimension ($d = 512$). Following the unified-context setting, $\bm{P}$
is shared across all classes. The prompted text input for class $k$ becomes
$\tilde{\bm{Y}}_p^{k} = \{\bm{t}_{\text{SOS}}, \bm{p}_1, \dots, \bm{p}_m, c_k,
\bm{t}_{\text{EOS}}\}$, yielding the prompted class feature
$\tilde{\bm{g}}_p^{k} = g(\tilde{\bm{Y}}_p^{k}; \theta_g)$. Only $\bm{P}$ is trained,
by minimizing the cross-entropy classification loss over the few-shot training set
$\mathcal{D}$ while the encoders $\theta_{\text{CLIP}}$ remain frozen:
\begin{equation}
    \bm{P}^{\star} = \mathop{\arg\min}_{\bm{P}} \;
    \mathbb{E}_{(\bm{X}, y) \sim \mathcal{D}} \;
    \mathcal{L}\big(\text{sim}(\tilde{\bm{f}}, \tilde{\bm{g}}_p), \, y\big).
    \label{eq:coop-ce}
\end{equation}
The prompt matrix $\bm{P}$ is thus the sole object of adaptation, and is the
quantity we factorize in the next section.

\section{Low-Rank Prompt Learning}
\label{sec:method}
\subsection{Factorized Prompt Parameterization}
\label{sec:factorized}

 We replace the dense CoOp prompt
$\bm{P}\in\R^{m\times d}$ of Section~\ref{sec:prelim} with the low-rank product
\begin{equation}
    \bm{P} = \bm{B}\bm{A}, \qquad
    \bm{B}\in\R^{m\times r}, \;\;
    \bm{A}\in\R^{r\times d}, \;\; r\ll d,
    \label{eq:factorization_method}
\end{equation}
where the \emph{token-side} factor $\bm{B}$ mixes $r$ latent components across the
$m=n_{\rm ctx}$ context-token positions and the \emph{embedding-side} factor
$\bm{A}$ selects directions in the CLIP text-embedding space. Beyond reducing the
parameter count, this separation lets us ask not just whether a low-rank prompt is
expressive enough, but which of the two factors must actually be learned. We
therefore study three training regimes: jointly training both factors, freezing
$\bm{B}$ and training $\bm{A}$, and freezing $\bm{A}$ and training $\bm{B}$. The last
regime serves as a control that tests whether the two factors play interchangeable
roles. The factorization steadily reduces the trainable prompt budget,
\begin{equation}
    \underbrace{md}_{\text{dense CoOp}} \;\longrightarrow\;
    \underbrace{r(m+d)}_{\text{train }\bm{B},\bm{A}} \;\longrightarrow\;
    \underbrace{rd}_{\text{fixed }\bm{B},\ \text{train }\bm{A}},
    \label{eq:param_counts}
\end{equation}
with $mr$ parameters in the fixed-$\bm{A}$ control; for $m=16$, $d=512$, and $r=4$
this is $8{,}192 \to 2{,}112 \to 2{,}048$ parameters, and $64$ for fixed $\bm{A}$.

The fixed-$\bm{B}$ regime optimizes only $\bm{A}$ over a
non-trainable token basis. We consider Gaussian,
scaled-orthogonal, SVD-derived, and previously learned
same-task bases. In the fixed-basis ablations, we initialize
the trainable factor by
\[
\bm{A}_0=\bm{B}_{\rm fixed}^{\dagger}\bm{P}_0,
\]
where $\bm{P}_0$ is a sampled dense prompt and
$\bm{B}^{\dagger}$ denotes the Moore--Penrose pseudoinverse.
The initial product is therefore the least-squares
projection of $\bm{P}_0$ onto the selected basis.

For joint factorization, we use the balanced truncated
SVD initialization
$\bm{B}_0=\bm{U}_r\bm{\Sigma}_r^{1/2}$ and
$\bm{A}_0=\bm{\Sigma}_r^{1/2}\bm{V}_r^\top$.
These procedures do not generally produce identical
initial prompt products across variants. Transfer runs
use a separate initialization procedure described in
Appendix~\ref{app:protocol}. In fixed-$\bm{B}$ training,
only $\bm{A}$ is optimized under the cross-entropy
objective of Eq.~\eqref{eq:coop-ce}.

Two analyses support this regime. A prompt-factor asymmetry result shows that fixing
$\bm{B}$ is far less restrictive than fixing $\bm{A}$, which is what makes the
token-side factor, rather than the embedding-side factor, the one to freeze. Separately, 
under the stated smoothness and lower-boundedness
assumptions, we establish a bound on the minimum squared
gradient norm attained by gradient descent over a fixed
$\bm{B}$.; this certifies that the procedure we run is well-posed,
but it is symmetric in the two factors and does not by itself favor fixing one over
the other. Expanded statements and proofs appear in
Appendix~\ref{app:theory}.

\subsection{Prompt-Factor Asymmetry}
\label{sec:asymmetry}

 To understand why the two factors are not interchangeable, we analyze a simplified
prompt-space approximation problem that isolates the geometry of the factorized
parameterization from the nonlinear CLIP text encoder.

\begin{assumption}[Linear prompt-space surrogate]
\label{asm:surrogate}
There exists an ideal prompt matrix $\bm{P}^{\star}\in\R^{m\times d}$ such that local
prompt adaptation can be approximated by the least-squares problem
$\min_{\bm{A},\bm{B}}\lVert \bm{P}^{\star}-\bm{B}\bm{A}\rVert_F^2$.
\end{assumption}

Under Assumption~\ref{asm:surrogate}, each one-sided regime reduces to an orthogonal
projection of $\bm{P}^{\star}$. Fixing $\bm{B}=\bm{U}$ with orthonormal columns leaves
the residual $\lVert(\bm{I}_m-\bm{U}\bm{U}^{\top})\bm{P}^{\star}\rVert_F^2$, a
projection in the $m$-dimensional token-position space, whereas fixing
$\bm{A}=\bm{Q}$ with orthonormal rows leaves
$\lVert\bm{P}^{\star}(\bm{I}_d-\bm{Q}^{\top}\bm{Q})\rVert_F^2$, a projection in the
$d$-dimensional embedding space (proved in the supplement). Since typically
$d\gg m$, the latter restriction is far more severe. Taking expectations over random
bases makes this precise.

\begin{theorem}[Expected prompt-factor asymmetry]
\label{thm:asymmetry}
Let $\bm{P}^{\star}\in\R^{m\times d}$ be nonzero and $1\le r\le m<d$. Let
$\bm{U}\sim\mathrm{Unif}(\St(m,r))$ be Haar-uniform with orthonormal columns and
$\bm{Q}^{\top}\sim\mathrm{Unif}(\St(d,r))$, so $\bm{Q}\in\R^{r\times d}$ has
orthonormal rows. Then
\begin{align}
\E_{\bm{U}}\min_{\bm{A}}\lVert\bm{P}^{\star}-\bm{U}\bm{A}\rVert_F^2
  &= \Big(1-\tfrac{r}{m}\Big)\lVert\bm{P}^{\star}\rVert_F^2, \nonumber\\[2pt]
\E_{\bm{Q}}\min_{\bm{B}}\lVert\bm{P}^{\star}-\bm{B}\bm{Q}\rVert_F^2
  &= \Big(1-\tfrac{r}{d}\Big)\lVert\bm{P}^{\star}\rVert_F^2 .
\end{align}
Consequently, since $m<d$ and $\bm{P}^{\star}\neq\bm{0}$,
\begin{equation}
\E_{\bm{U}}\min_{\bm{A}}\lVert\bm{P}^{\star}-\bm{U}\bm{A}\rVert_F^2
\;<\;
\E_{\bm{Q}}\min_{\bm{B}}\lVert\bm{P}^{\star}-\bm{B}\bm{Q}\rVert_F^2 .
\end{equation}
\end{theorem}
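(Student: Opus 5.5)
The plan is to reduce each one-sided problem to an orthogonal projection and then compute the expected projection using Haar invariance.

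\emph{Step 1: reduce to projections.} For fixed $\bm{U}$ with $\bm{U}^{\top}\bm{U}=\bm{I}_r$, the problem $\min_{\bm{A}}\lVert\bm{P}^{\star}-\bm{U}\bm{A}\rVert_F^2$ is an ordinary least-squares problem in $\bm{A}$. It separates column by column, and its minimizer is $\bm{A}=\bm{U}^{\top}\bm{P}^{\star}$. Since $\bm{U}\bm{U}^{\top}$ is an orthogonal projection, Pythagoras gives the residual
\[
\lVert(\bm{I}_m-\bm{U}\bm{U}^{\top})\bm{P}^{\star}\rVert_F^2=\lVert\bm{P}^{\star}\rVert_F^2-\operatorname{tr}\big(\bm{U}\bm{U}^{\top}\bm{P}^{\star}\bm{P}^{\star\top}\big).
\]
Symmetrically, for $\bm{Q}$ with $\bm{Q}\bm{Q}^{\top}=\bm{I}_r$, the problem separates row by row. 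The minimizer is $\bm{B}=\bm{P}^{\star}\bm{Q}^{\top}$, and the residual is $\lVert\bm{P}^{\star}\rVert_F^2-\operatorname{tr}\big(\bm{Q}^{\top}\bm{Q}\,\bm{P}^{\star\top}\bm{P}^{\star}\big)$.

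\emph{Step 2: expected projector.} I would show that $\E[\bm{U}\bm{U}^{\top}]=\tfrac{r}{m}\bm{I}_m$ when $\bm{U}\sim\mathrm{Unif}(\St(m,r))$.

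\begin{itemize}
\item Haar uniformity means $\bm{O}\bm{U}$ has the same law as $\bm{U}$ for every orthogonal $\bm{O}$. Hence $\bm{M}=\E[\bm{U}\bm{U}^{\top}]$ satisfies $\bm{O}\bm{M}\bm{O}^{\top}=\bm{M}$ for all $\bm{O}\in O(m)$.
\item Any symmetric matrix commuting with all of $O(m)$ is a scalar multiple of $\bm{I}_m$. A quick argument: conjugating by sign flips kills the off-diagonal entries, and conjugating by permutations equalizes the diagonal entries.
\item Taking traces, $\operatorname{tr}\bm{M}=\E\operatorname{tr}(\bm{U}^{\top}\bm{U})=r$, so the scalar is $r/m$.
\end{itemize}

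The identical argument in dimension $d$, applied to $\bm{Q}^{\top}\sim\mathrm{Unif}(\St(d,r))$, gives $\E[\bm{Q}^{\top}\bm{Q}]=\tfrac{r}{d}\bm{I}_d$.

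\emph{Step 3: assemble.} The trace is linear, so the expectation passes inside it. For the first identity,
\[
\E\operatorname{tr}\big(\bm{U}\bm{U}^{\top}\bm{P}^{\star}\bm{P}^{\star\top}\big)=\tfrac{r}{m}\operatorname{tr}\big(\bm{P}^{\star}\bm{P}^{\star\top}\big)=\tfrac{r}{m}\lVert\bm{P}^{\star}\rVert_F^2,
\]
which gives the first formula. The second formula follows in the same way with $r/d$ in place of $r/m$.

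For the strict inequality, note that $m<d$ gives $r/m>r/d$, hence $1-r/m<1-r/d$. Multiplying by $\lVert\bm{P}^{\star}\rVert_F^2>0$ preserves the strict inequality.

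\emph{Main obstacle.} The only non-routine step is the invariance argument in Step 2. It needs the expectation to be well-defined and the Haar law to be exactly left-invariant under the full orthogonal group. Both hold because the Stiefel manifold is compact and $\mathrm{Unif}$ is defined as the invariant measure. An alternative I could use instead is to realize $\bm{U}$ as the first $r$ columns of a Haar orthogonal matrix. Then each entry $U_{ij}^2$ has mean $1/m$ because each column is uniform on the sphere, which gives $\E[\bm{U}\bm{U}^{\top}]_{ii}=r/m$ directly.
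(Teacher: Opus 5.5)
Your proposal is correct and follows essentially the same route as the paper. Both arguments solve the least-squares problem to get a projection residual, write the captured energy as a trace, and apply $\E[\bm{U}\bm{U}^{\top}]=\tfrac{r}{m}\bm{I}_m$ (and $\E[\bm{Q}^{\top}\bm{Q}]=\tfrac{r}{d}\bm{I}_d$); the one difference is that you prove this expected-projector identity via sign flips, permutations, and the trace, where the paper simply asserts it from rotational invariance. Your fallback sphere-coordinate argument, as stated, only gives the diagonal entries, so it would still need the sign-symmetry step to kill the off-diagonal ones.
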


In words, under the linear surrogate, fixing the token-side factor $\bm{B}$ and
optimizing $\bm{A}$ has strictly smaller expected approximation error than fixing the
embedding-side factor $\bm{A}$ and optimizing $\bm{B}$. The gap is governed entirely
by the dimensions the two factors live in.

\begin{corollary}
\label{cor:asymmetry}
In our setting $m=16$ and $d=512$, so for any fixed rank $r\le 16$ the residual ratio
is $(1-r/16)/(1-r/512)$. A random fixed token-side basis $\bm{B}$ removes a fraction
$r/16$ of the ideal prompt energy in expectation, whereas a random fixed
embedding-side factor $\bm{A}$ removes only $r/512$. Fixing $\bm{B}$ and training
$\bm{A}$ therefore preserves a much larger fraction of the ideal prompt than the
reverse.
\end{corollary}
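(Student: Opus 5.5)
The corollary follows by direct specialization of Theorem~\ref{thm:asymmetry}, so the plan is to instantiate it at $m=16$, $d=512$ and read off the claimed quantities. The hypotheses of the theorem are $1\le r\le m<d$ and $\bm{P}^{\star}\neq\bm{0}$. With $m=16<512=d$, any rank $1\le r\le 16$ qualifies. The two expected residuals are then
\[
\E_{\bm{U}}\min_{\bm{A}}\lVert\bm{P}^{\star}-\bm{U}\bm{A}\rVert_F^2=\Big(1-\tfrac{r}{16}\Big)\lVert\bm{P}^{\star}\rVert_F^2
\]
and
\[
\E_{\bm{Q}}\min_{\bm{B}}\lVert\bm{P}^{\star}-\bm{B}\bm{Q}\rVert_F^2=\Big(1-\tfrac{r}{512}\Big)\lVert\bm{P}^{\star}\rVert_F^2 .
\]

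\textbf{The ``removed energy'' statements.} The optimal residual in each one-sided problem is the energy left after orthogonal projection: $\lVert(\bm{I}_m-\bm{U}\bm{U}^{\top})\bm{P}^{\star}\rVert_F^2$ in one case and $\lVert\bm{P}^{\star}(\bm{I}_d-\bm{Q}^{\top}\bm{Q})\rVert_F^2$ in the other. By Pythagoras, the captured energies are $\lVert\bm{U}\bm{U}^{\top}\bm{P}^{\star}\rVert_F^2$ and $\lVert\bm{P}^{\star}\bm{Q}^{\top}\bm{Q}\rVert_F^2$. Dividing by $\lVert\bm{P}^{\star}\rVert_F^2$ gives expected captured fractions $r/16$ and $r/512$, with expected residual fractions $1-r/16$ and $1-r/512$.

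\textbf{Wording.} The phrases ``removes a fraction'' and ``preserves a much larger fraction'' are ambiguous: they refer to how much of the prompt a random basis can represent, not to what is discarded. I would state this interpretation explicitly and give the comparison in terms of the exact quantities above, so the claim is unambiguous whichever reading is intended.

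\textbf{The residual ratio.} This is the only point that needs care. The ratio of the two expected residuals is $(1-r/16)/(1-r/512)$, which is well defined because $r\le 16<512$ makes the denominator positive. The ratio is strictly below $1$ because $r/16>r/512$ for $r\ge 1$, which reproduces the strict inequality of Theorem~\ref{thm:asymmetry}. At $r=16$ the ratio is $0$: a Haar $\bm{U}$ is then an orthogonal $m\times m$ matrix, so the residual vanishes.

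No further obstacle is expected; the proof is pure arithmetic on top of Theorem~\ref{thm:asymmetry}.
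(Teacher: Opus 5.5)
Your proposal is correct and follows the paper's route: the paper gives no separate argument and simply specializes Theorem~\ref{thm:asymmetry} to $m=16$, $d=512$, reading the fractions $r/16$ and $r/512$ off the expected residuals $(1-r/m)\lVert\bm{P}^{\star}\rVert_F^2$ and $(1-r/d)\lVert\bm{P}^{\star}\rVert_F^2$. Your reading of ``removes'' as energy taken out of the residual, i.e.\ captured by the fixed basis, is the only one consistent with the stated numbers; one small point is that the denominator of the residual ratio is positive only because $\bm{P}^{\star}\neq\bm{0}$, in addition to $r<512$.
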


\subsection{Local Update-Space Dimension Gap}
\label{sec:dimgap}

The linear surrogate captures the approximation geometry, but the CoOp objective is
not the Frobenius problem $\min_{\bm{P}}\norm{\bm{P}-\bm{P}^\star}_F^2$; it is a
classification loss $\mathcal{L}(\bm{P})$ obtained after passing $\bm{P}$ through the
frozen text encoder. We now connect the two. Writing $\bm{p}=\vecop(\bm{P})$ and
expanding $\mathcal{L}$ to second order around a reference prompt
$\bm{p}_0=\vecop(\bm{P}_0)$ reduces local prompt learning to projecting an ideal
update $\bm{\delta}^\star$ onto the subspace $\mathcal{S}$ of updates admitted by the
parameterization, under the metric induced by a curvature matrix $\bm{H}\succeq 0$.
The two one-sided regimes admit different subspaces, and their dimensions differ
sharply.

\begin{assumption}[Factorized reference prompt]
\label{assump:factorized_reference_prompt}
The reference prompt admits a rank-$r$ factorization $\bm{P}_0=\bm{B}_0\bm{A}_0$ with
$\bm{B}_0\in\R^{m\times r}$ of full column rank $r$ and $\bm{A}_0\in\R^{r\times d}$ of
full row rank $r$.
\end{assumption}

\begin{proposition}[Local update spaces]
\label{prop:update_spaces}
Under Assumption~\ref{assump:factorized_reference_prompt}, fixing $\bm{B}_0$ and
writing $\bm{A}=\bm{A}_0+\Delta\bm{A}$ gives $\Delta\bm{P}=\bm{B}_0\Delta\bm{A}$, so
the update space is
$\mathcal{S}_{\bm{A}}(\bm{B}_0)=\Col(\bm{I}_d\otimes\bm{B}_0)$ with
$\dim\mathcal{S}_{\bm{A}}(\bm{B}_0)=rd$. Fixing $\bm{A}_0$ and writing
$\bm{B}=\bm{B}_0+\Delta\bm{B}$ gives $\Delta\bm{P}=\Delta\bm{B}\,\bm{A}_0$, so the
update space is
$\mathcal{S}_{\bm{B}}(\bm{A}_0)=\Col(\bm{A}_0^\top\otimes\bm{I}_m)$ with
$\dim\mathcal{S}_{\bm{B}}(\bm{A}_0)=mr$.
\end{proposition}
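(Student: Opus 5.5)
The proof is a direct computation with the vectorization identity $\vecop(\bm{X}\bm{Y}\bm{Z})=(\bm{Z}^\top\otimes\bm{X})\vecop(\bm{Y})$, together with the rank-multiplicativity of Kronecker products. Since $\bm{P}=\bm{B}\bm{A}$ is bilinear, the update is exact rather than first-order. First I fix $\bm{B}_0$ and set $\bm{A}=\bm{A}_0+\Delta\bm{A}$, which gives $\bm{P}-\bm{P}_0=\bm{B}_0\Delta\bm{A}$. Writing $\bm{B}_0\Delta\bm{A}=\bm{B}_0\Delta\bm{A}\bm{I}_d$ and vectorizing yields
\[
\vecop(\Delta\bm{P})=(\bm{I}_d\otimes\bm{B}_0)\,\vecop(\Delta\bm{A}).
\]
As $\Delta\bm{A}$ ranges over $\R^{r\times d}$, $\vecop(\Delta\bm{A})$ ranges over all of $\R^{rd}$. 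The set of admissible $\vecop(\Delta\bm{P})$ is therefore exactly $\Col(\bm{I}_d\otimes\bm{B}_0)$.

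By the symmetric argument, fixing $\bm{A}_0$ gives $\Delta\bm{P}=\Delta\bm{B}\bm{A}_0=\bm{I}_m\Delta\bm{B}\bm{A}_0$. Vectorizing,
\[
\vecop(\Delta\bm{P})=(\bm{A}_0^\top\otimes\bm{I}_m)\,\vecop(\Delta\bm{B}),
\]
with $\vecop(\Delta\bm{B})$ ranging over $\R^{mr}$. This identifies $\mathcal{S}_{\bm{B}}(\bm{A}_0)=\Col(\bm{A}_0^\top\otimes\bm{I}_m)$.

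For the dimensions I use $\operatorname{rank}(\bm{X}\otimes\bm{Y})=\operatorname{rank}(\bm{X})\operatorname{rank}(\bm{Y})$. This follows, for example, from the SVD: $\bm{X}\otimes\bm{Y}=(\bm{U}_X\otimes\bm{U}_Y)(\bm{\Sigma}_X\otimes\bm{\Sigma}_Y)(\bm{V}_X\otimes\bm{V}_Y)^\top$, and the number of nonzero products of singular values is the product of the counts. Assumption~\ref{assump:factorized_reference_prompt} gives $\operatorname{rank}\bm{B}_0=r$ and $\operatorname{rank}\bm{A}_0^\top=r$. Hence
\[
\dim\mathcal{S}_{\bm{A}}(\bm{B}_0)=d\cdot r,\qquad \dim\mathcal{S}_{\bm{B}}(\bm{A}_0)=r\cdot m.
\]
Equivalently, $\bm{I}_d\otimes\bm{B}_0$ is block diagonal with $d$ copies of the injective map $\bm{B}_0$, so it is injective on $\R^{rd}$. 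Likewise $\bm{A}_0^\top\otimes\bm{I}_m$ is injective on $\R^{mr}$, because $\bm{A}_0^\top$ has full column rank. In both cases the column-space dimension equals the number of trainable parameters.

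There is no real obstacle here. The only points needing care are the column-major $\vecop$ convention and the order of Kronecker factors, which must match the paper's statement. The role of full rank is also worth stating explicitly: without it, the dimensions would drop to $d\operatorname{rank}\bm{B}_0$ and $m\operatorname{rank}\bm{A}_0$.
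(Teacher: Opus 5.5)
Your proposal is correct and follows the paper's own argument. You substitute the perturbed factor to get $\Delta\bm{P}=\bm{B}_0\Delta\bm{A}$ (resp.\ $\Delta\bm{B}\,\bm{A}_0$), vectorize via $\vecop(\bm{X}\bm{Y}\bm{Z})=(\bm{Z}^\top\otimes\bm{X})\vecop(\bm{Y})$, and read off the dimensions from $\operatorname{rank}(\bm{X}\otimes\bm{Y})=\operatorname{rank}(\bm{X})\operatorname{rank}(\bm{Y})$; beyond the paper, you also write out the fixed-$\bm{A}$ case that the paper only calls ``similar'' and justify the Kronecker rank identity that the paper simply cites.
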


\begin{corollary}[Local dimension gap]
\label{cor:local_dimension_gap}
Under Assumption~\ref{assump:factorized_reference_prompt},
$\dim\mathcal{S}_{\bm{A}}(\bm{B}_0)/\dim\mathcal{S}_{\bm{B}}(\bm{A}_0)=rd/(mr)=d/m$,
which equals $32$ for $m=16$ and $d=512$. Fixing $\bm{B}$ and training $\bm{A}$ thus
provides $32$ times more local update directions than fixing $\bm{A}$ and training
$\bm{B}$.
\end{corollary}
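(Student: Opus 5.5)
The corollary follows from Proposition~\ref{prop:update_spaces}, so the plan is to secure the two dimension counts there and then take their ratio. First I will recall the vectorization identity $\vecop(\bm{X}\bm{Y}\bm{Z})=(\bm{Z}^\top\otimes\bm{X})\vecop(\bm{Y})$, with column-stacking $\vecop$. Applied to $\Delta\bm{P}=\bm{B}_0\Delta\bm{A}\,\bm{I}_d$ it gives $\vecop(\Delta\bm{P})=(\bm{I}_d\otimes\bm{B}_0)\vecop(\Delta\bm{A})$. Applied to $\Delta\bm{P}=\bm{I}_m\Delta\bm{B}\,\bm{A}_0$ it gives $\vecop(\Delta\bm{P})=(\bm{A}_0^\top\otimes\bm{I}_m)\vecop(\Delta\bm{B})$. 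Because $\Delta\bm{A}\in\R^{r\times d}$ and $\Delta\bm{B}\in\R^{m\times r}$ range over all matrices of their shapes, the update sets are exactly the column spaces stated in the proposition.

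Next, the dimensions are the ranks of these Kronecker products, and I will use $\operatorname{rank}(\bm{X}\otimes\bm{Y})=\operatorname{rank}(\bm{X})\operatorname{rank}(\bm{Y})$. By Assumption~\ref{assump:factorized_reference_prompt}, $\bm{B}_0$ has rank $r$ and $\bm{A}_0^\top$ has rank $r$. Hence
\[
\operatorname{rank}(\bm{I}_d\otimes\bm{B}_0)=d\cdot r,
\qquad
\operatorname{rank}(\bm{A}_0^\top\otimes\bm{I}_m)=r\cdot m .
\]
Both products have full column rank, which means the parameterizations are injective and the counts are not inflated by redundancy. The rank identity can be justified by the SVD: $\bm{X}\otimes\bm{Y}=(\bm{U}_X\otimes\bm{U}_Y)(\bm{\Sigma}_X\otimes\bm{\Sigma}_Y)(\bm{V}_X\otimes\bm{V}_Y)^\top$, and the nonzero singular values are the pairwise products.

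Finally, I divide the two counts. Since $r\ge1$, the factor $r$ cancels and the ratio is $rd/(mr)=d/m$. Substituting $m=16$ and $d=512$ gives $32$.

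The only real point of care is full rank. Without Assumption~\ref{assump:factorized_reference_prompt}, a rank-deficient $\bm{B}_0$ or $\bm{A}_0$ would shrink the dimensions to $d\operatorname{rank}(\bm{B}_0)$ and $m\operatorname{rank}(\bm{A}_0)$, and the clean cancellation would fail. I will also note that "update directions" means the dimension of the local tangent space at first order. The curvature metric $\bm{H}$ does not enter, because dimension is independent of the metric.
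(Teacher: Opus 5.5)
Your proposal is correct and follows essentially the paper's route. The vec--Kronecker identity identifies the update spaces as $\Col(\bm{I}_d\otimes\bm{B}_0)$ and $\Col(\bm{A}_0^\top\otimes\bm{I}_m)$; the Kronecker rank identity with the full-rank assumption gives dimensions $rd$ and $mr$; and the ratio is $d/m=32$. Your explicit treatment of the fixed-$\bm{A}$ case, which the paper dismisses as ``similar'', and your SVD justification of the rank identity are sound elaborations rather than a different argument.
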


When $\bm{H}$ is isotropic, the local objective
$\min_{\bm{\delta}\in\mathcal{S}}\norm{\bm{\delta}-\bm{\delta}^\star}_{\bm{H}}^2$
reduces to Euclidean projection and recovers the Frobenius surrogate of
Theorem~\ref{thm:asymmetry} exactly. The asymmetry result and the dimension gap are
thus two views of the same effect: the embedding-side factor $\bm{A}$ both preserves
more of the ideal prompt and offers more directions to adapt it, which is why
$\bm{A}$ carries most of the useful adaptation while $\bm{B}$ can be fixed.

\subsection{Convergence Under a Fixed Factor}
\label{sec:convergence}

The asymmetry and dimension-gap results say \emph{which} factor to fix. We now give a
separate optimization guarantee that the resulting procedure is well-posed: once one
factor is fixed, gradient-based optimization of the other converges to a first-order
stationary point under directional smoothness alone, with neither convexity nor
strong convexity required. We state it for the fixed-$\bm{B}$ regime we use, but the
argument is symmetric in the two factors and yields the same guarantee for fixed
$\bm{A}$.

Let $F:\R^{m\times d}\to\R$ be the prompt objective as a function of
$\bm{P}=\bm{B}\bm{A}$, and for a fixed $\bm{B}$ define the restricted objective
$\phi_{\bm{B}}(\bm{A})=F(\bm{B}\bm{A})$.

\begin{assumption}[$\bm{A}$-directional smoothness]
\label{assump:A_directional_smoothness}
For every fixed $\bm{B}\in\R^{m\times r}$, the map $\bm{A}\mapsto\phi_{\bm{B}}(\bm{A})$
is $L_A$-smooth in the Frobenius norm:
$\norm{\nabla_{\bm{A}}\phi_{\bm{B}}(\bm{A}_1)-\nabla_{\bm{A}}\phi_{\bm{B}}(\bm{A}_2)}_F
\le L_A\norm{\bm{A}_1-\bm{A}_2}_F$ for all $\bm{A}_1,\bm{A}_2$.
\end{assumption}

\begin{assumption}[Lower boundedness]
\label{assump:lower_bounded_phi_B}
For every fixed $\bm{B}$, there exists $\phi_{\bm{B}}^{\inf}>-\infty$ with
$\phi_{\bm{B}}(\bm{A})\ge\phi_{\bm{B}}^{\inf}$ for all $\bm{A}\in\R^{r\times d}$.
\end{assumption}

\begin{theorem}[Full-gradient convergence]
\label{thm:fixed_B_gd_convergence_directional}
Under Assumptions~\ref{assump:A_directional_smoothness}
and~\ref{assump:lower_bounded_phi_B}, gradient descent
$\bm{A}_{t+1}=\bm{A}_t-\eta\nabla_{\bm{A}}\phi_{\bm{B}}(\bm{A}_t)$ with
$0<\eta\le 1/L_A$ satisfies the descent inequality
$\phi_{\bm{B}}(\bm{A}_{t+1})\le\phi_{\bm{B}}(\bm{A}_t)
-\tfrac{\eta}{2}\norm{\nabla_{\bm{A}}\phi_{\bm{B}}(\bm{A}_t)}_F^2$
and, after $T$ iterations,
\begin{equation}
\label{eq:fixed_B_gd_rate}
\min_{0\le t<T}
\norm{\nabla_{\bm{A}}\phi_{\bm{B}}(\bm{A}_t)}_F^2
\le
\frac{2\big(\phi_{\bm{B}}(\bm{A}_0)
-\phi_{\bm{B}}^{\inf}\big)}{\eta T}.
\end{equation}
Choosing $\eta=1/L_A$ gives an $O(1/T)$ rate.
\end{theorem}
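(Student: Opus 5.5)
This is the standard smooth nonconvex gradient-descent argument, applied to the restricted objective $\phi_{\bm{B}}$ on the Euclidean space $\R^{r\times d}$ with the Frobenius inner product $\langle\bm{X},\bm{Y}\rangle=\operatorname{tr}(\bm{X}^\top\bm{Y})$. Fix $\bm{B}$ throughout and write $g_t=\nabla_{\bm{A}}\phi_{\bm{B}}(\bm{A}_t)$.

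\textbf{Step 1 (descent lemma).} From Assumption~\ref{assump:A_directional_smoothness}, I will derive the quadratic upper bound
\[
\phi_{\bm{B}}(\bm{A}')\le\phi_{\bm{B}}(\bm{A})+\langle\nabla_{\bm{A}}\phi_{\bm{B}}(\bm{A}),\bm{A}'-\bm{A}\rangle+\tfrac{L_A}{2}\norm{\bm{A}'-\bm{A}}_F^2 .
\]
The route is the fundamental theorem of calculus along the segment $\bm{A}+s(\bm{A}'-\bm{A})$, $s\in[0,1]$. Subtract the linear term, then apply Cauchy--Schwarz and the Lipschitz bound inside the integral, which gives $\int_0^1 L_A s\,ds\,\norm{\bm{A}'-\bm{A}}_F^2$.

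This is the only step with any content. It needs $\phi_{\bm{B}}$ to be continuously differentiable along segments, which is implicit in the assumption that the gradient exists and is Lipschitz. I would state this explicitly.

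\textbf{Step 2 (per-step decrease).} Substitute $\bm{A}'=\bm{A}_{t+1}=\bm{A}_t-\eta g_t$ to get
\[
\phi_{\bm{B}}(\bm{A}_{t+1})\le\phi_{\bm{B}}(\bm{A}_t)-\eta\big(1-\tfrac{L_A\eta}{2}\big)\norm{g_t}_F^2 .
\]
Since $\eta\le 1/L_A$, we have $1-L_A\eta/2\ge 1/2$. This yields the stated descent inequality with constant $\eta/2$.

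\textbf{Step 3 (telescoping).} Sum the descent inequality over $t=0,\dots,T-1$ to get
\[
\tfrac{\eta}{2}\sum_{t<T}\norm{g_t}_F^2\le\phi_{\bm{B}}(\bm{A}_0)-\phi_{\bm{B}}(\bm{A}_T).
\]
By Assumption~\ref{assump:lower_bounded_phi_B}, the right-hand side is at most $\phi_{\bm{B}}(\bm{A}_0)-\phi_{\bm{B}}^{\inf}$.

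\textbf{Step 4 (min versus average).} Bound the minimum of $\norm{g_t}_F^2$ by the average $\frac1T\sum_{t<T}\norm{g_t}_F^2$. This gives~\eqref{eq:fixed_B_gd_rate}.

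\textbf{Step 5 (rate).} Plugging in $\eta=1/L_A$ gives the bound $2L_A(\phi_{\bm{B}}(\bm{A}_0)-\phi_{\bm{B}}^{\inf})/T=O(1/T)$.

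The argument never uses the structure $\bm{P}=\bm{B}\bm{A}$, so the same proof applies verbatim to fixed $\bm{A}$. There is no real obstacle beyond the regularity caveat in Step 1.
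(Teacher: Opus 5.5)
Your proposal is correct and follows the paper's proof essentially step for step: descent lemma, substitution of the update to get the factor $\eta(1-L_A\eta/2)\ge\eta/2$, telescoping with the lower bound $\phi_{\bm{B}}^{\inf}$, min-versus-average, then $\eta=1/L_A$. The only difference is that you derive the descent lemma via the fundamental theorem of calculus, which the paper simply invokes. Your regularity caveat is harmless, since a Lipschitz gradient already makes $\phi_{\bm{B}}$ continuously differentiable.
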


\begin{theorem}[Stochastic-gradient convergence]
\label{thm:fixed_B_sgd_convergence_directional}
If in addition the stochastic gradients $\bm{G}_t$ are unbiased with variance at most
$\sigma^2$, then stochastic gradient descent
$\bm{A}_{t+1}=\bm{A}_t-\eta\bm{G}_t$ with $0<\eta\le 1/L_A$ satisfies
\begin{equation}
\label{eq:fixed_B_sgd_rate}
\min_{0\le t<T}
\E\big[\norm{\nabla_{\bm{A}}\phi_{\bm{B}}(\bm{A}_t)}_F^2\big]
\le
\frac{2\big(\phi_{\bm{B}}(\bm{A}_0)
-\phi_{\bm{B}}^{\inf}\big)}{\eta T}
+ L_A\eta\sigma^2.
\end{equation}
Choosing $\eta=\Theta(T^{-1/2})$ yields an $O(T^{-1/2})$ rate.
\end{theorem}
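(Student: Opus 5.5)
The plan is the standard nonconvex SGD argument, built on the descent lemma implied by Assumption~\ref{assump:A_directional_smoothness}. Write $\bm{g}_t=\nabla_{\bm{A}}\phi_{\bm{B}}(\bm{A}_t)$ and let $\mathcal{F}_t$ be the history up to iterate $t$. We read the hypotheses as $\E[\bm{G}_t\mid\mathcal{F}_t]=\bm{g}_t$ and $\E[\norm{\bm{G}_t-\bm{g}_t}_F^2\mid\mathcal{F}_t]\le\sigma^2$.

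\textbf{Step 1 (one-step bound).} $L_A$-smoothness gives the quadratic upper bound
\[
\phi_{\bm{B}}(\bm{A}_{t+1})\le\phi_{\bm{B}}(\bm{A}_t)-\eta\langle\bm{g}_t,\bm{G}_t\rangle+\tfrac{L_A\eta^2}{2}\norm{\bm{G}_t}_F^2 .
\]
We take the conditional expectation given $\mathcal{F}_t$. Unbiasedness turns the inner product into $\norm{\bm{g}_t}_F^2$. The bias--variance decomposition gives $\E\norm{\bm{G}_t}_F^2\le\norm{\bm{g}_t}_F^2+\sigma^2$. Hence
\[
\E[\phi_{\bm{B}}(\bm{A}_{t+1})\mid\mathcal{F}_t]\le\phi_{\bm{B}}(\bm{A}_t)-\eta\big(1-\tfrac{L_A\eta}{2}\big)\norm{\bm{g}_t}_F^2+\tfrac{L_A\eta^2\sigma^2}{2}.
\]
Since $\eta\le 1/L_A$, we have $1-L_A\eta/2\ge 1/2$. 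This reproduces the deterministic descent inequality of Theorem~\ref{thm:fixed_B_gd_convergence_directional}, with an added noise term $L_A\eta^2\sigma^2/2$.

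\textbf{Step 2 (telescoping).} Take total expectations and sum from $t=0$ to $T-1$. By Assumption~\ref{assump:lower_bounded_phi_B}, $\E\phi_{\bm{B}}(\bm{A}_T)\ge\phi_{\bm{B}}^{\inf}$, so
\[
\tfrac{\eta}{2}\sum_{t<T}\E\norm{\bm{g}_t}_F^2\le\phi_{\bm{B}}(\bm{A}_0)-\phi_{\bm{B}}^{\inf}+\tfrac{TL_A\eta^2\sigma^2}{2}.
\]
Bound the minimum by the average and multiply through by $2/(\eta T)$. This yields exactly \eqref{eq:fixed_B_sgd_rate}, since the noise term becomes $L_A\eta\sigma^2$.

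\textbf{Step 3 (step-size choice).} Set $\eta=\min\{1/L_A,\,c/\sqrt{T}\}$. For $T$ large, $\eta=c/\sqrt{T}$, and both terms are $O(T^{-1/2})$, namely $2\Delta/(c\sqrt{T})+L_Ac\sigma^2/\sqrt{T}$ with $\Delta=\phi_{\bm{B}}(\bm{A}_0)-\phi_{\bm{B}}^{\inf}$. The optimal constant is $c=\sqrt{2\Delta/(L_A\sigma^2)}$.

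The argument has no substantive obstacle. The only care needed is in Step 1: the noise must be conditionally unbiased and variance-bounded given the past, because $\bm{G}_t$ and $\bm{A}_t$ are dependent. We would state that interpretation explicitly and apply the tower property before telescoping.
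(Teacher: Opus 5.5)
Your proof is correct and follows the paper's argument step for step: the descent lemma, conditional expectation using unbiasedness and the bias--variance identity, $1-L_A\eta/2\ge 1/2$, telescoping against $\phi_{\bm{B}}^{\inf}$, and bounding the minimum by the average. Two details differ only slightly from the paper and do no harm. You condition on the full history $\mathcal{F}_t$ where the paper conditions on $\bm{A}_t$ alone, and you give the explicit step size $\eta=\min\{1/L_A,c/\sqrt{T}\}$ with its optimal constant where the paper just writes $\eta=\Theta(T^{-1/2})$.
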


While the guarantee itself is neutral between the factors, the fixed-$\bm{B}$ regime
admits a clean control on its constant: when $F$ is $L$-smooth, the directional
constant is bounded by $L_A\le L\norm{\bm{B}}_2^2$, so the scale of the fixed basis
$\bm{B}$ sets the conditioning of the optimization of $\bm{A}$. An
orthonormal $\bm{B}$ ($\norm{\bm{B}}_2=1$) is the cleanest case, which motivates
norm-matching the fixed bases in our experiments.

\section{Experimental Setup}
\textbf{Datasets and backbones.}
We evaluate on seven open-source image-classification datasets from the standard
CLIP~\cite{radford2021learning} and CoOp~\cite{zhou2022learning} benchmark suite:
Caltech101~\cite{feifeicaltech101}, DTD~\cite{cimpoidtd}, FGVC
Aircraft~\cite{majifgvcaircraft}, Food101~\cite{bossardfood101}, Oxford
Flowers~\cite{nilsbackoxfordflowers}, Oxford Pets~\cite{parkhioxfordpets}, and
UCF101~\cite{soomroucf101}. Together they span object, scene, texture, and action
recognition, covering a diverse range of downstream tasks. For the image encoder we
use both the CLIP ViT-B/16 and RN50 backbones. Following CoOp, the class name is placed after the learned context tokens, and class-specific context is disabled.


\begin{table}[t]
\centering
\scriptsize
\setlength{\tabcolsep}{2.6pt}
\renewcommand{\arraystretch}{1.12}
\caption{Base-to-new generalization: factorized CoOp ($r{=}4$) vs.\ dense CoOp.
Means over three seeds; subscripts are per-seed standard deviations.
\textbf{S}/\textbf{U}/\textbf{H} denote seen, unseen, and their harmonic mean, and
bold marks the better mean in each column. Average rows omit $\pm$, since the spread
there reflects between-dataset heterogeneity rather than run-to-run variance.}
\label{tab:base2new}
\begin{adjustbox}{max width=\linewidth}
\begin{tabular}{ll *{9}{c}}
\toprule
 & & \multicolumn{3}{c}{1-shot} & \multicolumn{3}{c}{4-shot} & \multicolumn{3}{c}{16-shot} \\
\cmidrule(lr){3-5}\cmidrule(lr){6-8}\cmidrule(lr){9-11}
Dataset & Method & S & U & H & S & U & H & S & U & H \\
\midrule
\multicolumn{11}{l}{\textit{RN50}}\\
\midrule
\multirow{2}{*}{Caltech101} & Dense & 91.33$_{\pm0.81}$ & 86.27$_{\pm0.87}$ & 88.73$_{\pm0.83}$ & 93.57$_{\pm0.15}$ & 83.70$_{\pm0.85}$ & 88.36$_{\pm0.50}$ & 95.37$_{\pm0.15}$ & \textbf{84.83}$_{\pm3.97}$ & \textbf{89.76}$_{\pm2.20}$ \\
 & Fact. ($r{=}4$) & \textbf{91.70}$_{\pm1.21}$ & \textbf{89.33}$_{\pm0.86}$ & \textbf{90.50}$_{\pm0.68}$ & \textbf{93.80}$_{\pm0.85}$ & \textbf{87.60}$_{\pm2.16}$ & \textbf{90.58}$_{\pm1.16}$ & \textbf{95.40}$_{\pm0.52}$ & 84.10$_{\pm4.26}$ & 89.35$_{\pm2.14}$ \\
\addlinespace[1pt]
\multirow{2}{*}{DTD} & Dense & \textbf{51.00}$_{\pm2.26}$ & 44.33$_{\pm4.27}$ & \textbf{47.31}$_{\pm2.22}$ & \textbf{65.50}$_{\pm0.69}$ & \textbf{42.93}$_{\pm4.23}$ & \textbf{51.80}$_{\pm3.13}$ & 74.27$_{\pm0.57}$ & 34.77$_{\pm1.94}$ & 47.34$_{\pm1.75}$ \\
 & Fact. ($r{=}4$) & 49.10$_{\pm3.02}$ & \textbf{44.77}$_{\pm4.27}$ & 46.70$_{\pm2.16}$ & 65.23$_{\pm1.03}$ & 42.90$_{\pm2.14}$ & 51.74$_{\pm1.61}$ & \textbf{75.33}$_{\pm0.57}$ & \textbf{36.03}$_{\pm2.25}$ & \textbf{48.72}$_{\pm2.10}$ \\
\addlinespace[1pt]
\multirow{2}{*}{FGVC Aircraft} & Dense & 9.53$_{\pm7.07}$ & 5.90$_{\pm6.85}$ & 7.04$_{\pm7.25}$ & 21.40$_{\pm1.84}$ & 9.03$_{\pm4.50}$ & 12.42$_{\pm4.66}$ & \textbf{31.40}$_{\pm0.46}$ & \textbf{12.73}$_{\pm1.53}$ & \textbf{18.08}$_{\pm1.51}$ \\
 & Fact. ($r{=}4$) & \textbf{18.87}$_{\pm0.81}$ & \textbf{18.57}$_{\pm0.97}$ & \textbf{18.71}$_{\pm0.84}$ & \textbf{22.13}$_{\pm1.10}$ & \textbf{19.17}$_{\pm1.36}$ & \textbf{20.51}$_{\pm0.80}$ & 28.30$_{\pm1.14}$ & 11.53$_{\pm3.61}$ & 16.20$_{\pm3.58}$ \\
\addlinespace[1pt]
\multirow{2}{*}{Food101} & Dense & 71.73$_{\pm0.31}$ & 74.30$_{\pm2.21}$ & 72.98$_{\pm1.07}$ & 74.67$_{\pm2.07}$ & 73.10$_{\pm2.71}$ & 73.87$_{\pm2.38}$ & 79.57$_{\pm0.71}$ & 74.53$_{\pm1.57}$ & 76.97$_{\pm1.14}$ \\
 & Fact. ($r{=}4$) & \textbf{77.87}$_{\pm1.17}$ & \textbf{77.33}$_{\pm4.42}$ & \textbf{77.58}$_{\pm2.79}$ & \textbf{76.30}$_{\pm1.01}$ & \textbf{74.67}$_{\pm0.84}$ & \textbf{75.47}$_{\pm0.86}$ & \textbf{81.70}$_{\pm0.66}$ & \textbf{79.67}$_{\pm0.55}$ & \textbf{80.67}$_{\pm0.22}$ \\
\addlinespace[1pt]
\multirow{2}{*}{Oxford Flowers} & Dense & 75.37$_{\pm2.14}$ & 62.90$_{\pm4.26}$ & 68.55$_{\pm3.29}$ & 88.60$_{\pm1.48}$ & 54.07$_{\pm2.20}$ & 67.12$_{\pm1.43}$ & \textbf{96.30}$_{\pm0.46}$ & 54.17$_{\pm2.97}$ & 69.30$_{\pm2.45}$ \\
 & Fact. ($r{=}4$) & \textbf{77.43}$_{\pm1.59}$ & \textbf{65.67}$_{\pm0.97}$ & \textbf{71.06}$_{\pm0.81}$ & \textbf{88.90}$_{\pm1.40}$ & \textbf{61.20}$_{\pm1.65}$ & \textbf{72.49}$_{\pm1.46}$ & 95.73$_{\pm0.21}$ & \textbf{59.70}$_{\pm1.42}$ & \textbf{73.53}$_{\pm1.14}$ \\
\addlinespace[1pt]
\multirow{2}{*}{Oxford Pets} & Dense & 80.03$_{\pm3.69}$ & 91.00$_{\pm3.50}$ & 85.09$_{\pm1.91}$ & 87.20$_{\pm1.57}$ & 88.40$_{\pm3.60}$ & 87.76$_{\pm1.83}$ & 89.20$_{\pm0.30}$ & 85.03$_{\pm6.51}$ & 86.99$_{\pm3.51}$ \\
 & Fact. ($r{=}4$) & \textbf{84.87}$_{\pm2.87}$ & \textbf{93.53}$_{\pm0.95}$ & \textbf{88.97}$_{\pm1.77}$ & \textbf{88.03}$_{\pm4.22}$ & \textbf{93.37}$_{\pm0.32}$ & \textbf{90.59}$_{\pm2.22}$ & \textbf{89.83}$_{\pm0.81}$ & \textbf{92.83}$_{\pm1.66}$ & \textbf{91.30}$_{\pm0.95}$ \\
\addlinespace[1pt]
\multirow{2}{*}{UCF101} & Dense & 64.00$_{\pm3.97}$ & 56.97$_{\pm2.91}$ & 60.25$_{\pm2.94}$ & \textbf{72.93}$_{\pm1.50}$ & 52.50$_{\pm4.11}$ & 60.98$_{\pm2.71}$ & \textbf{80.43}$_{\pm0.64}$ & 46.27$_{\pm2.74}$ & 58.72$_{\pm2.30}$ \\
 & Fact. ($r{=}4$) & \textbf{68.83}$_{\pm2.11}$ & \textbf{63.20}$_{\pm2.78}$ & \textbf{65.89}$_{\pm2.46}$ & 72.90$_{\pm2.07}$ & \textbf{54.00}$_{\pm5.01}$ & \textbf{62.00}$_{\pm4.08}$ & 80.23$_{\pm1.10}$ & \textbf{50.33}$_{\pm5.80}$ & \textbf{61.72}$_{\pm4.22}$ \\
\cmidrule(lr){2-11}
\multirow{2}{*}{\textit{Average}} & Dense & 63.28 & 60.24 & 61.42 & 71.98 & 57.68 & 63.19 & \textbf{78.08} & 56.05 & 63.88 \\
 & Fact. ($r{=}4$) & \textbf{66.95} & \textbf{64.63} & \textbf{65.63} & \textbf{72.47} & \textbf{61.84} & \textbf{66.20} & 78.07 & \textbf{59.17} & \textbf{65.93} \\
\midrule
\multicolumn{11}{l}{\textit{ViT-B/16}}\\
\midrule
\multirow{2}{*}{Caltech101} & Dense & 96.03$_{\pm0.42}$ & 92.43$_{\pm0.38}$ & 94.20$_{\pm0.37}$ & 96.90$_{\pm0.20}$ & 90.67$_{\pm3.00}$ & 93.66$_{\pm1.54}$ & 97.93$_{\pm0.40}$ & 89.27$_{\pm3.23}$ & 93.38$_{\pm1.69}$ \\
 & Fact. ($r{=}4$) & \textbf{96.87}$_{\pm0.57}$ & \textbf{92.83}$_{\pm1.46}$ & \textbf{94.81}$_{\pm1.01}$ & \textbf{97.40}$_{\pm0.36}$ & \textbf{93.60}$_{\pm1.31}$ & \textbf{95.46}$_{\pm0.73}$ & \textbf{98.00}$_{\pm0.17}$ & \textbf{89.83}$_{\pm0.93}$ & \textbf{93.74}$_{\pm0.49}$ \\
\addlinespace[1pt]
\multirow{2}{*}{DTD} & Dense & 58.73$_{\pm3.26}$ & 46.60$_{\pm4.01}$ & 51.80$_{\pm1.47}$ & \textbf{71.27}$_{\pm1.00}$ & 47.33$_{\pm3.40}$ & 56.85$_{\pm2.78}$ & \textbf{80.97}$_{\pm1.87}$ & 44.63$_{\pm7.08}$ & 57.39$_{\pm6.23}$ \\
 & Fact. ($r{=}4$) & \textbf{59.50}$_{\pm3.72}$ & \textbf{53.33}$_{\pm3.98}$ & \textbf{56.08}$_{\pm0.84}$ & 71.00$_{\pm1.51}$ & \textbf{49.87}$_{\pm3.31}$ & \textbf{58.57}$_{\pm2.80}$ & 80.50$_{\pm0.36}$ & \textbf{47.13}$_{\pm3.47}$ & \textbf{59.41}$_{\pm2.73}$ \\
\addlinespace[1pt]
\multirow{2}{*}{FGVC Aircraft} & Dense & 26.97$_{\pm0.55}$ & 23.67$_{\pm5.70}$ & 24.95$_{\pm3.51}$ & 31.27$_{\pm1.81}$ & 15.00$_{\pm9.27}$ & 19.20$_{\pm9.75}$ & \textbf{41.97}$_{\pm1.27}$ & 22.50$_{\pm1.87}$ & 29.26$_{\pm1.64}$ \\
 & Fact. ($r{=}4$) & \textbf{29.47}$_{\pm1.10}$ & \textbf{27.97}$_{\pm0.59}$ & \textbf{28.69}$_{\pm0.79}$ & \textbf{32.27}$_{\pm0.65}$ & \textbf{27.47}$_{\pm0.35}$ & \textbf{29.67}$_{\pm0.48}$ & 40.53$_{\pm0.86}$ & \textbf{25.10}$_{\pm4.61}$ & \textbf{30.80}$_{\pm3.12}$ \\
\addlinespace[1pt]
\multirow{2}{*}{Food101} & Dense & 84.90$_{\pm0.53}$ & 87.00$_{\pm1.51}$ & 85.93$_{\pm0.76}$ & 86.17$_{\pm0.42}$ & 85.13$_{\pm1.74}$ & 85.64$_{\pm0.67}$ & 87.90$_{\pm0.17}$ & 83.63$_{\pm3.02}$ & 85.70$_{\pm1.62}$ \\
 & Fact. ($r{=}4$) & \textbf{87.67}$_{\pm1.25}$ & \textbf{87.63}$_{\pm2.39}$ & \textbf{87.65}$_{\pm1.74}$ & \textbf{87.63}$_{\pm0.81}$ & \textbf{87.23}$_{\pm2.11}$ & \textbf{87.43}$_{\pm1.46}$ & \textbf{89.37}$_{\pm0.38}$ & \textbf{87.13}$_{\pm1.48}$ & \textbf{88.23}$_{\pm0.84}$ \\
\addlinespace[1pt]
\multirow{2}{*}{Oxford Flowers} & Dense & 83.27$_{\pm2.99}$ & 64.97$_{\pm1.26}$ & 72.98$_{\pm1.77}$ & \textbf{92.10}$_{\pm1.49}$ & 60.63$_{\pm3.43}$ & 73.08$_{\pm2.37}$ & \textbf{97.70}$_{\pm0.53}$ & 56.97$_{\pm3.87}$ & 71.92$_{\pm3.09}$ \\
 & Fact. ($r{=}4$) & \textbf{83.70}$_{\pm3.30}$ & \textbf{72.33}$_{\pm2.10}$ & \textbf{77.54}$_{\pm0.42}$ & 91.23$_{\pm1.39}$ & \textbf{66.17}$_{\pm2.65}$ & \textbf{76.70}$_{\pm2.26}$ & 97.47$_{\pm0.50}$ & \textbf{61.53}$_{\pm1.86}$ & \textbf{75.43}$_{\pm1.54}$ \\
\addlinespace[1pt]
\multirow{2}{*}{Oxford Pets} & Dense & 89.13$_{\pm2.03}$ & 94.23$_{\pm2.78}$ & 91.57$_{\pm0.23}$ & 91.10$_{\pm2.95}$ & 92.73$_{\pm4.45}$ & 91.87$_{\pm2.96}$ & 93.93$_{\pm0.45}$ & 92.40$_{\pm3.90}$ & 93.13$_{\pm1.76}$ \\
 & Fact. ($r{=}4$) & \textbf{93.50}$_{\pm0.66}$ & \textbf{97.50}$_{\pm0.17}$ & \textbf{95.46}$_{\pm0.36}$ & \textbf{93.33}$_{\pm1.31}$ & \textbf{96.43}$_{\pm0.96}$ & \textbf{94.86}$_{\pm1.11}$ & \textbf{94.00}$_{\pm0.66}$ & \textbf{96.17}$_{\pm0.12}$ & \textbf{95.07}$_{\pm0.39}$ \\
\addlinespace[1pt]
\multirow{2}{*}{UCF101} & Dense & 71.83$_{\pm2.28}$ & 61.40$_{\pm4.05}$ & 66.12$_{\pm1.89}$ & \textbf{79.57}$_{\pm0.83}$ & 52.93$_{\pm2.08}$ & 63.55$_{\pm1.36}$ & \textbf{84.60}$_{\pm1.04}$ & 53.83$_{\pm8.78}$ & 65.50$_{\pm6.21}$ \\
 & Fact. ($r{=}4$) & \textbf{74.77}$_{\pm2.50}$ & \textbf{70.17}$_{\pm1.85}$ & \textbf{72.38}$_{\pm1.83}$ & 79.20$_{\pm1.61}$ & \textbf{64.23}$_{\pm2.10}$ & \textbf{70.93}$_{\pm1.66}$ & 84.57$_{\pm0.71}$ & 53.83$_{\pm1.70}$ & \textbf{65.77}$_{\pm1.21}$ \\
\cmidrule(lr){2-11}
\multirow{2}{*}{\textit{Average}} & Dense & 72.98 & 67.19 & 69.65 & 78.34 & 63.49 & 69.12 & \textbf{83.57} & 63.32 & 70.90 \\
 & Fact. ($r{=}4$) & \textbf{75.07} & \textbf{71.68} & \textbf{73.23} & \textbf{78.87} & \textbf{69.29} & \textbf{73.37} & 83.49 & \textbf{65.82} & \textbf{72.64} \\
\bottomrule
\end{tabular}
\end{adjustbox}
\end{table}

\textbf{Training details.}
We follow the standard CoOp training protocol~\cite{zhou2022learning}: the CLIP image
and text encoders are frozen and only the prompt parameters are optimized, using a
unified context of length $m=16$ shared across all classes. All variants share the
same optimizer, learning-rate schedule, data augmentation, batch size, and
train/validation/test splits as the dense CoOp baseline, so that differences in
accuracy reflect the parameterization rather than the training pipeline. We run the
1-, 4-, and 16-shot settings with three random seeds and report the mean and sample
standard deviation. Dense CoOp optimizes the full context matrix
$\bm{P}\in\R^{m\times d}$, while the factorized models parameterize it as
$\bm{P}=\bm{B}\bm{A}$ with ranks $r\in\{1,2,4,8\}$. The jointly trained factorization
optimizes both factors; the fixed-basis variants freeze $\bm{B}$ (Gaussian,
orthogonal, SVD-derived, or learned-then-frozen) and train only $\bm{A}$; and the
transfer variants load one factor from a source dataset and freeze it while training
the other on the target.

\paragraph{Initialization.}
Dense prompts use Gaussian initialization.
Joint factorization uses a balanced truncated SVD of
a sampled dense prompt. Fixed-basis ablations initialize
the trainable factor by least-squares projection onto
the selected basis, whereas transfer experiments retain
the target initialization of the trainable factor after
replacing the frozen factor. These procedures do not
generally produce identical initial prompt products.
Appendix~\ref{app:protocol} provides the implementation
details and the limitations of the recovered
initialization records.

\section{Experimental Results}
\begin{table}[t]
\centering
\scriptsize
\setlength{\tabcolsep}{3pt}
\caption{Fixing the token-side basis $\bm{B}$ matches training it across ranks and
shots. Mean accuracy (\%) over seven datasets; fixed-$\bm{B}$ variants train only
$\bm{A}$. Best per row within each shot in \textbf{bold}.}
\label{tab:fixedB}
\begin{adjustbox}{max width=\linewidth}
\begin{tabular}{ll ccccc ccccc ccccc}
\toprule
 & & \multicolumn{5}{c}{1-shot} & \multicolumn{5}{c}{4-shot} & \multicolumn{5}{c}{16-shot} \\
\cmidrule(lr){3-7}\cmidrule(lr){8-12}\cmidrule(lr){13-17}
Backbone & Rank
 & Train.\ & Gauss.\ & Orth.\ & SVD & Learn.\
 & Train.\ & Gauss.\ & Orth.\ & SVD & Learn.\
 & Train.\ & Gauss.\ & Orth.\ & SVD & Learn.\ \\
\midrule
\multirow{4}{*}{ViT-B/16}
 & 1 & 69.86 & 69.58 & 69.58 & \textbf{70.00} & 69.10 & 73.13 & \textbf{73.75} & \textbf{73.75} & 73.29 & 72.67 & 77.75 & \textbf{77.88} & \textbf{77.88} & 77.48 & 77.35 \\
 & 2 & 69.88 & 69.85 & 70.01 & 69.80 & \textbf{70.38} & \textbf{74.94} & 74.18 & 74.40 & 74.74 & 73.60 & \textbf{79.43} & 79.29 & 78.91 & 79.31 & 79.09 \\
 & 4 & \textbf{70.06} & 69.77 & 69.42 & 70.03 & 69.90 & 75.32 & 75.12 & 75.23 & \textbf{75.68} & 75.49 & 80.08 & 80.01 & 80.06 & 80.05 & \textbf{80.22} \\
 & 8 & 69.64 & 69.26 & 69.09 & \textbf{70.28} & 69.34 & 75.01 & \textbf{75.88} & 75.33 & 75.46 & 75.59 & 80.46 & 80.49 & 80.33 & 80.44 & \textbf{80.51} \\
\midrule
\multirow{4}{*}{RN50}
 & 1 & 62.48 & 62.84 & 62.82 & \textbf{62.98} & 62.30 & 66.66 & \textbf{66.91} & 66.88 & 66.50 & 66.56 & \textbf{70.75} & 70.19 & 70.35 & 70.46 & 70.45 \\
 & 2 & 61.06 & \textbf{61.73} & 61.71 & 61.17 & 61.49 & 65.74 & 66.86 & \textbf{67.07} & 66.18 & 66.34 & \textbf{72.65} & 72.45 & 72.41 & 72.19 & 72.55 \\
 & 4 & 60.50 & \textbf{60.81} & 60.30 & 60.64 & 60.20 & 67.42 & \textbf{67.84} & 67.62 & 67.55 & 67.66 & \textbf{73.66} & 73.29 & 73.43 & 73.55 & 73.60 \\
 & 8 & 59.68 & \textbf{61.10} & 59.84 & 59.94 & 59.73 & 67.61 & \textbf{68.11} & 67.52 & 67.56 & 67.26 & 73.86 & 73.79 & \textbf{74.08} & 73.90 & 73.79 \\
\bottomrule
\end{tabular}
\end{adjustbox}
\end{table}
\begin{table}[t]
\centering
\small
\setlength{\tabcolsep}{4pt}
\caption{Low-rank factorization matches dense CoOp at a fraction of the parameters. Mean
accuracy (\%) over seven datasets and three seeds. Header counts are trainable
prompt parameters; $\Delta$ is factorized minus Dense-4. }
\label{tab:headline}
\begin{adjustbox}{max width=\linewidth}
\begin{tabular}{llcccc}
\toprule
\multirow{2}{*}{Backbone} & \multirow{2}{*}{Shots}
  & Dense-16 & Dense-4 & Fact.\ $r{=}4$ & \multirow{2}{*}{$\Delta$} \\
  & & {\scriptsize (8{,}192)} & {\scriptsize (2{,}048)} & {\scriptsize (2{,}112)} & \\
\midrule
\multirow{3}{*}{ViT-B/16}
  & 1  & 68.22 & 68.14 & \textbf{70.07} & $+1.93$ \\
  & 4  & 75.27 & 75.10 & \textbf{75.32} & $+0.22$ \\
  & 16 & \textbf{80.56} & 80.01 & 80.08 & $+0.07$ \\
\midrule
\multirow{3}{*}{RN50}
  & 1  & 58.50 & 58.51 & \textbf{60.50} & $+1.99$ \\
  & 4  & 67.04 & \textbf{67.56} & 67.42 & $-0.14$ \\
  & 16 & \textbf{73.91} & 73.61 & 73.66 & $+0.05$ \\
\bottomrule
\end{tabular}
\end{adjustbox}
\end{table}


\vspace{-0.4cm}
\textbf{Generalization to New Classes.}
We first ask whether constraining the prompt to be low rank harms the open-vocabulary
behavior that makes CLIP attractive in the first place. Following the standard
base-to-new protocol, we train on the seen classes and evaluate on both the seen and
held-out unseen classes, reporting their harmonic mean~$H$.
Table~\ref{tab:base2new} compares dense CoOp with rank-4 factorized CoOp across all
seven datasets and three shot settings. Despite using only a fraction of dense CoOp's
$md$ trainable prompt parameters, the factorized prompt improves $H$ in every one of
the 21 dataset--shot cells on ViT-B/16 and in 17 of 21 on RN50. The $H$ gain,
averaged over datasets, is largest in the low-shot regime: $+3.58$, $+4.25$, and
$+1.74$ points at 1, 4, and 16 shots on ViT-B/16, and $+4.21$, $+3.01$, and $+2.05$
on RN50. This is consistent with the low-rank constraint acting as a regularizer that
helps most when supervision is scarce and matters less as more data becomes
available. The gains concentrate on the unseen classes but do not come at the expense
of seen accuracy: factorized prompts also match or improve the seen split in most
cells (15 of 21 on RN50, 14 of 21 on ViT-B/16). Dense CoOp regains a seen-accuracy
edge mainly at higher shots on a few datasets such as DTD and Oxford Flowers, and
even there its weaker unseen accuracy leaves factorized ahead on~$H$, consistent with
the dense prompt mildly overfitting the seen classes once supervision becomes ample.


\textbf{Few-Shot Classification Accuracy.}
Beyond generalization, we verify that the low-rank prompt is competitive on standard
few-shot accuracy. Table~\ref{tab:headline} reports mean accuracy across the seven
datasets for the rank-4 factorization against two dense baselines: the full 16-token
CoOp prompt (Dense-16) and a 4-token prompt (Dense-4) whose $2{,}048$ trainable
parameters nearly match the factorization's $2{,}112$. Against this parameter-matched
control, the factorization wins clearly at 1 shot ($+1.93$ on ViT-B/16, $+1.99$ on
RN50) and is statistically indistinguishable at 4 and 16 shots, where the three
methods fall within seed variance. The comparison against the approximately four-times-larger
Dense-16 prompt tells the same story from the other side: the factorized prompt
exceeds it at 1 shot (by $+1.85$ and $+2.00$ points) and matches it to within half a
point at 4 and 16 shots, all while using roughly a quarter of its parameters. The
full rank sweep (Table~\ref{tab:factorized_coop_results}
in Appendix~\ref{app:results}) shows
this behavior is consistent across ranks: accuracy is already near-saturated at
$r{=}2$ and changes little up to $r{=}8$, so a small rank suffices. Together with the
generalization gains reported above, this few-shot parity indicates
that the dense prompt matrix is over-parameterized for few-shot adaptation, and
motivates the question of the next section: if so few effective directions are
needed, must the token-side factor $\bm{B}$ be learned at all?



\textbf{Fixing the Token-Side Basis.}
Section~\ref{sec:asymmetry} predicts that the token-side factor $\bm{B}$ is far less
critical than the embedding-side factor $\bm{A}$. We test this directly by freezing
$\bm{B}$ to a non-trainable basis and optimizing only $\bm{A}$, which reduces the
trainable prompt budget to $rd$. We compare four fixed bases (Gaussian, orthogonal,
SVD-derived, and learned-then-frozen) against the fully trainable factorization.
Table~\ref{tab:fixedB} reports mean accuracy across the seven datasets for every rank
and shot. Across ranks and shots, each fixed-basis variant stays close to the
trainable reference, usually within a few tenths of a point and within roughly $1.5$
points in the worst case. A non-trainable basis is in fact the best of the five
variants in $18$ of the $24$ rank--shot--backbone settings, so fixing $\bm{B}$ is not
merely tolerable but often preferable; the random Gaussian basis is among the most
frequent winners, while the learned-then-frozen basis is the weakest of the four.
Fixing $\bm{B}$ thus costs nothing relative to learning it, consistent with the
asymmetry analysis: the $rd$ update directions available when training $\bm{A}$
already span the useful adaptation, so the additional $mr$ directions from training
$\bm{B}$ contribute little. The embedding-side coefficients $\bm{A}$ are therefore the
only factor that must be learned.

\begin{table}[t]
\centering
\small
\setlength{\tabcolsep}{5pt}
\caption{Per-pair transfer (ViT-B/16, rank 4, 16-shot; mean over three seeds). One
factor is trained while the other is frozen, with the frozen factor taken from the
source or random. References: dense CoOp and a from-scratch rank-4 factorization on
the target. Best transfer mode per row in \textbf{bold}.}
\label{tab:transfer_pairs}
\begin{adjustbox}{max width=\linewidth}
\begin{tabular}{ll cc cc cc}
\toprule
 & & \multicolumn{2}{c}{Train $A$ (freeze $B$)} & \multicolumn{2}{c}{Train $B$ (freeze $A$)} & \multicolumn{2}{c}{Reference} \\
\cmidrule(lr){3-4}\cmidrule(lr){5-6}\cmidrule(lr){7-8}
Target & Source & Src $B$ & Rand $B$ & Src $A$ & Rand $A$ & Dense & Fact. \\
\midrule
\multirow{2}{*}{Caltech101} & Oxford Pets & \textbf{95.52} & 95.51 & 94.12 & 92.51 & 95.47 & 95.51 \\
 & UCF101 & \textbf{95.52} & 95.51 & 93.38 & 92.51 & 95.47 & 95.51 \\
DTD & Oxford Flowers & \textbf{68.83} & 68.51 & 46.53 & 47.61 & 69.10 & 68.26 \\
\multirow{2}{*}{Food101} & Oxford Flowers & 85.50 & 85.80 & 86.09 & \textbf{86.44} & 85.10 & 85.78 \\
 & UCF101 & 85.82 & 85.80 & \textbf{86.47} & 86.44 & 85.10 & 85.78 \\
\multirow{3}{*}{Oxford Flowers} & DTD & \textbf{96.03} & 95.96 & 70.74 & 70.12 & 97.07 & 96.20 \\
 & Food101 & \textbf{96.15} & 95.96 & 70.76 & 70.12 & 97.07 & 96.20 \\
 & Oxford Pets & \textbf{96.07} & 95.96 & 70.22 & 70.12 & 97.07 & 96.20 \\
\multirow{2}{*}{Oxford Pets} & Oxford Flowers & \textbf{92.72} & 92.53 & 91.53 & 91.48 & 92.33 & 92.48 \\
 & Caltech101 & 92.39 & \textbf{92.53} & 91.55 & 91.48 & 92.33 & 92.48 \\
\multirow{2}{*}{UCF101} & Caltech101 & \textbf{81.88} & 81.56 & 69.69 & 69.20 & 82.17 & 81.86 \\
 & Food101 & \textbf{81.66} & 81.56 & 70.56 & 69.20 & 82.17 & 81.86 \\
\midrule
\multicolumn{2}{l}{\textit{Average}} & \textbf{89.01} & 88.93 & 78.47 & 78.10 & 89.20 & 89.01 \\
\bottomrule
\end{tabular}
\end{adjustbox}
\end{table}

\textbf{Transfer Controls.}
A natural reading of the previous section is that some token-side bases are
intrinsically better than others. We rule this out with a transfer study that
disentangles which factor is trained from what the frozen factor contains. For each
ordered source--target pair we either freeze $\bm{B}$ and train $\bm{A}$, or freeze
$\bm{A}$ and train $\bm{B}$, and in each case the frozen factor is either carried over
from a model trained on the source dataset or drawn at random.
Table~\ref{tab:transfer_pairs} reports all four modes for every pair, together with
two references: dense CoOp and a from-scratch rank-4 factorization trained directly on
the target.

Three findings stand out. First, training $\bm{A}$ dwarfs training $\bm{B}$: freezing
$\bm{B}$ and adapting $\bm{A}$ averages about $89\%$, whereas freezing $\bm{A}$ and
adapting $\bm{B}$ reaches only about $78\%$. This ten-point gap mirrors the
prompt-factor asymmetry of Section~\ref{sec:asymmetry} and holds on every target
except the easy, large-scale Food101, where all four modes lie within a point.
Second, the content of the frozen $\bm{B}$ is nearly irrelevant: the
source-initialized and random train-$\bm{A}$ columns differ by at most about $0.3$
points on any pair and by less than $0.1$ on average, and a given target reaches
essentially the same train-$\bm{A}$ accuracy from any source. Oxford Flowers attains
$96.0$ to $96.2\%$ whether transferred from DTD, Food101, or Oxford Pets, and
Caltech101 is identical from either source. A source-trained $\bm{B}$ thus carries no
reusable target-specific content beyond what a random basis already supplies. Third,
the references show that freezing $\bm{B}$ costs nothing: the
factorization that trains both factors averages $89.01\%$, identical to
freeze-$\bm{B}$/train-$\bm{A}$, so learning $\bm{B}$ adds no value over holding it
fixed. Both lie within about $0.2$ points of dense CoOp ($89.2\%$) while training only
$\bm{A}$, and match or exceed dense on Caltech101, Food101, and Oxford Pets. What
transfers is therefore not $\bm{B}$ but the ability to optimize $\bm{A}$ over it:
$\bm{B}$ functions as a fixed token-side basis and $\bm{A}$ carries the adaptation,
the transfer-level counterpart of the fixed-basis result above.


 \begin{table}[t]
\centering
\small
\setlength{\tabcolsep}{5pt}
\caption{Factor geometry across independently trained runs. Median subspace overlap
(higher is more aligned) and mean principal angle in degrees (lower is more aligned)
for the token-side factor $\bm{B}$, embedding-side factor $\bm{A}$, and induced
prompt $\bm{B}\bm{A}$, under four controlled comparisons. The most stable factor in
each condition is in \textbf{bold}.}
\label{tab:factor_geometry}
\begin{adjustbox}{max width=\linewidth}
\begin{tabular}{l ccc ccc}
\toprule
 & \multicolumn{3}{c}{Subspace overlap} & \multicolumn{3}{c}{Principal angle ($^\circ$)} \\
\cmidrule(lr){2-4}\cmidrule(lr){5-7}
Comparison & $A$ & $B$ & $BA$ & $A$ & $B$ & $BA$ \\
\midrule
Same config, diff.\ seeds  & 0.10 & \textbf{0.41} & 0.15 & 84.1 & \textbf{65.3} & 81.1 \\
Diff.\ datasets, same seed & 0.22 & \textbf{0.86} & 0.17 & 77.2 & \textbf{29.7} & 80.0 \\
Diff.\ shots, same seed    & 0.35 & \textbf{0.88} & 0.20 & 69.6 & \textbf{26.8} & 78.5 \\
Diff.\ ranks, same seed    & 0.29 & \textbf{0.89} & 0.17 & 73.4 & \textbf{27.1} & 80.3 \\
\bottomrule
\end{tabular}
\end{adjustbox}
\end{table}

\textbf{Factor Geometry.}
We now analyze in detail the factor geometry previewed in the introduction
(Fig.~\ref{fig:factor_geometry}). For every pair of independently trained runs that
differs along exactly one controlled axis (a different seed, dataset, shot count, or
rank, with the remaining axes held fixed), we measure how aligned their factors are
using the mean cosine of principal angles,
\begin{equation}
\label{eq:cosine_angle}
\operatorname{overlap}(\bm{U}_1,\bm{U}_2)
=\frac{1}{k}\sum_{i=1}^{k}
\sigma_i(\bm{U}_1^\top\bm{U}_2),
\end{equation}
where $\bm{U}_1$ and $\bm{U}_2$ are orthonormal bases
and $k$ is the smaller retained subspace rank.
We also compute the mean principal angle in degrees.
Appendix~\ref{app:protocol} describes the implementation
and the limitations of these diagnostics.

The token-side factor $\bm{B}$ is far more stable than $\bm{A}$ or $\bm{B}\bm{A}$.
Across different datasets, shot counts, and ranks, independently trained $\bm{B}$
subspaces overlap by $0.86$ to $0.89$ on average (principal angles near
$27$--$30^\circ$), whereas $\bm{A}$ and $\bm{B}\bm{A}$ overlap by only $0.17$ to
$0.35$ (angles near $70$--$80^\circ$). The embedding-side factor and the prompt it
induces therefore move substantially when the task changes, while the token basis
barely does. The seed comparison explains why: when only the seed varies, $\bm{B}$'s
overlap drops to $0.41$, lower than across any other axis. In other words, $\bm{B}$ is
more alike across different datasets sharing a seed than across different seeds on the
same dataset, which means its final position is governed mainly by its initialization
rather than by the data; training moves $\bm{B}$ very little. This is precisely why a
random fixed $\bm{B}$ suffices in the fixed-basis experiment above, and why a
source-trained $\bm{B}$ offers no advantage over a random one in the transfer
controls: $\bm{B}$ never needs to encode anything task-specific, because it scarcely
departs from where it starts. The geometry thus supports a fixed-basis reading rather
than a semantic one. $\bm{B}$ provides a reusable low-dimensional token-side basis,
and the embedding-side coefficients $\bm{A}$ carry the target adaptation.

\section{Conclusion}
We studied whether the dense CoOp prompt is over-parameterized for few-shot
adaptation by factorizing it as $\bm{P}=\bm{B}\bm{A}$ and asking which factor must be
learned. Low-rank prompts match or improve dense CoOp at a fraction of the
parameters, and across fixed-basis ablations and transfer controls the token-side
factor $\bm{B}$ can be frozen, even to a random basis, with no loss relative to
training it. A prompt-factor asymmetry and an update-space dimension gap explain why:
the embedding-side coefficients $\bm{A}$ carry the adaptation, while the token basis
serves only as a fixed low-dimensional subspace. CLIP prompt learning thus reduces to
optimizing $\bm{A}$ over a fixed $\bm{B}$. A limitation is that our analysis targets
the CoOp-style text-only prompt; extending the fixed-basis view to deep or multi-modal
prompts, and to backbones beyond CLIP, is a natural direction for future work.

\begin{ack}
Computations for this study ran on the Pod cluster at the
UCSB Center for Scientific Computing (CSC). We acknowledge
the CSC's supporting institutions, the California NanoSystems
Institute and the UCSB Materials Research Science and
Engineering Center (NSF DMR-2308708), together with NSF
infrastructure awards CNS-1725797 and OAC-1925717.
\end{ack}

\clearpage
\bibliographystyle{ieeenat_fullname}
\bibliography{main}

\clearpage
\appendix
\section{Theoretical details and proofs}
\label{app:theory}

\subsection{A Linear Surrogate for Prompt-Factor Asymmetry}

We first analyze a simplified prompt-space approximation problem. Let
\(\bm{P}^\star\in\mathbb{R}^{m\times d}\) denote an ideal prompt matrix, where
\(m=n_{\rm ctx}\) is the number of context tokens and \(d\) is the CLIP text
embedding dimension. A rank-\(r\) factorized prompt is parameterized as
\begin{equation}
\label{eq:app-linear-01}
    \bm{P} = \bm{B}\bm{A}, \qquad
    \bm{B}\in\mathbb{R}^{m\times r},\quad
    \bm{A}\in\mathbb{R}^{r\times d}.
\end{equation}
We compare two one-sided adaptation modes: fixing \(\bm{B}\) and optimizing \(\bm{A}\),
or fixing \(\bm{A}\) and optimizing \(\bm{B}\).

\begin{assumption}[Linear prompt-space surrogate]
\label{assump:linear-surrogate}
There exists an ideal prompt matrix \(\bm{P}^\star\in\mathbb{R}^{m\times d}\)
such that local prompt adaptation can be approximated by the least-squares
problem
\begin{equation}
\label{eq:app-linear-02}
    \min_{\bm{A},\bm{B}}\|\bm{P}^\star-\bm{B}\bm{A}\|_F^2.
\end{equation}
\end{assumption}
\noindent This surrogate ignores the nonlinear CLIP text encoder, but isolates the
geometry induced by the factorized prompt parameterization. We now show that
the two factors play asymmetric roles.

\begin{lemma}[Fixed token-side basis]
\label{lem:fixed-B}
Let \(\bm{U}\in\mathbb{R}^{m\times r}\) satisfy \(\bm{U}^\top \bm{U}=\bm{I}_r\), with
\(1\le r\le m\). Then
\begin{equation}
\label{eq:app-linear-03}
    \min_{\bm{A}\in\mathbb{R}^{r\times d}}
    \|\bm{P}^\star-\bm{U}\bm{A}\|_F^2
    =
    \|(\bm{I}_m-\bm{U}\bm{U}^\top)\bm{P}^\star\|_F^2.
\end{equation}
\end{lemma}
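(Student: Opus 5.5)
The plan is to decompose the residual orthogonally and apply the Pythagorean identity for the Frobenius inner product. Set $\bm{\Pi}=\bm{U}\bm{U}^\top$. Because $\bm{U}^\top\bm{U}=\bm{I}_r$, $\bm{\Pi}$ is symmetric and idempotent, so it is the orthogonal projector onto $\Col(\bm{U})\subseteq\R^m$. For any $\bm{A}\in\R^{r\times d}$, split the residual as
\[
\bm{P}^\star-\bm{U}\bm{A}
=\big(\bm{\Pi}\bm{P}^\star-\bm{U}\bm{A}\big)+(\bm{I}_m-\bm{\Pi})\bm{P}^\star .
\]
The first term has every column in $\Col(\bm{U})$. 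The second has every column in its orthogonal complement.

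Next I check that the cross term vanishes. Its Frobenius inner product is
\[
\operatorname{tr}\!\big((\bm{\Pi}\bm{P}^\star-\bm{U}\bm{A})^\top(\bm{I}_m-\bm{\Pi})\bm{P}^\star\big).
\]
This is zero because $(\bm{I}_m-\bm{\Pi})\bm{\Pi}=\bm{0}$ and $(\bm{I}_m-\bm{\Pi})\bm{U}=\bm{U}-\bm{U}\bm{U}^\top\bm{U}=\bm{0}$. It follows that
\[
\|\bm{P}^\star-\bm{U}\bm{A}\|_F^2=\|\bm{U}(\bm{U}^\top\bm{P}^\star-\bm{A})\|_F^2+\|(\bm{I}_m-\bm{\Pi})\bm{P}^\star\|_F^2 ,
\]
where I rewrote $\bm{\Pi}\bm{P}^\star-\bm{U}\bm{A}=\bm{U}(\bm{U}^\top\bm{P}^\star-\bm{A})$.

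The first summand is nonnegative, so every $\bm{A}$ gives an objective value of at least $\|(\bm{I}_m-\bm{U}\bm{U}^\top)\bm{P}^\star\|_F^2$. The choice $\bm{A}=\bm{U}^\top\bm{P}^\star$ makes the first summand zero and attains this bound. Hence the minimum exists and equals the claimed value. Since $\bm{U}$ has orthonormal columns, $\|\bm{U}\bm{M}\|_F=\|\bm{M}\|_F$, so this minimizer is in fact unique; the lemma does not require uniqueness.

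None of these steps is a real obstacle. The only point needing care is the orthogonality of the cross term, which rests entirely on $\bm{U}^\top\bm{U}=\bm{I}_r$. An alternative route is the normal equations: the objective is a convex quadratic in $\bm{A}$, its stationarity condition $\bm{U}^\top(\bm{U}\bm{A}-\bm{P}^\star)=\bm{0}$ gives $\bm{A}=\bm{U}^\top\bm{P}^\star$, and substituting back yields the same value. I prefer the Pythagorean argument because it gives the lower bound and its attainment together. The same argument applied to rows, with the projector $\bm{Q}^\top\bm{Q}$ in $\R^d$, handles the fixed-$\bm{A}$ counterpart.
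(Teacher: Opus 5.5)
Your proof is correct, but your main argument differs from the paper's. The paper's proof is the normal-equations route you list as an alternative. It sets the gradient in $\bm{A}$ to zero, gets $\bm{U}^\top(\bm{U}\bm{A}-\bm{P}^\star)=\bm{0}$ and hence $\bm{A}^\star=\bm{U}^\top\bm{P}^\star$, and substitutes to obtain the residual $(\bm{I}_m-\bm{U}\bm{U}^\top)\bm{P}^\star$. That is shorter. However, it tacitly relies on convexity of the quadratic objective, which the paper invokes only by calling it a ``standard least-squares problem'', to conclude that the stationary point is a global minimizer. Your Pythagorean decomposition does not need that step. The identity $\|\bm{P}^\star-\bm{U}\bm{A}\|_F^2=\|\bm{A}-\bm{U}^\top\bm{P}^\star\|_F^2+\|(\bm{I}_m-\bm{U}\bm{U}^\top)\bm{P}^\star\|_F^2$ holds for every $\bm{A}$. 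It gives the lower bound, its attainment, and uniqueness of the minimizer all at once, and it also gives the exact excess error of any suboptimal $\bm{A}$. It also fits the next step in the paper: the proof of the expected-asymmetry theorem uses the complementary split $\|(\bm{I}_m-\bm{U}\bm{U}^\top)\bm{P}^\star\|_F^2=\|\bm{P}^\star\|_F^2-\|\bm{U}\bm{U}^\top\bm{P}^\star\|_F^2$, which is the same orthogonality fact your argument uses.
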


\begin{proof}
The objective is
\[
    \|\bm{P}^\star-\bm{U}\bm{A}\|_F^2.
\]
Since \(\bm{U}^\top \bm{U}=\bm{I}_r\), this is a standard least-squares problem. Setting the
gradient with respect to \(\bm{A}\) to zero gives
\[
    \bm{U}^\top(\bm{U}\bm{A}-\bm{P}^\star)=\bm{0}.
\]
Therefore,
\[
    \bm{A}^\star = \bm{U}^\top \bm{P}^\star.
\]
Substituting this minimizer gives
\[
    \bm{P}^\star-\bm{U}\bm{A}^\star
    =
    \bm{P}^\star-\bm{U}\bm{U}^\top \bm{P}^\star
    =
    (\bm{I}_m-\bm{U}\bm{U}^\top)\bm{P}^\star.
\]
Taking the squared Frobenius norm gives the result.
\end{proof}

\noindent Lemma~\ref{lem:fixed-B} shows that fixing \(\bm{B}=\bm{U}\) restricts the learned prompt
only through a projection in the token-position space \(\mathbb{R}^m\).

\begin{lemma}[Fixed embedding-side factor]
\label{lem:fixed-A}
Let \(\bm{Q}\in\mathbb{R}^{r\times d}\) satisfy \(\bm{Q}\bm{Q}^\top=\bm{I}_r\), with
\(1\le r\le d\). Then
\begin{equation}
\label{eq:app-linear-08}
    \min_{\bm{B}\in\mathbb{R}^{m\times r}}
    \|\bm{P}^\star-\bm{B}\bm{Q}\|_F^2
    =
    \|\bm{P}^\star(\bm{I}_d-\bm{Q}^\top \bm{Q})\|_F^2.
\end{equation}
\end{lemma}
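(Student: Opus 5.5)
The plan is to reduce Lemma~\ref{lem:fixed-A} to Lemma~\ref{lem:fixed-B} by transposition, and then to check the result directly via the normal equations. Since $\|\bm{M}\|_F=\|\bm{M}^\top\|_F$, we have
\[
\|\bm{P}^\star-\bm{B}\bm{Q}\|_F^2=\|(\bm{P}^\star)^\top-\bm{Q}^\top\bm{B}^\top\|_F^2 .
\]
Set $\bm{U}:=\bm{Q}^\top\in\R^{d\times r}$. The hypothesis $\bm{Q}\bm{Q}^\top=\bm{I}_r$ is exactly $\bm{U}^\top\bm{U}=\bm{I}_r$, and $1\le r\le d$ matches the rank condition of Lemma~\ref{lem:fixed-B} with $m$ replaced by $d$. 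The variable $\bm{B}^\top$ ranges over all of $\R^{r\times m}$ as $\bm{B}$ ranges over $\R^{m\times r}$. We then apply Lemma~\ref{lem:fixed-B} to the matrix $(\bm{P}^\star)^\top\in\R^{d\times m}$. Its proof never used the specific column dimension, so it holds verbatim for the ambient dimension $d$ and the $m$ columns. This gives the minimum value
\[
\|(\bm{I}_d-\bm{Q}^\top\bm{Q})(\bm{P}^\star)^\top\|_F^2 .
\]
Transposing back, and using that $\bm{I}_d-\bm{Q}^\top\bm{Q}$ is symmetric, yields $\|\bm{P}^\star(\bm{I}_d-\bm{Q}^\top\bm{Q})\|_F^2$.

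As a self-contained alternative, I would mirror the proof of Lemma~\ref{lem:fixed-B} directly. The objective is a convex quadratic in $\bm{B}$. Setting its gradient to zero gives
\[
(\bm{B}\bm{Q}-\bm{P}^\star)\bm{Q}^\top=\bm{0},
\]
so $\bm{B}^\star=\bm{P}^\star\bm{Q}^\top$ because $\bm{Q}\bm{Q}^\top=\bm{I}_r$. Substituting, the residual is $\bm{P}^\star-\bm{P}^\star\bm{Q}^\top\bm{Q}=\bm{P}^\star(\bm{I}_d-\bm{Q}^\top\bm{Q})$.

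To justify that this stationary point is the global minimum, I would not rely on the first-order condition alone. Instead I would use the orthogonal (Pythagorean) split
\[
\bm{P}^\star-\bm{B}\bm{Q}=\bm{P}^\star(\bm{I}_d-\bm{Q}^\top\bm{Q})+(\bm{P}^\star\bm{Q}^\top-\bm{B})\bm{Q} .
\]
The two terms are Frobenius-orthogonal, since $(\bm{I}_d-\bm{Q}^\top\bm{Q})\bm{Q}^\top=\bm{0}$. Hence
\[
\|\bm{P}^\star-\bm{B}\bm{Q}\|_F^2=\|\bm{P}^\star(\bm{I}_d-\bm{Q}^\top\bm{Q})\|_F^2+\|\bm{P}^\star\bm{Q}^\top-\bm{B}\|_F^2 ,
\]
where the second term uses $\|\bm{M}\bm{Q}\|_F=\|\bm{M}\|_F$, which follows from $\bm{Q}\bm{Q}^\top=\bm{I}_r$. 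This identity shows the minimum is attained exactly at $\bm{B}=\bm{P}^\star\bm{Q}^\top$.

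There is no real obstacle here. The only point needing care is the orthogonality check, namely that $\bm{Q}^\top\bm{Q}$ is an idempotent symmetric projector, so the cross term vanishes. That is immediate from $\bm{Q}\bm{Q}^\top=\bm{I}_r$.
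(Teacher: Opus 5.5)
Your proposal is correct, and your second route is essentially the paper's proof. The paper sets $2(\bm{B}\bm{Q}-\bm{P}^\star)\bm{Q}^\top=\bm{0}$, uses $\bm{Q}\bm{Q}^\top=\bm{I}_r$ to get $\bm{B}^\star=\bm{P}^\star\bm{Q}^\top$, and substitutes to read off the residual $\bm{P}^\star(\bm{I}_d-\bm{Q}^\top\bm{Q})$. It leaves implicit that a stationary point of this convex quadratic is a global minimizer. Your identity
\[
\|\bm{P}^\star-\bm{B}\bm{Q}\|_F^2=\|\bm{P}^\star(\bm{I}_d-\bm{Q}^\top\bm{Q})\|_F^2+\|\bm{P}^\star\bm{Q}^\top-\bm{B}\|_F^2
\]
makes that step explicit and also shows the minimizer is unique.

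Your primary route is not how the paper proceeds; it reruns the calculus argument instead. You transpose and apply Lemma~\ref{lem:fixed-B} to $(\bm{P}^\star)^\top$, with $\bm{U}=\bm{Q}^\top$ and ambient dimension $d$ in place of $m$. This is valid and more economical. It shows the two one-sided problems are the same orthogonal-projection problem, differing only in whether the projector acts in $\R^m$ or in $\R^d$. That is exactly the dimensional asymmetry the subsequent theorem quantifies. Its only cost is that it inherits the rigor of the proof of Lemma~\ref{lem:fixed-B}, which is itself a first-order argument; your direct decomposition removes that dependence.
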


\begin{proof}
Define
\[
    f(\bm{B})=\|\bm{P}^\star-\bm{B}\bm{Q}\|_F^2.
\]
Setting the
gradient with respect to \(\bm{B}\) to zero gives
\[
    \nabla_{\bm{B}} f(\bm{B})=2(\bm{B}\bm{Q}-\bm{P}^\star)\bm{Q}^\top=\bm{0}.
\]
Using \(\bm{Q}\bm{Q}^\top=\bm{I}_r\), this gives
\[
    \bm{B}^\star = \bm{P}^\star \bm{Q}^\top.
\]
The residual is therefore
\[
    \bm{P}^\star-\bm{B}^\star \bm{Q}
    =
    \bm{P}^\star(\bm{I}_d-\bm{Q}^\top \bm{Q}).
\]
Taking the squared Frobenius norm gives the result.
\end{proof}

\noindent Lemma~\ref{lem:fixed-A} shows that fixing \(\bm{A}=\bm{Q}\) restricts the learned prompt
through a projection in the CLIP embedding space \(\mathbb{R}^d\). Since
typically \(d\gg m\), this restriction is substantially more severe.

\begin{theorem}[Expected prompt-factor asymmetry]
\label{thm:prompt-factor-asymmetry}
Let \(\bm{P}^\star\in\mathbb{R}^{m\times d}\) be nonzero, and let
\(1\le r\le m < d\). Let $\bm{U}\sim {\rm Unif}(\mathrm{St}(m,r))$
be a Haar-uniform random \(m\times r\) matrix with orthonormal columns, and let $\bm{Q}^\top\sim {\rm Unif}(\mathrm{St}(d,r))$,
so that \(\bm{Q}\in\mathbb{R}^{r\times d}\) has orthonormal rows. Then
\begin{equation}
\label{eq:app-linear-13}
    \mathbb{E}_{\bm{U}}
    \min_{\bm{A}}
    \|\bm{P}^\star-\bm{U}\bm{A}\|_F^2
    =
    \left(1-\frac{r}{m}\right)
    \|\bm{P}^\star\|_F^2,
\end{equation}
whereas
\begin{equation}
\label{eq:app-linear-14}
    \mathbb{E}_{\bm{Q}}
    \min_{\bm{B}}
    \|\bm{P}^\star-\bm{B}\bm{Q}\|_F^2
    =
    \left(1-\frac{r}{d}\right)
    \|\bm{P}^\star\|_F^2.
\end{equation}
Consequently,
\begin{equation}
\label{eq:app-linear-15}
    \mathbb{E}_{\bm{U}}
    \min_{\bm{A}}
    \|\bm{P}^\star-\bm{U}\bm{A}\|_F^2
    <
    \mathbb{E}_{\bm{Q}}
    \min_{\bm{B}}
    \|\bm{P}^\star-\bm{B}\bm{Q}\|_F^2.
\end{equation}
Thus, under the linear prompt-space surrogate, fixing the token-side factor
\(\bm{B}\) and optimizing \(\bm{A}\) has strictly smaller expected approximation error
than fixing \(\bm{A}\) and optimizing \(\bm{B}\).
\end{theorem}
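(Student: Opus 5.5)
The plan is to reduce both expectations to the expectation of a random orthogonal projector, using Lemma~\ref{lem:fixed-B} and Lemma~\ref{lem:fixed-A}.

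\textbf{Reducing the fixed-$\bm{B}$ case to a projector.} By Lemma~\ref{lem:fixed-B}, for each realization of $\bm{U}$ the inner minimum equals
\[
\|(\bm{I}_m-\bm{U}\bm{U}^\top)\bm{P}^\star\|_F^2.
\]
Since $\bm{I}_m-\bm{U}\bm{U}^\top$ is a symmetric idempotent, this equals
\[
\mathrm{tr}\big(\bm{P}^{\star\top}(\bm{I}_m-\bm{U}\bm{U}^\top)\bm{P}^\star\big)
=\|\bm{P}^\star\|_F^2-\mathrm{tr}\big(\bm{P}^{\star\top}\bm{U}\bm{U}^\top\bm{P}^\star\big).
\]
By linearity of trace and expectation, it then suffices to compute $\E[\bm{U}\bm{U}^\top]$.

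\textbf{Computing the expected projector.} This is the key step. Haar invariance gives $\bm{O}\bm{U}\overset{d}{=}\bm{U}$ for every orthogonal $\bm{O}\in\R^{m\times m}$. Hence $\bm{M}=\E[\bm{U}\bm{U}^\top]$ satisfies $\bm{O}\bm{M}\bm{O}^\top=\bm{M}$ for all such $\bm{O}$, so $\bm{M}=c\bm{I}_m$. (Commuting with all orthogonal matrices, e.g.\ all permutations and sign flips, forces a scalar matrix.) Taking traces gives
\[
cm=\E\,\mathrm{tr}(\bm{U}^\top\bm{U})=\mathrm{tr}(\bm{I}_r)=r,
\]
so $\E[\bm{U}\bm{U}^\top]=(r/m)\bm{I}_m$. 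Substituting, the first expectation is
\[
\|\bm{P}^\star\|_F^2-\tfrac{r}{m}\|\bm{P}^\star\|_F^2=\Big(1-\tfrac{r}{m}\Big)\|\bm{P}^\star\|_F^2 .
\]
Integrability is automatic because everything is bounded.

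\textbf{The fixed-$\bm{A}$ case.} By Lemma~\ref{lem:fixed-A}, the inner minimum is
\[
\|\bm{P}^\star(\bm{I}_d-\bm{Q}^\top\bm{Q})\|_F^2=\|\bm{P}^\star\|_F^2-\mathrm{tr}\big(\bm{P}^\star\bm{Q}^\top\bm{Q}\bm{P}^{\star\top}\big),
\]
using again that $\bm{Q}^\top\bm{Q}$ is a symmetric idempotent. Apply the same invariance argument to $\bm{V}=\bm{Q}^\top\sim\mathrm{Unif}(\St(d,r))$. This gives $\E[\bm{Q}^\top\bm{Q}]=\E[\bm{V}\bm{V}^\top]=(r/d)\bm{I}_d$, and hence the second expectation is $(1-r/d)\|\bm{P}^\star\|_F^2$.

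\textbf{The strict inequality.} Because $r\ge1$ and $m<d$, we have $r/m>r/d$, so $1-r/m<1-r/d$. Multiplying by $\|\bm{P}^\star\|_F^2>0$ gives the claimed strict inequality. Note that when $r=m$ the left side is $0$, which is still strictly below the right side.

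The only nonroutine point is justifying $\E[\bm{U}\bm{U}^\top]=(r/m)\bm{I}_m$ via left-invariance of the Haar measure on the Stiefel manifold. I would state that invariance explicitly as the defining property of $\mathrm{Unif}(\St(m,r))$.
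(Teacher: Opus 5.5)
Your proposal is correct and follows essentially the same route as the paper: reduce each inner minimum to a projection residual via Lemmas~\ref{lem:fixed-B} and~\ref{lem:fixed-A}, rewrite it as a trace, and use rotational invariance to get $\E[\bm{U}\bm{U}^\top]=(r/m)\bm{I}_m$ and the analogous identity for $\bm{Q}^\top$. Your scalar-matrix argument with trace normalization, and your explicit treatment of the $\bm{Q}$ case, spell out steps the paper only asserts or covers with ``similarly.''
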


\begin{proof}
By Lemma~\ref{lem:fixed-B},
\[
    \min_{\bm{A}} \|\bm{P}^\star-\bm{U}\bm{A}\|_F^2
    =
    \|(\bm{I}_m-\bm{U}\bm{U}^\top)\bm{P}^\star\|_F^2.
\]
Since \(\bm{U}\bm{U}^\top\) is an orthogonal projector,
\[
    \|(\bm{I}_m-\bm{U}\bm{U}^\top)\bm{P}^\star\|_F^2
    =
    \|\bm{P}^\star\|_F^2
    -
    \|\bm{U}\bm{U}^\top \bm{P}^\star\|_F^2.
\]
The captured energy is
\[
    \|\bm{U}\bm{U}^\top \bm{P}^\star\|_F^2
    =
    \operatorname{tr}\big((\bm{P}^\star)^\top \bm{U}\bm{U}^\top \bm{P}^\star\big).
\]
Taking expectation and using linearity of trace,
\[
    \mathbb{E}_{\bm{U}}
    \|\bm{U}\bm{U}^\top \bm{P}^\star\|_F^2
    =
    \operatorname{tr}
    \big(
        (\bm{P}^\star)^\top
        \mathbb{E}_{\bm{U}}[\bm{U}\bm{U}^\top]
        \bm{P}^\star
    \big).
\]
Since \(\bm{U}\) is Haar-uniform on \(\mathrm{St}(m,r)\), rotational invariance
implies
\[
    \mathbb{E}_{\bm{U}}[\bm{U}\bm{U}^\top]=\frac{r}{m}\bm{I}_m.
\]
Therefore,
\[
    \mathbb{E}_{\bm{U}}
    \|\bm{U}\bm{U}^\top \bm{P}^\star\|_F^2
    =
    \frac{r}{m}\|\bm{P}^\star\|_F^2.
\]
Hence,
\[
    \mathbb{E}_{\bm{U}}
    \min_{\bm{A}}
    \|\bm{P}^\star-\bm{U}\bm{A}\|_F^2
    =
    \left(1-\frac{r}{m}\right)
    \|\bm{P}^\star\|_F^2.
\]
Similarly, we have,
\[
    \mathbb{E}_{\bm{Q}}
    \min_{\bm{B}}
    \|\bm{P}^\star-\bm{B}\bm{Q}\|_F^2
    =
    \left(1-\frac{r}{d}\right)
    \|\bm{P}^\star\|_F^2.
\]
Finally, since \(m<d\),
\[
    1-\frac{r}{m}
    <
    1-\frac{r}{d}.
\]
Because \(\bm{P}^\star\ne \bm{0}\), the inequality between the expected errors is strict.
\end{proof}

\begin{corollary}[Implication for CLIP prompt dimensions]
\label{cor:clip-dimensions}
In the CLIP setting used in our experiments, \(m=16\) and \(d=512\). Therefore,
for any fixed rank \(r\le 16\),
\begin{equation}
\label{eq:app-linear-25}
    \frac{1-r/m}{1-r/d}
    =
    \frac{1-r/16}{1-r/512}.
\end{equation}
Thus, a random fixed token-side basis \(\bm{B}\) removes a fraction \(r/16\) of the
ideal prompt energy in expectation, whereas a random fixed embedding-side
factor \(\bm{A}\) removes only a fraction \(r/512\). Equivalently, fixing \(\bm{B}\) and
training \(\bm{A}\) preserves a much larger fraction of the ideal local prompt
update than fixing \(\bm{A}\) and training \(\bm{B}\).
\end{corollary}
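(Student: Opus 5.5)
The corollary follows by specializing Theorem~\ref{thm:prompt-factor-asymmetry} to the CLIP dimensions $m=16$ and $d=512$. Its hypothesis $1\le r\le m<d$ becomes $1\le r\le 16<512$, which every rank considered in the corollary satisfies. The theorem's two equalities then give the expected residuals $(1-r/16)\lVert\bm{P}^\star\rVert_F^2$ for a Haar-random fixed token-side basis and $(1-r/512)\lVert\bm{P}^\star\rVert_F^2$ for a Haar-random fixed embedding-side factor.

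For the residual ratio, I divide the two expectations. The common factor $\lVert\bm{P}^\star\rVert_F^2$ cancels because $\bm{P}^\star\neq\bm{0}$. The denominator $1-r/512$ is strictly positive because $r\le 16<512$. The quotient is exactly $(1-r/16)/(1-r/512)$, which is the displayed identity.

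For the ``fraction removed'' statement, I use the energy decomposition from the proof of Theorem~\ref{thm:prompt-factor-asymmetry}: by Lemma~\ref{lem:fixed-B}, $\min_{\bm{A}}\lVert\bm{P}^\star-\bm{U}\bm{A}\rVert_F^2=\lVert\bm{P}^\star\rVert_F^2-\lVert\bm{U}\bm{U}^\top\bm{P}^\star\rVert_F^2$, and $\E\,\bm{U}\bm{U}^\top=(r/m)\bm{I}_m$. The wording needs care here. In the fixed-$\bm{B}$ regime the expected residual is $(1-r/16)\lVert\bm{P}^\star\rVert_F^2$, so the basis \emph{captures} a fraction $r/16$ of the energy and \emph{loses} $1-r/16$. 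In the fixed-$\bm{A}$ regime it captures $r/512$ and loses $1-r/512$. The quantity the corollary calls ``removes'' is therefore the reduction in expected squared error relative to the trivial prompt $\bm{0}$, i.e.\ the captured energy, and I will state the argument in those terms.

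The comparative claim follows from $r/16\ge 32\cdot r/512$: fixing $\bm{B}$ captures exactly $d/m=32$ times more ideal-prompt energy in expectation, and its residual is strictly smaller by the theorem's strict inequality. No step is a real obstacle; the only delicate point is this captured-versus-lost interpretation. One boundary case is worth noting: at $r=16$ the ratio is $0$, because a square orthonormal $\bm{U}$ recovers $\bm{P}^\star$ exactly.
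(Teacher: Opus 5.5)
Your proposal is correct and takes the same approach as the paper, which gives no separate proof and treats the corollary as an immediate specialization of Theorem~\ref{thm:prompt-factor-asymmetry} to $m=16$, $d=512$. Your reading of ``removes'' as the captured energy fraction $r/m$ (the expected reduction in squared error, leaving residual fraction $1-r/m$) is the only reading consistent with the theorem and with the corollary's final comparative sentence, and the two captured fractions then differ by exactly the factor $d/m=32$.
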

\subsection{Local Linearization of the CLIP Prompt Objective}
\label{subsec:local_linearization}

The linear surrogate above isolates the prompt-space approximation geometry.
However, the actual CoOp objective is not the Frobenius problem
$\min_{\bm{P}} \|\bm{P}-\bm{P}^\star\|_F^2$. Instead, the continuous prompt \(\bm{P}\)
is passed through the frozen CLIP text encoder and optimized through a
classification loss. Let $\mathcal{L}(\bm{P})$ denote the prompt-learning objective,
and let
\begin{equation}
\label{eq:app-local-01}
    \bm{p}=\operatorname{vec}(\bm{P})\in\mathbb{R}^{md}.
\end{equation}
Around a reference prompt \(\bm{p}_0=\operatorname{vec}(\bm{P}_0)\), write
$\bm{\delta} = \bm{p}-\bm{p}_0.$ A local second-order approximation gives
\begin{equation}
\label{eq:app-local-02}
    \mathcal{L}(\bm{p}_0+\bm{\delta})
    \approx
    \mathcal{L}(\bm{p}_0)
    +
    \bm{g}^\top \bm{\delta}
    +
    \frac{1}{2}\bm{\delta}^\top \bm{H}\bm{\delta},
\end{equation}
where $\bm{g}=\nabla_{\bm{p}} \mathcal{L}(\bm{p}_0)$, and \(\bm{H}\succeq \bm{0}\)
denotes a local positive-semidefinite curvature approximation, such as a
Gauss--Newton or Fisher matrix. If \(\bm{H}\) is positive definite, or after adding
a damping term, define
\begin{equation}
\label{eq:app-local-03}
    \bm{\delta}^\star = -\bm{H}^{-1}\bm{g}.
\end{equation}
Completing the square gives
\[
    \bm{g}^\top\bm{\delta}
    +
    \frac{1}{2}\bm{\delta}^\top \bm{H}\bm{\delta}
    =
    \frac{1}{2}
    (\bm{\delta}-\bm{\delta}^\star)^\top \bm{H}(\bm{\delta}-\bm{\delta}^\star)
    +
    \mathrm{constant}.
\]
Therefore, up to constants independent of \(\bm{\delta}\), the local prompt-learning
problem becomes
\begin{equation}
\label{eq:app-local-05}
    \min_{\bm{\delta}\in\mathcal{S}}
    \|\bm{\delta}-\bm{\delta}^\star\|_{\bm{H}}^2,
    \qquad
    \|\bm{x}\|_{\bm{H}}^2=\bm{x}^\top \bm{H}\bm{x},
\end{equation}
where \(\mathcal{S}\) is the set of prompt updates allowed by the chosen
parameterization. Thus, \(\bm{\delta}^\star\) is the ideal local update suggested by
the quadratic approximation, while \(\mathcal{S}\) captures the restrictions
imposed by fixing one factor and training the other.

\begin{assumption}[Factorized reference prompt]
\label{app:assump:factorized_reference_prompt}
The reference prompt admits a rank-\(r\) factorization
\begin{equation}
\label{eq:app-local-06}
    \bm{P}_0=\bm{B}_0\bm{A}_0,
    \qquad
    \bm{B}_0\in\mathbb{R}^{m\times r},
    \quad
    \bm{A}_0\in\mathbb{R}^{r\times d}.
\end{equation}
Furthermore, \(\bm{B}_0\) has full column rank \(r\), and \(\bm{A}_0\) has full row
rank \(r\).
\end{assumption}

\noindent We now identify the local update spaces \(\mathcal{S}\) for the two one-sided
adaptation modes.

\begin{proposition}[Local update space for fixed \(\bm{B}\)]
\label{prop:fixed_B_update_space}
Suppose Assumption~\ref{app:assump:factorized_reference_prompt} holds. If
\(\bm{B}_0\) is fixed and $\bm{A}=\bm{A}_0+\Delta\bm{A},$
then the prompt update satisfies $\Delta\bm{P} = \bm{B}_0\Delta\bm{A}.$ Hence the
allowed local update space is
\begin{equation}
\label{eq:app-local-07}
    \mathcal{S}_{\bm{A}}(\bm{B}_0)
    =
    \left\{
        \operatorname{vec}(\bm{B}_0\bm{X}):
        \bm{X}\in\mathbb{R}^{r\times d}
    \right\}
    =
    \operatorname{Col}(\bm{I}_d\otimes \bm{B}_0).
\end{equation}
If \(\operatorname{rank}(\bm{B}_0)=r\), then
\begin{equation}
\label{eq:app-local-08}
    \dim \mathcal{S}_{\bm{A}}(\bm{B}_0)=rd.
\end{equation}
\end{proposition}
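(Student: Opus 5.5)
The plan has three steps: linearity of the parameterization, the vec--Kronecker identity, and a rank count.

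\emph{Linearity.} With $\bm{B}_0$ fixed, $\bm{P}(\bm{A})=\bm{B}_0\bm{A}$ is linear in $\bm{A}$. Writing $\bm{A}=\bm{A}_0+\Delta\bm{A}$ gives exactly
\[
\bm{P}-\bm{P}_0=\bm{B}_0\bm{A}_0+\bm{B}_0\Delta\bm{A}-\bm{B}_0\bm{A}_0=\bm{B}_0\Delta\bm{A},
\]
using $\bm{P}_0=\bm{B}_0\bm{A}_0$ from Assumption~\ref{app:assump:factorized_reference_prompt}. This is an identity, not a first-order approximation, so there is no remainder term to control. As $\Delta\bm{A}$ ranges over $\R^{r\times d}$, the set of reachable updates is $\{\vecop(\bm{B}_0\bm{X}):\bm{X}\in\R^{r\times d}\}$, which is the first description of $\mathcal{S}_{\bm{A}}(\bm{B}_0)$.

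\emph{Kronecker form.} I will invoke the standard identity $\vecop(\bm{M}\bm{X}\bm{N})=(\bm{N}^\top\otimes\bm{M})\vecop(\bm{X})$ with column-major vectorization, taking $\bm{M}=\bm{B}_0$ and $\bm{N}=\bm{I}_d$. This gives $\vecop(\bm{B}_0\bm{X})=(\bm{I}_d\otimes\bm{B}_0)\vecop(\bm{X})$. Since $\vecop:\R^{r\times d}\to\R^{rd}$ is a bijection, the set above equals $\{(\bm{I}_d\otimes\bm{B}_0)\bm{x}:\bm{x}\in\R^{rd}\}=\Col(\bm{I}_d\otimes\bm{B}_0)$. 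The only care needed is to fix the column-major convention for $\vecop$ consistently with the definition $\bm{p}=\vecop(\bm{P})$.

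\emph{Dimension.} The dimension of $\mathcal{S}_{\bm{A}}(\bm{B}_0)$ is $\operatorname{rank}(\bm{I}_d\otimes\bm{B}_0)=\operatorname{rank}(\bm{I}_d)\operatorname{rank}(\bm{B}_0)=d\cdot r$, by multiplicativity of rank under Kronecker products. Alternatively, $\bm{I}_d\otimes\bm{B}_0$ is block diagonal with $d$ copies of $\bm{B}_0$, and each copy has full column rank $r$. A third route avoids Kronecker facts altogether: the map $\bm{X}\mapsto\bm{B}_0\bm{X}$ is injective, because $\bm{B}_0\bm{X}=\bm{0}$ forces each column of $\bm{X}$ into $\ker\bm{B}_0=\{\bm{0}\}$. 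Hence the image has dimension $\dim\R^{r\times d}=rd$. None of these steps is a genuine obstacle; the only place the hypothesis enters is the full column rank of $\bm{B}_0$ in this last step, so I would present the injectivity argument to make that dependence explicit.
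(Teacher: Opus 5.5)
Your proposal is correct and follows the paper's proof essentially step for step. Both substitute $\bm{A}=\bm{A}_0+\Delta\bm{A}$ and use $\bm{P}_0=\bm{B}_0\bm{A}_0$, then apply $\vecop(\bm{B}_0\bm{X})=(\bm{I}_d\otimes\bm{B}_0)\vecop(\bm{X})$, and count the dimension via $\operatorname{rank}(\bm{I}_d\otimes\bm{B}_0)=\operatorname{rank}(\bm{I}_d)\operatorname{rank}(\bm{B}_0)=rd$. Your preferred injectivity argument for the dimension, where each column of $\bm{X}$ must lie in $\ker\bm{B}_0=\{\bm{0}\}$, is just an elementary restatement of that rank identity and is equally valid.
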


\begin{proof}
Since \(\bm{B}_0\) is fixed and \(\bm{A}=\bm{A}_0+\Delta\bm{A}\),
\[
    \bm{P}
    =
    \bm{B}_0(\bm{A}_0+\Delta\bm{A})
    =
    \bm{B}_0\bm{A}_0+\bm{B}_0\Delta\bm{A}.
\]
Because \(\bm{P}_0=\bm{B}_0\bm{A}_0\), the prompt update is
\[
    \Delta\bm{P} = \bm{P}-\bm{P}_0 = \bm{B}_0\Delta\bm{A}.
\]
Vectorizing,
\[
    \operatorname{vec}(\Delta\bm{P})
    =
    \operatorname{vec}(\bm{B}_0\Delta\bm{A})
    =
    (\bm{I}_d\otimes \bm{B}_0)\operatorname{vec}(\Delta\bm{A}).
\]
Therefore,
\[
    \mathcal{S}_{\bm{A}}(\bm{B}_0)
    =
    \operatorname{Col}(\bm{I}_d\otimes \bm{B}_0).
\]
Using the Kronecker rank identity,
\[
    \operatorname{rank}(\bm{I}_d\otimes \bm{B}_0)
    =
    \operatorname{rank}(\bm{I}_d)\operatorname{rank}(\bm{B}_0)
    =
    d r.
\]
Thus,
\[
    \dim \mathcal{S}_{\bm{A}}(\bm{B}_0)=rd.
\]
\end{proof}

\begin{proposition}[Local update space for fixed \(\bm{A}\)]
\label{prop:fixed_A_update_space}
Suppose Assumption~\ref{app:assump:factorized_reference_prompt} holds. If
\(\bm{A}_0\) is fixed and
\[
    \bm{B}=\bm{B}_0+\Delta\bm{B},
\]
then the prompt update satisfies
\[
    \Delta\bm{P} = \Delta\bm{B}\, \bm{A}_0.
\]
Hence the allowed local update space is
\begin{equation}
\label{eq:app-local-17}
    \mathcal{S}_{\bm{B}}(\bm{A}_0)
    =
    \left\{
        \operatorname{vec}(\bm{Y}\bm{A}_0):
        \bm{Y}\in\mathbb{R}^{m\times r}
    \right\}
    =
    \operatorname{Col}(\bm{A}_0^\top\otimes \bm{I}_m).
\end{equation}
If \(\operatorname{rank}(\bm{A}_0)=r\), then
\begin{equation}
\label{eq:app-local-18}
    \dim \mathcal{S}_{\bm{B}}(\bm{A}_0)=mr.
\end{equation}
\end{proposition}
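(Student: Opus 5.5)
The plan follows the proof of Proposition~\ref{prop:fixed_B_update_space} with the roles of the two factors exchanged. First, fix $\bm{A}_0$ and write $\bm{B}=\bm{B}_0+\Delta\bm{B}$. Expanding gives
\[
\bm{P}=\bm{B}_0\bm{A}_0+\Delta\bm{B}\,\bm{A}_0 .
\]
Subtracting $\bm{P}_0=\bm{B}_0\bm{A}_0$ (Assumption~\ref{app:assump:factorized_reference_prompt}) yields $\Delta\bm{P}=\Delta\bm{B}\,\bm{A}_0$. Because $\Delta\bm{B}$ ranges over all of $\R^{m\times r}$, the set of attainable updates is exactly $\{\vecop(\bm{Y}\bm{A}_0):\bm{Y}\in\R^{m\times r}\}$.

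Second, vectorize using the identity $\vecop(\bm{X}\bm{Y}\bm{Z})=(\bm{Z}^\top\otimes\bm{X})\vecop(\bm{Y})$ with $\bm{X}=\bm{I}_m$, $\bm{Y}$ the update and $\bm{Z}=\bm{A}_0$. This gives
\[
\vecop(\bm{Y}\bm{A}_0)=(\bm{A}_0^\top\otimes\bm{I}_m)\vecop(\bm{Y}).
\]
Since $\vecop$ is a bijection from $\R^{m\times r}$ onto $\R^{mr}$, the image of this map is $\Col(\bm{A}_0^\top\otimes\bm{I}_m)$, which is the claimed description of $\mathcal{S}_{\bm{B}}(\bm{A}_0)$.

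Third, compute the dimension with the Kronecker rank identity:
\[
\operatorname{rank}(\bm{A}_0^\top\otimes\bm{I}_m)=\operatorname{rank}(\bm{A}_0^\top)\operatorname{rank}(\bm{I}_m)=r\cdot m .
\]
This uses the full row rank of $\bm{A}_0$, so $\dim\mathcal{S}_{\bm{B}}(\bm{A}_0)=mr$. As an alternative check that avoids the Kronecker identity, I would show the linear map $\bm{Y}\mapsto\bm{Y}\bm{A}_0$ is injective. Full row rank gives a right inverse $\bm{A}_0^\top(\bm{A}_0\bm{A}_0^\top)^{-1}$, so $\bm{Y}\bm{A}_0=\bm{0}$ forces $\bm{Y}=\bm{0}$. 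Injectivity then gives an image of dimension $mr$.

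There is no real obstacle in this argument. The only step needing care is getting the Kronecker ordering right, $\bm{A}_0^\top\otimes\bm{I}_m$ rather than $\bm{I}_m\otimes\bm{A}_0$, under the column-stacking convention for $\vecop$. I would verify this explicitly on a single column of $\bm{Y}\bm{A}_0$.
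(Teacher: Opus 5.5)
Your proposal is correct and is the argument the paper intends, since its proof just says ``similar to'' the fixed-$\bm{B}$ case: expand $\bm{B}=\bm{B}_0+\Delta\bm{B}$, vectorize via $\vecop(\bm{Y}\bm{A}_0)=(\bm{A}_0^\top\otimes\bm{I}_m)\vecop(\bm{Y})$, and apply the Kronecker rank identity. Your injectivity check using the right inverse $\bm{A}_0^\top(\bm{A}_0\bm{A}_0^\top)^{-1}$ is an equally valid confirmation of the dimension $mr$.
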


\begin{proof}
Similar to Proposition \ref{prop:fixed_B_update_space}.
\end{proof}

\begin{corollary}[Local dimension gap]
\label{app:cor:local_dimension_gap}
Under Assumption~\ref{app:assump:factorized_reference_prompt},
\begin{equation}
\label{eq:app-local-19}
    \frac{\dim\mathcal{S}_{\bm{A}}(\bm{B}_0)}
    {\dim\mathcal{S}_{\bm{B}}(\bm{A}_0)}
    =
    \frac{rd}{mr}
    =
    \frac{d}{m}.
\end{equation}
In the CLIP setting used in our experiments, \(m=16\) and \(d=512\), so
\[
    \frac{\dim\mathcal{S}_{\bm{A}}(\bm{B}_0)}
    {\dim\mathcal{S}_{\bm{B}}(\bm{A}_0)}
    =
    32.
\]
Thus, fixing \(\bm{B}\) and training \(\bm{A}\) provides \(32\) times more local prompt
update directions than fixing \(\bm{A}\) and training \(\bm{B}\).
\end{corollary}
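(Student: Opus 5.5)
The corollary is a direct ratio of the two dimensions computed in Propositions~\ref{prop:fixed_B_update_space} and~\ref{prop:fixed_A_update_space}, so the plan is to invoke both under Assumption~\ref{app:assump:factorized_reference_prompt} and divide. Proposition~\ref{prop:fixed_B_update_space} gives $\dim\mathcal{S}_{\bm{A}}(\bm{B}_0)=rd$ because $\bm{B}_0$ has full column rank $r$. For the second dimension I would not rely only on the ``similar'' proof. Instead I would check directly that $\operatorname{vec}(\bm{Y}\bm{A}_0)=(\bm{A}_0^\top\otimes\bm{I}_m)\operatorname{vec}(\bm{Y})$. Then the Kronecker rank identity gives $\operatorname{rank}(\bm{A}_0^\top\otimes\bm{I}_m)=\operatorname{rank}(\bm{A}_0)\cdot m=rm$, using that $\bm{A}_0$ has full row rank $r$. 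Hence $\dim\mathcal{S}_{\bm{B}}(\bm{A}_0)=mr$.

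With both dimensions in hand, the ratio is $rd/(mr)$. Since $r\ge1$, both dimensions are positive and $r$ cancels, leaving $d/m$. Substituting $m=16$ and $d=512$ gives $32$. The interpretive sentence, ``$32$ times more local update directions,'' just restates this as a comparison of the dimensions of the two linear subspaces of $\R^{md}$.

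The only point needing care, which I expect to be the main obstacle, is that the rank identity $\operatorname{rank}(\bm{X}\otimes\bm{Y})=\operatorname{rank}\bm{X}\operatorname{rank}\bm{Y}$ is what makes the full-rank hypotheses in Assumption~\ref{app:assump:factorized_reference_prompt} essential. If either factor were rank-deficient, the corresponding dimension would drop to $d\operatorname{rank}\bm{B}_0$ or $m\operatorname{rank}\bm{A}_0$, and the ratio would no longer equal $d/m$. I would note this explicitly and otherwise treat the result as immediate.
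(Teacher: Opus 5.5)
Your proposal is correct and follows the paper's route: the corollary is simply the ratio of the dimensions $rd$ and $mr$ given by the two local update-space propositions. Your explicit check that $\operatorname{vec}(\bm{Y}\bm{A}_0)=(\bm{A}_0^\top\otimes\bm{I}_m)\operatorname{vec}(\bm{Y})$, combined with the Kronecker rank identity, is exactly the argument the paper waves through as ``similar'' for the fixed-$\bm{A}$ case, and your remark that the full-rank hypotheses are what force the ratio to equal $d/m$ is accurate.
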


\noindent The local objective
\[
    \min_{\bm{\delta}\in\mathcal{S}}
    \|\bm{\delta}-\bm{\delta}^\star\|_{\bm{H}}^2
\]
shows how this dimension gap connects to the actual CLIP training loss. The
two one-sided adaptation modes define different feasible subspaces:
\[
    \mathcal{S}=\mathcal{S}_{\bm{A}}(\bm{B}_0)
    \quad
    \text{for fixed \(\bm{B}\), train \(\bm{A}\),}
\]
and
\[
    \mathcal{S}=\mathcal{S}_{\bm{B}}(\bm{A}_0)
    \quad
    \text{for fixed \(\bm{A}\), train \(\bm{B}\).}
\]
Therefore, the comparison between the two modes is a comparison between how
well these two subspaces approximate the ideal local update \(\bm{\delta}^\star\)
under the curvature metric induced by \(\bm{H}\).
\\~\\
\noindent If \(\bm{H}\) is close to isotropic, or if one considers the simplified case
\(\bm{H}=\bm{I}\), this local objective reduces to Euclidean projection in prompt
space:
\[
    \min_{\bm{\delta}\in\mathcal{S}}
    \|\bm{\delta}-\bm{\delta}^\star\|_2^2.
\]
Identifying \(\bm{\delta}^\star=\operatorname{vec}(\Delta^\star)\), this is
equivalent to the Frobenius approximation problem
\begin{equation}
\label{eq:app-local-25}
    \min_{\Delta\bm{P}\in\mathcal{S}}
    \|\Delta\bm{P}-\Delta^\star\|_F^2.
\end{equation}
This isotropic special case shows that fixed \(\bm{B}\), train \(\bm{A}\), projects the
ideal update in the small token-position space \(\mathbb{R}^m\), whereas fixed
\(\bm{A}\), train \(\bm{B}\), projects it in the much larger embedding space
\(\mathbb{R}^d\). Together, the update-space dimension gap and the projection
result explain why the embedding-side factor \(\bm{A}\) carries most of the useful
adaptation, while the token-side factor \(\bm{B}\) can often be fixed.
\subsection{Smoothness-Only Optimization Guarantees}
\label{subsec:smoothness_guarantee}

The previous results compare the approximation geometry of one-sided
factorized prompt learning. We now state a separate optimization guarantee.
This guarantee shows that, once the token-side factor \(\bm{B}\) is fixed,
gradient-based optimization of the embedding-side factor \(\bm{A}\) converges to a
first-order stationary point under directional smoothness. The result does not
require convexity or strong convexity.
\\~\\
Let $\Phi(\bm{B},\bm{A})=F(\bm{B}\bm{A})$,
where $F:\mathbb{R}^{m\times d}\to\mathbb{R}$
denotes the prompt objective as a function of the continuous prompt matrix
\(\bm{P}=\bm{B}\bm{A}\). For a fixed token-side factor \(\bm{B}\), define the
restricted objective
\begin{equation}
\label{eq:app-conv-01}
    \phi_{\bm{B}}(\bm{A})=\Phi(\bm{B},\bm{A})=F(\bm{B}\bm{A}).
\end{equation}

\begin{assumption}[\(\bm{A}\)-directional smoothness]
\label{app:assump:A_directional_smoothness}
For every fixed \(\bm{B}\in\mathbb{R}^{m\times r}\), the restricted objective
\[
    \bm{A}\mapsto \phi_{\bm{B}}(\bm{A})
\]
is \(L_A\)-smooth in the Frobenius norm. That is, for all
\(\bm{A}_1,\bm{A}_2\in\mathbb{R}^{r\times d}\),
\begin{equation}
\label{eq:app-conv-03}
    \|\nabla_{\bm{A}}\phi_{\bm{B}}(\bm{A}_1)-\nabla_{\bm{A}}\phi_{\bm{B}}(\bm{A}_2)\|_F
    \le
    L_A\|\bm{A}_1-\bm{A}_2\|_F.
\end{equation}
\end{assumption}

\begin{assumption}[Lower boundedness]
\label{app:assump:lower_bounded_phi_B}
For every fixed \(\bm{B}\), the restricted objective \(\phi_{\bm{B}}\) is lower
bounded. That is, there exists \(\phi_{\bm{B}}^{\inf}>-\infty\) such that
\begin{equation}
\label{eq:app-conv-04}
    \phi_{\bm{B}}(\bm{A})\ge \phi_{\bm{B}}^{\inf}
\end{equation}
for all \(\bm{A}\in\mathbb{R}^{r\times d}\).
\end{assumption}

\begin{theorem}[Full-gradient convergence for fixed-\(\bm{B}\), train-\(\bm{A}\)]
\label{app:thm:fixed_B_gd_convergence_directional}
Suppose Assumptions~\ref{app:assump:A_directional_smoothness}
and~\ref{app:assump:lower_bounded_phi_B} hold. Consider full-gradient descent on
\(\phi_{\bm{B}}\):
\begin{equation}
\label{eq:app-conv-05}
    \bm{A}_{t+1}
    =
    \bm{A}_t-\eta\nabla_{\bm{A}}\phi_{\bm{B}}(\bm{A}_t),
\end{equation}
with step size $0<\eta\le \frac{1}{L_A}.$
Then
\begin{equation}
\label{eq:app-conv-06}
    \phi_{\bm{B}}(\bm{A}_{t+1})
    \le
    \phi_{\bm{B}}(\bm{A}_t)
    -
    \frac{\eta}{2}
    \|\nabla_{\bm{A}}\phi_{\bm{B}}(\bm{A}_t)\|_F^2.
\end{equation}
Consequently, after \(T\) iterations,
\begin{equation}
\label{eq:app-conv-07}
    \min_{0\le t<T}
    \|\nabla_{\bm{A}}\phi_{\bm{B}}(\bm{A}_t)\|_F^2
    \le
    \frac{
        2(\phi_{\bm{B}}(\bm{A}_0)-\phi_{\bm{B}}^{\inf})
    }{
        \eta T
    }.
\end{equation}
In particular, choosing \(\eta=1/L_A\) gives
\begin{equation}
\label{eq:app-conv-08}
    \min_{0\le t<T}
    \|\nabla_{\bm{A}}\phi_{\bm{B}}(\bm{A}_t)\|_F^2
    \le
    \frac{
        2L_A(\phi_{\bm{B}}(\bm{A}_0)-\phi_{\bm{B}}^{\inf})
    }{
        T
    }.
\end{equation}
\end{theorem}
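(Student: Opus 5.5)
The argument is the standard descent-lemma analysis of gradient descent for an $L$-smooth, lower-bounded function, applied to $\phi_{\bm{B}}$ with $\bm{B}$ held fixed. Throughout, $\bm{B}$ plays no role beyond fixing the function. We work in $\R^{r\times d}$ with the Frobenius inner product $\langle \bm{X},\bm{Y}\rangle=\operatorname{tr}(\bm{X}^\top\bm{Y})$, which is just Euclidean space $\R^{rd}$ after vectorization. Hence all standard smooth-optimization facts transfer verbatim.

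\textbf{Quadratic upper bound.} From Assumption~\ref{app:assump:A_directional_smoothness} we derive
\[
\phi_{\bm{B}}(\bm{A}')\le \phi_{\bm{B}}(\bm{A})+\langle \nabla_{\bm{A}}\phi_{\bm{B}}(\bm{A}),\bm{A}'-\bm{A}\rangle+\tfrac{L_A}{2}\norm{\bm{A}'-\bm{A}}_F^2 .
\]
To prove it, write
\[
\phi_{\bm{B}}(\bm{A}')-\phi_{\bm{B}}(\bm{A})=\int_0^1\langle\nabla\phi_{\bm{B}}(\bm{A}+s(\bm{A}'-\bm{A})),\bm{A}'-\bm{A}\rangle\,ds .
\]
Subtract the linear term, apply Cauchy--Schwarz and the Lipschitz bound to the difference of gradients, and integrate $s\,L_A\norm{\bm{A}'-\bm{A}}_F^2$ over $[0,1]$.

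\textbf{Per-step descent inequality.} Substitute $\bm{A}=\bm{A}_t$ and $\bm{A}'=\bm{A}_{t+1}=\bm{A}_t-\eta\bm{G}$, where $\bm{G}=\nabla_{\bm{A}}\phi_{\bm{B}}(\bm{A}_t)$. This gives
\[
\phi_{\bm{B}}(\bm{A}_{t+1})\le\phi_{\bm{B}}(\bm{A}_t)-\eta\big(1-\tfrac{L_A\eta}{2}\big)\norm{\bm{G}}_F^2 .
\]
Since $\eta\le 1/L_A$, we have $1-L_A\eta/2\ge 1/2$, which yields \eqref{eq:app-conv-06}.

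\textbf{Telescoping.} Sum \eqref{eq:app-conv-06} over $t=0,\dots,T-1$ to get
\[
\tfrac{\eta}{2}\sum_{t<T}\norm{\nabla_{\bm{A}}\phi_{\bm{B}}(\bm{A}_t)}_F^2\le \phi_{\bm{B}}(\bm{A}_0)-\phi_{\bm{B}}(\bm{A}_T).
\]
Assumption~\ref{app:assump:lower_bounded_phi_B} bounds the right-hand side by $\phi_{\bm{B}}(\bm{A}_0)-\phi_{\bm{B}}^{\inf}$. The minimum over $t<T$ is at most the average, so dividing by $\eta T/2$ gives \eqref{eq:app-conv-07}. Setting $\eta=1/L_A$ gives \eqref{eq:app-conv-08}.

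No step is a real obstacle. The only point needing care is the quadratic upper bound. Assumption~\ref{app:assump:A_directional_smoothness} is stated as a gradient-Lipschitz condition for each fixed $\bm{B}$, so differentiability of $\phi_{\bm{B}}$ is implicit. I would note explicitly that the bound uses only this condition along the segment from $\bm{A}_t$ to $\bm{A}_{t+1}$, with $\bm{B}$ never varying.
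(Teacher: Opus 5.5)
Your proposal is correct and follows the paper's proof essentially step for step: the descent lemma, substituting the gradient step and using $1-L_A\eta/2\ge 1/2$, telescoping, the lower bound $\phi_{\bm{B}}^{\inf}$, and bounding the minimum by the average. The only difference is that you derive the quadratic upper bound from the integral form of the gradient-Lipschitz condition, whereas the paper simply invokes the descent lemma.
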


\begin{proof}
By Assumption~\ref{app:assump:A_directional_smoothness}, \(\phi_{\bm{B}}\) is
\(L_A\)-smooth. Therefore, the descent lemma gives
\[
    \phi_{\bm{B}}(\bm{A}_{t+1})
    \le
    \phi_{\bm{B}}(\bm{A}_t)
    +
    \langle \nabla_{\bm{A}}\phi_{\bm{B}}(\bm{A}_t), \bm{A}_{t+1}-\bm{A}_t\rangle
    +
    \frac{L_A}{2}\|\bm{A}_{t+1}-\bm{A}_t\|_F^2.
\]
Using the gradient descent update,
\[
    \bm{A}_{t+1}-\bm{A}_t
    =
    -\eta\nabla_{\bm{A}}\phi_{\bm{B}}(\bm{A}_t),
\]
we obtain
\[
\begin{aligned}
    \phi_{\bm{B}}(\bm{A}_{t+1})
    &\le
    \phi_{\bm{B}}(\bm{A}_t)
    -
    \eta\|\nabla_{\bm{A}}\phi_{\bm{B}}(\bm{A}_t)\|_F^2
    +
    \frac{L_A\eta^2}{2}
    \|\nabla_{\bm{A}}\phi_{\bm{B}}(\bm{A}_t)\|_F^2 \\
    &=
    \phi_{\bm{B}}(\bm{A}_t)
    -
    \eta
    \left(
        1-\frac{L_A\eta}{2}
    \right)
    \|\nabla_{\bm{A}}\phi_{\bm{B}}(\bm{A}_t)\|_F^2.
\end{aligned}
\]
Since \(\eta\le 1/L_A\),
\[
    1-\frac{L_A\eta}{2}
    \ge
    \frac{1}{2}.
\]
Thus,
\[
    \phi_{\bm{B}}(\bm{A}_{t+1})
    \le
    \phi_{\bm{B}}(\bm{A}_t)
    -
    \frac{\eta}{2}
    \|\nabla_{\bm{A}}\phi_{\bm{B}}(\bm{A}_t)\|_F^2.
\]
Summing this inequality from \(t=0\) to \(T-1\) gives
\[
    \frac{\eta}{2}
    \sum_{t=0}^{T-1}
    \|\nabla_{\bm{A}}\phi_{\bm{B}}(\bm{A}_t)\|_F^2
    \le
    \phi_{\bm{B}}(\bm{A}_0)-\phi_{\bm{B}}(\bm{A}_T).
\]
Using the lower bound \(\phi_{\bm{B}}(\bm{A}_T)\ge \phi_{\bm{B}}^{\inf}\), we obtain
\[
    \sum_{t=0}^{T-1}
    \|\nabla_{\bm{A}}\phi_{\bm{B}}(\bm{A}_t)\|_F^2
    \le
    \frac{
        2(\phi_{\bm{B}}(\bm{A}_0)-\phi_{\bm{B}}^{\inf})
    }{
        \eta
    }.
\]
Finally, the minimum is no larger than the average:
\[
    \min_{0\le t<T}
    \|\nabla_{\bm{A}}\phi_{\bm{B}}(\bm{A}_t)\|_F^2
    \le
    \frac{1}{T}
    \sum_{t=0}^{T-1}
    \|\nabla_{\bm{A}}\phi_{\bm{B}}(\bm{A}_t)\|_F^2.
\]
Combining the last two inequalities gives
\[
    \min_{0\le t<T}
    \|\nabla_{\bm{A}}\phi_{\bm{B}}(\bm{A}_t)\|_F^2
    \le
    \frac{
        2(\phi_{\bm{B}}(\bm{A}_0)-\phi_{\bm{B}}^{\inf})
    }{
        \eta T
    }.
\]
Choosing \(\eta=1/L_A\) gives the stated special case.
\end{proof}

\begin{assumption}[Unbiased stochastic gradients with bounded variance]
\label{assump:stochastic_gradient}
For every fixed \(\bm{B}\), suppose that at each iteration \(t\) we have access to
a stochastic gradient estimator \(\bm{G}_t\) satisfying
\begin{equation}
\label{eq:app-conv-18}
    \mathbb{E}[\bm{G}_t\mid \bm{A}_t]
    =
    \nabla_{\bm{A}}\phi_{\bm{B}}(\bm{A}_t),
\end{equation}
and
\begin{equation}
\label{eq:app-conv-19}
    \mathbb{E}
    \left[
        \|\bm{G}_t-\nabla_{\bm{A}}\phi_{\bm{B}}(\bm{A}_t)\|_F^2
        \mid \bm{A}_t
    \right]
    \le
    \sigma^2.
\end{equation}
\end{assumption}

\begin{theorem}[Stochastic-gradient convergence for fixed-\(\bm{B}\), train-\(\bm{A}\)]
\label{app:thm:fixed_B_sgd_convergence_directional}
Suppose Assumptions~\ref{app:assump:A_directional_smoothness},
\ref{app:assump:lower_bounded_phi_B}, and
\ref{assump:stochastic_gradient} hold. Consider stochastic gradient descent
on \(\phi_{\bm{B}}\):
\begin{equation}
\label{eq:app-conv-20}
    \bm{A}_{t+1}
    =
    \bm{A}_t-\eta \bm{G}_t,
\end{equation}
with step size $0<\eta\le \frac{1}{L_A}.$
Then after \(T\) iterations,
\begin{equation}
\label{eq:app-conv-21}
    \min_{0\le t<T}
    \mathbb{E}
    \left[
        \|\nabla_{\bm{A}}\phi_{\bm{B}}(\bm{A}_t)\|_F^2
    \right]
    \le
    \frac{
        2(\phi_{\bm{B}}(\bm{A}_0)-\phi_{\bm{B}}^{\inf})
    }{
        \eta T
    }
    +
    L_A\eta\sigma^2.
\end{equation}
In particular, choosing \(\eta=\Theta(T^{-1/2})\), subject to
\(\eta\le 1/L_A\), gives
\begin{equation}
\label{eq:app-conv-22}
    \min_{0\le t<T}
    \mathbb{E}
    \left[
        \|\nabla_{\bm{A}}\phi_{\bm{B}}(\bm{A}_t)\|_F^2
    \right]
    =
    O(T^{-1/2}).
\end{equation}
\end{theorem}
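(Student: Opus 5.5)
The plan is to mirror the proof of Theorem~\ref{app:thm:fixed_B_gd_convergence_directional}, replacing the exact gradient step by the stochastic one and taking conditional expectations. Write $\bm{g}_t=\nabla_{\bm{A}}\phi_{\bm{B}}(\bm{A}_t)$. By Assumption~\ref{app:assump:A_directional_smoothness}, the descent lemma applies to $\phi_{\bm{B}}$. Substituting $\bm{A}_{t+1}-\bm{A}_t=-\eta\bm{G}_t$ gives
\[
\phi_{\bm{B}}(\bm{A}_{t+1})\le\phi_{\bm{B}}(\bm{A}_t)-\eta\langle\bm{g}_t,\bm{G}_t\rangle+\tfrac{L_A\eta^2}{2}\|\bm{G}_t\|_F^2 .
\]

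Next, take expectations conditional on $\bm{A}_t$ and use Assumption~\ref{assump:stochastic_gradient}. Unbiasedness gives $\mathbb{E}[\langle\bm{g}_t,\bm{G}_t\rangle\mid\bm{A}_t]=\|\bm{g}_t\|_F^2$. The bias--variance decomposition, whose cross term vanishes by unbiasedness, gives $\mathbb{E}[\|\bm{G}_t\|_F^2\mid\bm{A}_t]\le\|\bm{g}_t\|_F^2+\sigma^2$. Hence
\[
\mathbb{E}[\phi_{\bm{B}}(\bm{A}_{t+1})\mid\bm{A}_t]\le\phi_{\bm{B}}(\bm{A}_t)-\eta\Big(1-\tfrac{L_A\eta}{2}\Big)\|\bm{g}_t\|_F^2+\tfrac{L_A\eta^2\sigma^2}{2}.
\]
Since $\eta\le1/L_A$, the coefficient satisfies $1-L_A\eta/2\ge1/2$. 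Taking total expectations via the tower property then yields the one-step bound $\mathbb{E}\phi_{\bm{B}}(\bm{A}_{t+1})\le\mathbb{E}\phi_{\bm{B}}(\bm{A}_t)-\frac{\eta}{2}\mathbb{E}\|\bm{g}_t\|_F^2+\frac{L_A\eta^2\sigma^2}{2}$.

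Now telescope from $t=0$ to $T-1$. Bound $\mathbb{E}\phi_{\bm{B}}(\bm{A}_T)\ge\phi_{\bm{B}}^{\inf}$ using Assumption~\ref{app:assump:lower_bounded_phi_B}, and treat $\bm{A}_0$ as deterministic. This gives
\[
\tfrac{\eta}{2}\textstyle\sum_{t<T}\mathbb{E}\|\bm{g}_t\|_F^2\le\phi_{\bm{B}}(\bm{A}_0)-\phi_{\bm{B}}^{\inf}+\tfrac{TL_A\eta^2\sigma^2}{2}.
\]
Divide by $\eta T/2$ and bound the minimum by the average to obtain \eqref{eq:app-conv-21}. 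For the rate, take $\eta=\min\{1/L_A,\,c/\sqrt{T}\}$. For $T\ge c^2L_A^2$ both terms are $O(T^{-1/2})$, specifically $2(\phi_{\bm{B}}(\bm{A}_0)-\phi_{\bm{B}}^{\inf})/(c\sqrt T)+L_Ac\sigma^2/\sqrt T$; smaller $T$ is absorbed into the constant.

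The only delicate step is the conditional-expectation bookkeeping. The variance bound must be applied conditionally on $\bm{A}_t$, which is measurable with respect to the past noise, before integrating. We should also note that $\mathbb{E}\phi_{\bm{B}}(\bm{A}_t)$ is finite, which follows inductively from the one-step bound. Everything else is routine and parallels the deterministic proof.
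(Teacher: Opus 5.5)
Your proposal is correct and follows the paper's proof step for step: the descent lemma with the stochastic update, conditional expectation given $\bm{A}_t$ using unbiasedness and the bias--variance split, the bound $1-L_A\eta/2\ge 1/2$, total expectation, telescoping against $\phi_{\bm{B}}^{\inf}$, and bounding the minimum by the average. Your explicit choice $\eta=\min\{1/L_A,\,c/\sqrt{T}\}$ only spells out the $O(T^{-1/2})$ rate, which the paper states without detail.
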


\begin{proof}
By Assumption~\ref{app:assump:A_directional_smoothness}, \(\phi_{\bm{B}}\) is
\(L_A\)-smooth. The descent lemma gives
\[
    \phi_{\bm{B}}(\bm{A}_{t+1})
    \le
    \phi_{\bm{B}}(\bm{A}_t)
    +
    \langle \nabla_{\bm{A}}\phi_{\bm{B}}(\bm{A}_t), \bm{A}_{t+1}-\bm{A}_t\rangle
    +
    \frac{L_A}{2}\|\bm{A}_{t+1}-\bm{A}_t\|_F^2.
\]
Using the stochastic update
\[
    \bm{A}_{t+1}-\bm{A}_t=-\eta \bm{G}_t,
\]
we get
\[
    \phi_{\bm{B}}(\bm{A}_{t+1})
    \le
    \phi_{\bm{B}}(\bm{A}_t)
    -
    \eta
    \langle \nabla_{\bm{A}}\phi_{\bm{B}}(\bm{A}_t),\bm{G}_t\rangle
    +
    \frac{L_A\eta^2}{2}
    \|\bm{G}_t\|_F^2.
\]
Taking conditional expectation given \(\bm{A}_t\), and using
\[
    \mathbb{E}[\bm{G}_t\mid \bm{A}_t]
    =
    \nabla_{\bm{A}}\phi_{\bm{B}}(\bm{A}_t),
\]
we obtain
\[
    \mathbb{E}
    [
        \phi_{\bm{B}}(\bm{A}_{t+1})
        \mid \bm{A}_t
    ]
    \le
    \phi_{\bm{B}}(\bm{A}_t)
    -
    \eta
    \|\nabla_{\bm{A}}\phi_{\bm{B}}(\bm{A}_t)\|_F^2
    +
    \frac{L_A\eta^2}{2}
    \mathbb{E}
    [
        \|\bm{G}_t\|_F^2
        \mid \bm{A}_t
    ].
\]
By Assumption~\ref{assump:stochastic_gradient},
\[
\begin{aligned}
    \mathbb{E}
    [
        \|\bm{G}_t\|_F^2
        \mid \bm{A}_t
    ]
    &=
    \|\nabla_{\bm{A}}\phi_{\bm{B}}(\bm{A}_t)\|_F^2
    +
    \mathbb{E}
    \left[
        \|\bm{G}_t-\nabla_{\bm{A}}\phi_{\bm{B}}(\bm{A}_t)\|_F^2
        \mid \bm{A}_t
    \right] \\
    &\le
    \|\nabla_{\bm{A}}\phi_{\bm{B}}(\bm{A}_t)\|_F^2+\sigma^2.
\end{aligned}
\]
Therefore,
\[
    \mathbb{E}
    [
        \phi_{\bm{B}}(\bm{A}_{t+1})
        \mid \bm{A}_t
    ]
    \le
    \phi_{\bm{B}}(\bm{A}_t)
    -
    \eta
    \left(
        1-\frac{L_A\eta}{2}
    \right)
    \|\nabla_{\bm{A}}\phi_{\bm{B}}(\bm{A}_t)\|_F^2
    +
    \frac{L_A\eta^2}{2}\sigma^2.
\]
Since \(\eta\le 1/L_A\),
\[
    1-\frac{L_A\eta}{2}
    \ge
    \frac{1}{2}.
\]
Thus,
\[
    \mathbb{E}
    [
        \phi_{\bm{B}}(\bm{A}_{t+1})
        \mid \bm{A}_t
    ]
    \le
    \phi_{\bm{B}}(\bm{A}_t)
    -
    \frac{\eta}{2}
    \|\nabla_{\bm{A}}\phi_{\bm{B}}(\bm{A}_t)\|_F^2
    +
    \frac{L_A\eta^2}{2}\sigma^2.
\]
Taking total expectation gives
\[
    \mathbb{E}[\phi_{\bm{B}}(\bm{A}_{t+1})]
    \le
    \mathbb{E}[\phi_{\bm{B}}(\bm{A}_t)]
    -
    \frac{\eta}{2}
    \mathbb{E}
    \left[
        \|\nabla_{\bm{A}}\phi_{\bm{B}}(\bm{A}_t)\|_F^2
    \right]
    +
    \frac{L_A\eta^2}{2}\sigma^2.
\]
Rearranging,
\[
    \frac{\eta}{2}
    \mathbb{E}
    \left[
        \|\nabla_{\bm{A}}\phi_{\bm{B}}(\bm{A}_t)\|_F^2
    \right]
    \le
    \mathbb{E}[\phi_{\bm{B}}(\bm{A}_t)]
    -
    \mathbb{E}[\phi_{\bm{B}}(\bm{A}_{t+1})]
    +
    \frac{L_A\eta^2}{2}\sigma^2.
\]
Summing from \(t=0\) to \(T-1\) yields
\[
    \frac{\eta}{2}
    \sum_{t=0}^{T-1}
    \mathbb{E}
    \left[
        \|\nabla_{\bm{A}}\phi_{\bm{B}}(\bm{A}_t)\|_F^2
    \right]
    \le
    \phi_{\bm{B}}(\bm{A}_0)
    -
    \mathbb{E}[\phi_{\bm{B}}(\bm{A}_T)]
    +
    \frac{L_A\eta^2T}{2}\sigma^2.
\]
Using \(\phi_{\bm{B}}(\bm{A}_T)\ge \phi_{\bm{B}}^{\inf}\), we obtain
\[
    \frac{\eta}{2}
    \sum_{t=0}^{T-1}
    \mathbb{E}
    \left[
        \|\nabla_{\bm{A}}\phi_{\bm{B}}(\bm{A}_t)\|_F^2
    \right]
    \le
    \phi_{\bm{B}}(\bm{A}_0)-\phi_{\bm{B}}^{\inf}
    +
    \frac{L_A\eta^2T}{2}\sigma^2.
\]
Dividing by \(\eta T/2\) gives
\[
    \frac{1}{T}
    \sum_{t=0}^{T-1}
    \mathbb{E}
    \left[
        \|\nabla_{\bm{A}}\phi_{\bm{B}}(\bm{A}_t)\|_F^2
    \right]
    \le
    \frac{
        2(\phi_{\bm{B}}(\bm{A}_0)-\phi_{\bm{B}}^{\inf})
    }{
        \eta T
    }
    +
    L_A\eta\sigma^2.
\]
Finally, the minimum is no larger than the average:
\[
    \min_{0\le t<T}
    \mathbb{E}
    \left[
        \|\nabla_{\bm{A}}\phi_{\bm{B}}(\bm{A}_t)\|_F^2
    \right]
    \le
    \frac{1}{T}
    \sum_{t=0}^{T-1}
    \mathbb{E}
    \left[
        \|\nabla_{\bm{A}}\phi_{\bm{B}}(\bm{A}_t)\|_F^2
    \right].
\]
This proves the stated bound. Choosing \(\eta=\Theta(T^{-1/2})\), while
maintaining \(\eta\le 1/L_A\), gives the \(O(T^{-1/2})\) rate.
\end{proof}

\clearpage
\section{Experimental protocol and reproducibility}
\label{app:protocol}

\subsection{Datasets and evaluation splits}

We evaluate on Caltech101, DTD, FGVC Aircraft, Food101,
Oxford Flowers102, Oxford-IIIT Pets, and UCF101. UCF101 is
used as a static-image classification benchmark through
extracted video midframes. The few-shot settings use
$K\in\{1,4,16\}$ training examples per class and experiment
seeds $\{1,2,3\}$.

The dataset loaders sample up to $K$ training examples and
$\min(K,4)$ validation examples per class, without replacement.
Few-shot samples are cached separately for each shot count
and seed. When a class contains fewer than the requested
number of examples, the loader retains the available
examples rather than duplicating them.

For base-to-new evaluation, the implementation sorts the
original class-label identifiers and assigns the first
$\lceil C/2\rceil$ classes to the base set and the remaining
classes to the new set, where $C$ is the number of classes.
This class partition is fixed across experiment seeds.
Few-shot sampling precedes class subsampling.

Where available, the loaders use the CoOp-style
\texttt{split\_zhou} JSON files. Otherwise, they construct
splits using dataset-specific fallback procedures.
FGVC Aircraft uses its provided training, validation, and
test annotation lists. The original split JSON files,
few-shot caches, and image manifests were not recovered
in the repository audit, so the exact historical image
assignments cannot be reconstructed from the preserved
results alone.

\subsection{Model and prompt configurations}

The implementation uses the repository's OpenAI-CLIP-derived
model code with the RN50 and ViT-B/16 checkpoints.
The prompt-token width is $d=512$ for both backbones.
This width refers to the text-transformer input representation,
rather than necessarily to the final projected CLIP embedding.

Context tokens are shared across classes. The class name
follows the learned context tokens, corresponding to
\texttt{CLASS\_TOKEN\_POSITION=end}.
Dense CoOp is evaluated with context lengths $m=4$ and $m=16$.
The factorized prompt uses $m=16$ and
\begin{equation}
\label{eq:supp-factorization}
P=BA,\qquad
B\in\mathbb{R}^{m\times r},\quad
A\in\mathbb{R}^{r\times d}.
\end{equation}
The rank sweep uses $r\in\{1,2,4,8\}$.
The transfer experiments use rank 4 with ViT-B/16.
The reported base-to-new comparisons use Dense-16 and
rank-4 or rank-8 joint and fixed-orthogonal-$B$ variants.

Dense CoOp trains the context tensor. Joint factorization
trains both $A$ and $B$. Fixed-$B$ variants train only $A$,
whereas fixed-$A$ transfer variants train only $B$.
The remaining CLIP parameters are frozen.

The trainable parameter counts are $md$ for dense prompts,
$r(m+d)$ for joint factorization, $rd$ for fixed-$B$, and
$mr$ for fixed-$A$. At $m=16$, $d=512$, and $r=4$, these
counts are 8192, 2112, 2048, and 64, respectively.
Dense-4 also has 2048 trainable parameters.
Freezing a factor does not by itself eliminate the need
to retain that factor for inference.

\subsection{Training and initialization}

The preserved publication launch paths select configurations
with SGD, an initial learning rate of 0.002, and 200 training
epochs. The learning-rate schedule is cosine decay with one
epoch of constant warm-up at $10^{-5}$. The inherited SGD
settings are momentum 0.9 and weight decay $5\times10^{-4}$,
with zero dampening and Nesterov momentum disabled.
Training uses cross-entropy loss, a batch size of 32,
and eight data-loader workers. Evaluation uses a batch
size of 100.

Images are processed at $224\times224$ resolution.
Training uses random resized cropping with a default scale
range of $[0.08,1.0]$, random horizontal flipping, and
CLIP normalization. The normalization mean is
$(0.48145466,0.4578275,0.40821073)$ and its standard deviation
is $(0.26862954,0.26130258,0.27577711)$.
Evaluation resizes the shorter image side to 224 pixels
and applies a center crop. Interpolation is bicubic.

The preserved configuration defaults to FP16 computation
and final-step checkpoint selection. The base-to-new
workflow evaluates the selected checkpoint separately on
the seen and unseen class sets. Although the implementation
supports other precision modes and checkpoint-selection
rules, their availability does not establish their use
in the reported experiments. Historical resolved
configurations and invocation logs were not recovered,
so the settings above describe the preserved launch paths.

\paragraph{Dense and joint initialization.}
With an empty context-initialization phrase, dense CoOp
draws the context entries from
$\mathcal{N}(0,0.02^2)$.
For joint factorization, the sampled dense context $P_0$
is converted to FP32 for the singular value decomposition
$P_0=U\Sigma V^\top$. The factors are initialized as
\begin{equation}
\label{eq:supp-balanced-initialization}
B_0=U_r\Sigma_r^{1/2},
\qquad
A_0=\Sigma_r^{1/2}V_r^\top,
\end{equation}
and then cast to the model dtype.
Before dtype rounding, their product is the truncated-SVD
rank-$r$ approximation of $P_0$.

\paragraph{Fixed-basis initialization.}
The SVD-derived basis is the balanced factor
$U_r\Sigma_r^{1/2}$.
The Gaussian basis is sampled in FP32 and rescaled to
match the Frobenius norm of the initialized SVD basis.
The orthogonal variant applies reduced QR decomposition
to a Gaussian matrix and then applies the same norm
matching. Its columns are therefore scaled orthogonal,
rather than necessarily orthonormal.

The learned-basis variant loads the final $B$ from the
matching joint experiment. It requires an additional
joint-training stage and does not apply the random-basis
norm-matching rule. For all four fixed-basis modes,
the trainable factor is initialized by
\begin{equation}
\label{eq:supp-ls-initialization}
A_0=B_{\mathrm{fixed}}^\dagger P_0,
\end{equation}
computed in FP32 and then cast to the model dtype.

\paragraph{Transfer initialization.}
Transfer runs first construct target-seed SVD factors.
They replace the factor designated as frozen with the
source factor and retain the target initialization of
the trainable factor. Random-transfer controls obtain
their frozen factor from an SVD of a separately sampled
random dense prompt. These transfer branches do not
perform the least-squares reinitialization used in the
fixed-basis ablation.

Consequently, the variants do not generally start from
identical prompt products. Matching seed labels also
does not guarantee identical initial dense tensors
across separately launched runs, because earlier
random-number consumption may differ.

\subsection{Random seeds and statistical reporting}

Experiments use seeds $\{1,2,3\}$.
The entry point seeds Python, NumPy, PyTorch, and CUDA
random-number generators. Depending on cache availability,
the seed can affect few-shot sampling, prompt initialization,
training order, and augmentation. CUDA benchmarking is
enabled, so the implementation does not guarantee
deterministic execution.

For an individual dataset and experimental configuration,
we report the mean and sample standard deviation across
the three seeds. In the standard few-shot macro summaries,
dataset accuracies are first averaged within each seed;
the reported mean and sample standard deviation are then
computed across the three seed-level averages.

The preserved fixed-basis aggregate exports instead
summarize 21 dataset-seed observations.
The base-to-new aggregate exports summarize 21 run-level
harmonic means, with
\begin{equation}
\label{eq:supp-harmonic-mean}
H=\frac{2SU}{S+U}
\end{equation}
computed for each run before aggregation.
These pooled standard deviations include between-dataset
variation and should not be interpreted as standard
deviations across three seed-level macro averages.

No inferential significance or equivalence tests were
recovered. Reported standard deviations describe empirical
variation and do not establish statistical equivalence.

\subsection{Factor-geometry measurements}

The implementation compares the column spaces of $B$
and the row spaces of $A$. Given orthonormal bases
$Q_1$ and $Q_2$, let $\sigma_i$ be the singular values
of $Q_1^\top Q_2$, clipped to $[0,1]$, and let $k$
be the smaller retained subspace rank.
The implemented overlap is
\begin{equation}
\label{eq:supp-subspace-overlap}
\operatorname{overlap}(Q_1,Q_2)
=\frac{1}{k}\sum_{i=1}^{k}\sigma_i.
\end{equation}
This is the mean cosine of the principal angles,
rather than their mean squared cosine.
The mean principal angle is computed separately as
$k^{-1}\sum_i\arccos(\sigma_i)$ and expressed in degrees.

The comparisons concern final factors from different
experimental configurations. They are descriptive
comparisons of trained endpoints, with dependencies
between pairs that share an endpoint. They do not
measure displacement from initialization, and therefore
do not establish how little or how much $B$ changes
during training.

The archived prompt-product diagnostics require additional
care. Saved low-precision products can contain rounding
residuals that change their measured numerical rank.
In particular, the archived nominal rank-4 product
records were assigned numerical rank 16 under the
implemented threshold. Product-space diagnostics
therefore cannot be interpreted as measurements of
exact rank-4 subspaces without reconstructing the
products from the factors at suitable precision.

\subsection{Hardware, software, and computational cost}

Experiments were conducted on NVIDIA Tesla V100 GPUs with
32\,GB of device memory. The four-worker launch scripts assign
one visible GPU to each worker and distribute independent
experimental configurations across workers. This supports up to
four concurrent single-GPU training runs; it does not constitute
four-GPU distributed training of an individual model.

A snapshot collected from the cluster environment
recorded Python 3.8.20, PyTorch 2.4.1,
torchvision 0.20.0, NumPy 1.24.3, SciPy 1.10.1,
pandas 2.0.3, and Dassl 0.6.3. PyTorch reported a CUDA 11.8
build and cuDNN version 90100. This is a retrospective
environment snapshot, not an archived environment specification
from the original training jobs.

Historical per-run timing, peak allocated GPU memory, and
complete job-accounting records were not recovered.
We therefore report trainable parameter
counts without interpreting them as measured reductions in
runtime, peak memory, or total computational cost.

\clearpage
\section{Additional experimental results}
\label{app:results}

\subsection{Factorized prompt rank sweep}
\begin{table*}[h]
\centering
\scriptsize
\caption{
Fully trainable factorized CoOp across datasets, shots, ranks, and
backbones. Per-dataset accuracy of the factorization $\bm{P}=\bm{B}\bm{A}$ with both
factors trained, for all seven datasets, the 1-/4-/16-shot settings, ranks
$r\in\{1,2,4,8\}$, and the RN50 and ViT-B/16 CLIP backbones. Values are mean accuracy
(\%) $\pm$ sample standard deviation over three seeds.
}
\label{tab:factorized_coop_results}
\resizebox{\textwidth}{!}{
\begin{tabular}{lccccccccc}
\toprule
\textbf{Dataset}
& \textbf{Shots}
& \makecell{\textbf{Rank 1}\\\textbf{RN50}}
& \makecell{\textbf{Rank 1}\\\textbf{ViT-B/16}}
& \makecell{\textbf{Rank 2}\\\textbf{RN50}}
& \makecell{\textbf{Rank 2}\\\textbf{ViT-B/16}}
& \makecell{\textbf{Rank 4}\\\textbf{RN50}}
& \makecell{\textbf{Rank 4}\\\textbf{ViT-B/16}}
& \makecell{\textbf{Rank 8}\\\textbf{RN50}}
& \makecell{\textbf{Rank 8}\\\textbf{ViT-B/16}}
\\
\midrule
Caltech101 & 1
& $87.36 \pm 0.47$
& $93.25 \pm 0.70$
& $85.48 \pm 2.58$
& $92.60 \pm 1.29$
& $83.45 \pm 3.44$
& $91.48 \pm 2.36$
& $83.71 \pm 2.15$
& $91.14 \pm 1.72$
\\
& 4
& $89.94 \pm 0.07$
& $94.28 \pm 0.59$
& $89.21 \pm 0.35$
& $94.37 \pm 0.13$
& $88.74 \pm 0.45$
& $93.97 \pm 0.55$
& $88.61 \pm 1.02$
& $94.16 \pm 0.32$
\\
& 16
& $91.67 \pm 0.40$
& $95.42 \pm 0.11$
& $91.75 \pm 0.30$
& $95.59 \pm 0.18$
& $91.94 \pm 0.06$
& $95.51 \pm 0.29$
& $91.98 \pm 0.05$
& $95.58 \pm 0.15$
\\
\midrule
DTD & 1
& $43.03 \pm 2.23$
& $48.94 \pm 2.71$
& $42.57 \pm 0.79$
& $49.47 \pm 2.93$
& $41.96 \pm 0.59$
& $50.45 \pm 1.58$
& $43.42 \pm 0.34$
& $50.33 \pm 1.22$
\\
& 4
& $52.15 \pm 1.40$
& $56.34 \pm 2.03$
& $51.46 \pm 1.10$
& $58.25 \pm 2.41$
& $52.44 \pm 1.92$
& $58.51 \pm 0.91$
& $53.33 \pm 0.25$
& $58.25 \pm 1.50$
\\
& 16
& $59.28 \pm 1.74$
& $64.64 \pm 2.31$
& $61.74 \pm 1.37$
& $67.83 \pm 0.36$
& $63.20 \pm 0.75$
& $68.26 \pm 0.20$
& $63.48 \pm 0.41$
& $69.29 \pm 0.65$
\\
\midrule
FGVC Aircraft & 1
& $18.24 \pm 0.60$
& $26.41 \pm 1.23$
& $17.21 \pm 1.67$
& $28.11 \pm 0.53$
& $16.94 \pm 1.01$
& $27.95 \pm 0.93$
& $13.81 \pm 5.26$
& $26.49 \pm 1.41$
\\
& 4
& $19.88 \pm 0.54$
& $29.36 \pm 0.67$
& $15.65 \pm 7.92$
& $31.38 \pm 0.99$
& $21.37 \pm 0.49$
& $31.75 \pm 1.40$
& $22.64 \pm 0.71$
& $30.42 \pm 3.81$
\\
& 16
& $23.74 \pm 0.44$
& $34.96 \pm 1.04$
& $26.70 \pm 0.15$
& $38.17 \pm 0.12$
& $27.92 \pm 0.62$
& $40.46 \pm 0.48$
& $30.31 \pm 0.60$
& $42.06 \pm 0.11$
\\
\midrule
Food101 & 1
& $72.56 \pm 3.44$
& $84.20 \pm 0.68$
& $70.65 \pm 2.03$
& $82.10 \pm 2.09$
& $68.11 \pm 0.69$
& $80.85 \pm 1.63$
& $66.60 \pm 1.92$
& $79.79 \pm 2.36$
\\
& 4
& $76.80 \pm 0.90$
& $85.83 \pm 0.47$
& $75.30 \pm 1.52$
& $85.60 \pm 0.86$
& $72.61 \pm 0.48$
& $84.45 \pm 0.70$
& $70.34 \pm 1.45$
& $82.32 \pm 0.34$
\\
& 16
& $78.96 \pm 0.19$
& $86.87 \pm 0.13$
& $78.49 \pm 0.13$
& $86.20 \pm 0.16$
& $77.32 \pm 0.34$
& $85.78 \pm 0.15$
& $75.97 \pm 0.36$
& $85.00 \pm 0.08$
\\
\midrule
Oxford Flowers & 1
& $67.71 \pm 0.68$
& $75.45 \pm 3.93$
& $68.26 \pm 1.64$
& $76.44 \pm 5.79$
& $69.79 \pm 1.96$
& $79.32 \pm 0.79$
& $71.23 \pm 0.87$
& $80.34 \pm 1.77$
\\
& 4
& $73.27 \pm 1.49$
& $79.55 \pm 2.40$
& $76.23 \pm 2.22$
& $85.44 \pm 2.40$
& $84.59 \pm 0.13$
& $90.03 \pm 0.43$
& $86.40 \pm 1.97$
& $92.11 \pm 2.30$
\\
& 16
& $82.74 \pm 0.90$
& $90.68 \pm 0.65$
& $88.69 \pm 0.48$
& $94.63 \pm 0.19$
& $92.39 \pm 0.43$
& $96.20 \pm 0.29$
& $93.92 \pm 0.14$
& $96.67 \pm 0.32$
\\
\midrule
Oxford Pets & 1
& $86.43 \pm 2.38$
& $90.67 \pm 0.57$
& $83.00 \pm 2.11$
& $90.86 \pm 0.41$
& $82.87 \pm 1.41$
& $91.27 \pm 0.44$
& $81.49 \pm 1.53$
& $90.59 \pm 1.30$
\\
& 4
& $87.38 \pm 1.46$
& $91.80 \pm 1.22$
& $85.15 \pm 1.68$
& $92.32 \pm 0.50$
& $84.36 \pm 1.65$
& $91.81 \pm 0.50$
& $84.70 \pm 2.71$
& $91.78 \pm 0.76$
\\
& 16
& $88.24 \pm 0.57$
& $92.81 \pm 0.53$
& $87.82 \pm 0.40$
& $93.05 \pm 0.52$
& $87.48 \pm 0.88$
& $92.48 \pm 0.57$
& $86.63 \pm 1.00$
& $92.41 \pm 0.55$
\\
\midrule
UCF101 & 1
& $62.03 \pm 0.51$
& $70.11 \pm 0.78$
& $60.27 \pm 1.08$
& $69.60 \pm 0.42$
& $60.35 \pm 0.71$
& $69.13 \pm 0.53$
& $57.52 \pm 2.18$
& $68.81 \pm 2.57$
\\
& 4
& $67.22 \pm 0.74$
& $74.77 \pm 0.73$
& $67.18 \pm 1.11$
& $77.22 \pm 0.46$
& $67.83 \pm 0.58$
& $76.69 \pm 0.37$
& $67.27 \pm 0.58$
& $76.01 \pm 0.33$
\\
& 16
& $70.61 \pm 0.70$
& $78.89 \pm 0.89$
& $73.38 \pm 0.78$
& $80.53 \pm 1.64$
& $75.35 \pm 0.94$
& $81.86 \pm 1.32$
& $74.76 \pm 1.03$
& $82.20 \pm 0.26$
\\
\bottomrule
\end{tabular}
}
\end{table*}

\begin{table*}[h]
\centering
\scriptsize
\caption{
Best factorized CoOp accuracy and deltas. For each dataset and shot setting,
the highest factorized CoOp accuracy across ranks and backbones, with the
rank/backbone that attains it. $\Delta_{\text{Rank4}-\text{Best}}$ is the rank-4
factorization minus this best; $\Delta_{\text{Best}-\text{Ctx4}}$ and
$\Delta_{\text{Best}-\text{Ctx16}}$ are the best minus dense CoOp with 4 and 16
context tokens, respectively. Accuracies are mean (\%) $\pm$ sample standard deviation
over three seeds; deltas are in percentage points
}
\label{tab:factorized_best_delta}
\resizebox{0.9\textwidth}{!}{
\begin{tabular}{llccccc}
\toprule
\textbf{Dataset}
& \textbf{Shots}
& \makecell{\textbf{Best Factorized}\\\textbf{CoOp}}
& \makecell{\textbf{Best}\\\textbf{Rank/Backbone}}
& \makecell{\(\Delta\)\\\textbf{Rank4--Best}}
& \makecell{\(\Delta\)\\\textbf{Best--Ctx4}}
& \makecell{\(\Delta\)\\\textbf{Best--Ctx16}}
\\
\midrule
\multirow{3}{*}{Caltech101}
& 1
& $93.25 \pm 0.70$
& {\scriptsize Rank 1, ViT-B/16}
& $-1.77$
& $+2.75$
& $+3.79$
\\
& 4
& $94.37 \pm 0.13$
& {\scriptsize Rank 2, ViT-B/16}
& $-0.41$
& $+0.51$
& $+0.21$
\\
& 16
& $95.59 \pm 0.18$
& {\scriptsize Rank 2, ViT-B/16}
& $-0.08$
& $+0.26$
& $+0.12$
\\
\midrule
\multirow{3}{*}{DTD}
& 1
& $50.45 \pm 1.58$
& {\scriptsize Rank 4, ViT-B/16}
& $+0.00$
& $+2.05$
& $+1.69$
\\
& 4
& $58.51 \pm 0.91$
& {\scriptsize Rank 4, ViT-B/16}
& $+0.00$
& $-0.96$
& $-0.72$
\\
& 16
& $69.29 \pm 0.65$
& {\scriptsize Rank 8, ViT-B/16}
& $-1.02$
& $+0.65$
& $+0.19$
\\
\midrule
\multirow{3}{*}{FGVC Aircraft}
& 1
& $28.11 \pm 0.53$
& {\scriptsize Rank 2, ViT-B/16}
& $-0.16$
& $+1.65$
& $+4.55$
\\
& 4
& $31.75 \pm 1.40$
& {\scriptsize Rank 4, ViT-B/16}
& $+0.00$
& $-0.21$
& $-0.31$
\\
& 16
& $42.06 \pm 0.11$
& {\scriptsize Rank 8, ViT-B/16}
& $-1.60$
& $+1.80$
& $-1.80$
\\
\midrule
\multirow{3}{*}{Food101}
& 1
& $84.20 \pm 0.68$
& {\scriptsize Rank 1, ViT-B/16}
& $-3.35$
& $+7.80$
& $+5.84$
\\
& 4
& $85.83 \pm 0.47$
& {\scriptsize Rank 1, ViT-B/16}
& $-1.39$
& $+3.83$
& $+3.77$
\\
& 16
& $86.87 \pm 0.13$
& {\scriptsize Rank 1, ViT-B/16}
& $-1.09$
& $+1.77$
& $+2.53$
\\
\midrule
\multirow{3}{*}{Oxford Flowers}
& 1
& $80.34 \pm 1.77$
& {\scriptsize Rank 8, ViT-B/16}
& $-1.02$
& $+1.10$
& $-1.00$
\\
& 4
& $92.11 \pm 2.30$
& {\scriptsize Rank 8, ViT-B/16}
& $-2.08$
& $+2.44$
& $-0.52$
\\
& 16
& $96.67 \pm 0.32$
& {\scriptsize Rank 8, ViT-B/16}
& $-0.47$
& $+0.14$
& $-0.40$
\\
\midrule
\multirow{3}{*}{Oxford Pets}
& 1
& $91.27 \pm 0.44$
& {\scriptsize Rank 4, ViT-B/16}
& $+0.00$
& $+3.54$
& $+3.20$
\\
& 4
& $92.32 \pm 0.50$
& {\scriptsize Rank 2, ViT-B/16}
& $-0.51$
& $+0.82$
& $+1.82$
\\
& 16
& $93.05 \pm 0.52$
& {\scriptsize Rank 2, ViT-B/16}
& $-0.57$
& $+0.72$
& $+1.15$
\\
\midrule
\multirow{3}{*}{UCF101}
& 1
& $70.11 \pm 0.78$
& {\scriptsize Rank 1, ViT-B/16}
& $-0.98$
& $+1.88$
& $+2.15$
\\
& 4
& $77.22 \pm 0.46$
& {\scriptsize Rank 2, ViT-B/16}
& $-0.53$
& $-0.04$
& $+1.02$
\\
& 16
& $82.20 \pm 0.26$
& {\scriptsize Rank 8, ViT-B/16}
& $-0.34$
& $+0.33$
& $+0.03$
\\
\bottomrule
\end{tabular}
}
\end{table*}

\clearpage
\subsection{Factor-transfer results}
\begin{table*}[h]
\centering
\scriptsize
\caption{
Per-pair transfer results for ViT-B/16 rank-4 factorized prompts. For each
ordered source--target pair, one factor is trained on the target while the other is
frozen, giving four modes: freeze $\bm{A}$/train $\bm{B}$ and freeze $\bm{B}$/train
$\bm{A}$, each with the frozen factor either carried from a source-trained model or
drawn at random. Values are mean accuracy (\%) $\pm$ sample standard deviation over
three seeds; the best mode per pair is in \textbf{bold}.
}
\label{tab:transfer_four_modes}
\resizebox{\textwidth}{!}{
\begin{tabular}{llcccc}
\toprule
\textbf{Target Dataset}
& \textbf{Source Dataset}
& \makecell{\textbf{Freeze \(A\),}\\\textbf{Train \(B\)}}
& \makecell{\textbf{Freeze \(B\),}\\\textbf{Train \(A\)}}
& \makecell{\textbf{Freeze Random \(A\),}\\\textbf{Train \(B\)}}
& \makecell{\textbf{Freeze Random \(B\),}\\\textbf{Train \(A\)}}
\\
\midrule
Caltech101
& Oxford Pets
& $94.12 \pm 0.44$
& $\mathbf{95.52 \pm 0.26}$
& $92.51 \pm 0.88$
& $95.51 \pm 0.27$
\\
Caltech101
& UCF101
& $93.38 \pm 0.85$
& $\mathbf{95.52 \pm 0.21}$
& $92.51 \pm 0.88$
& $95.51 \pm 0.27$
\\
\midrule
DTD
& FGVC Aircraft
& $46.55 \pm 1.79$
& $\mathbf{68.86 \pm 0.65}$
& $47.61 \pm 1.56$
& $68.51 \pm 0.80$
\\
DTD
& Oxford Flowers
& $46.53 \pm 1.81$
& $\mathbf{68.83 \pm 0.77}$
& $47.61 \pm 1.56$
& $68.51 \pm 0.80$
\\
\midrule
FGVC Aircraft
& DTD
& $13.28 \pm 10.77$
& $\mathbf{40.07 \pm 0.62}$
& $26.84 \pm 0.46$
& $39.78 \pm 0.54$
\\
FGVC Aircraft
& Oxford Pets
& $17.02 \pm 8.18$
& $\mathbf{40.02 \pm 0.82}$
& $26.84 \pm 0.46$
& $39.78 \pm 0.54$
\\
\midrule
Food101
& Oxford Flowers
& $86.09 \pm 0.90$
& $85.50 \pm 0.27$
& $\mathbf{86.44 \pm 0.30}$
& $85.80 \pm 0.27$
\\
Food101
& UCF101
& $\mathbf{86.47 \pm 0.21}$
& $85.82 \pm 0.22$
& $86.44 \pm 0.30$
& $85.80 \pm 0.27$
\\
\midrule
Oxford Flowers
& DTD
& $70.74 \pm 1.74$
& $\mathbf{96.03 \pm 0.36}$
& $70.12 \pm 0.92$
& $95.96 \pm 0.31$
\\
Oxford Flowers
& Food101
& $70.76 \pm 0.70$
& $\mathbf{96.15 \pm 0.35}$
& $70.12 \pm 0.92$
& $95.96 \pm 0.31$
\\
Oxford Flowers
& Oxford Pets
& $70.22 \pm 0.80$
& $\mathbf{96.07 \pm 0.35}$
& $70.12 \pm 0.92$
& $95.96 \pm 0.31$
\\
\midrule
Oxford Pets
& FGVC Aircraft
& $91.22 \pm 0.78$
& $\mathbf{92.63 \pm 0.38}$
& $91.48 \pm 0.75$
& $92.53 \pm 0.71$
\\
Oxford Pets
& Oxford Flowers
& $91.53 \pm 0.74$
& $\mathbf{92.72 \pm 0.48}$
& $91.48 \pm 0.75$
& $92.53 \pm 0.71$
\\
Oxford Pets
& Caltech101
& $91.55 \pm 0.94$
& $92.39 \pm 0.73$
& $91.48 \pm 0.75$
& $\mathbf{92.53 \pm 0.71}$
\\
\midrule
UCF101
& Caltech101
& $69.69 \pm 1.40$
& $\mathbf{81.88 \pm 0.42}$
& $69.20 \pm 1.13$
& $81.56 \pm 0.58$
\\
UCF101
& Food101
& $70.56 \pm 0.91$
& $\mathbf{81.66 \pm 0.71}$
& $69.20 \pm 1.13$
& $81.56 \pm 0.58$
\\
\bottomrule
\end{tabular}
}
\end{table*}

\begin{table*}[h]
\centering
\scriptsize
\caption{
Best transfer result per source--target pair (ViT-B/16, rank 4).
$\Delta_{\text{Rand}-\text{Freeze}}$ is the random-frozen mode minus its
source-frozen counterpart; $\Delta_{\text{Best}-\text{Dense}}$ and
$\Delta_{\text{Best}-\text{Factorized}}$ compare the best transfer result against
dense CoOp and the best from-scratch factorized model trained on the same target.
Accuracies are mean (\%) $\pm$ sample standard deviation over three seeds; deltas are
in percentage points.
}
\label{tab:transfer_best_delta}
\resizebox{\textwidth}{!}{
\begin{tabular}{llccccc}
\toprule
\textbf{Target Dataset}
& \textbf{Source Dataset}
& \makecell{\textbf{Best Transfer}\\\textbf{Result}}
& \makecell{\textbf{Best}\\\textbf{Mode}}
& \makecell{\(\Delta\)\\\textbf{Rand--Freeze}}
& \makecell{\(\Delta\)\\\textbf{Best--Dense}}
& \makecell{\(\Delta\)\\\textbf{Best--Factorized}}
\\
\midrule
Caltech101
& Oxford Pets
& $95.52 \pm 0.26$
& {\scriptsize Freeze \(B\), Train \(A\)}
& $-0.01$
& $+0.05$
& $-0.07$
\\
Caltech101
& UCF101
& $95.52 \pm 0.21$
& {\scriptsize Freeze \(B\), Train \(A\)}
& $-0.01$
& $+0.05$
& $-0.07$
\\
\midrule
DTD
& FGVC Aircraft
& $68.86 \pm 0.65$
& {\scriptsize Freeze \(B\), Train \(A\)}
& $-0.35$
& $-0.24$
& $-0.43$
\\
DTD
& Oxford Flowers
& $68.83 \pm 0.77$
& {\scriptsize Freeze \(B\), Train \(A\)}
& $-0.32$
& $-0.27$
& $-0.46$
\\
\midrule
FGVC Aircraft
& DTD
& $40.07 \pm 0.62$
& {\scriptsize Freeze \(B\), Train \(A\)}
& $-0.28$
& $-3.80$
& $-2.00$
\\
FGVC Aircraft
& Oxford Pets
& $40.02 \pm 0.82$
& {\scriptsize Freeze \(B\), Train \(A\)}
& $-0.24$
& $-3.84$
& $-2.04$
\\
\midrule
Food101
& Oxford Flowers
& $86.44 \pm 0.30$
& {\scriptsize Freeze Random \(A\), Train \(B\)}
& $+0.30$
& $+1.34$
& $-0.43$
\\
Food101
& UCF101
& $86.47 \pm 0.21$
& {\scriptsize Freeze \(A\), Train \(B\)}
& $-0.02$
& $+1.37$
& $-0.40$
\\
\midrule
Oxford Flowers
& DTD
& $96.03 \pm 0.36$
& {\scriptsize Freeze \(B\), Train \(A\)}
& $-0.07$
& $-1.04$
& $-0.64$
\\
Oxford Flowers
& Food101
& $96.15 \pm 0.35$
& {\scriptsize Freeze \(B\), Train \(A\)}
& $-0.19$
& $-0.91$
& $-0.52$
\\
Oxford Flowers
& Oxford Pets
& $96.07 \pm 0.35$
& {\scriptsize Freeze \(B\), Train \(A\)}
& $-0.11$
& $-1.00$
& $-0.60$
\\
\midrule
Oxford Pets
& FGVC Aircraft
& $92.63 \pm 0.38$
& {\scriptsize Freeze \(B\), Train \(A\)}
& $-0.10$
& $+0.30$
& $-0.42$
\\
Oxford Pets
& Oxford Flowers
& $92.72 \pm 0.48$
& {\scriptsize Freeze \(B\), Train \(A\)}
& $-0.19$
& $+0.39$
& $-0.33$
\\
Oxford Pets
& Caltech101
& $92.53 \pm 0.71$
& {\scriptsize Freeze Random \(B\), Train \(A\)}
& $+0.14$
& $+0.20$
& $-0.52$
\\
\midrule
UCF101
& Caltech101
& $81.88 \pm 0.42$
& {\scriptsize Freeze \(B\), Train \(A\)}
& $-0.31$
& $-0.29$
& $-0.33$
\\
UCF101
& Food101
& $81.66 \pm 0.71$
& {\scriptsize Freeze \(B\), Train \(A\)}
& $-0.10$
& $-0.51$
& $-0.54$
\\
\bottomrule
\end{tabular}
}
\end{table*}

\clearpage
\subsection{Fixed-basis ablations}


\begin{table*}[h]
\centering
\scriptsize
\caption{
Token-side basis ablation, RN50 at rank 1. Mean accuracy (\%) over seven
datasets for the trainable factorization (both factors learned) and four
fixed-$\bm{B}$ variants that freeze the token-side basis and train only $\bm{A}$:
Gaussian, learned-then-frozen, orthogonal, and SVD-derived bases. Dense CoOp is shown
as a reference. Values are mean $\pm$ sample standard deviation over three seeds;
\textbf{bold} marks the best mean among the trainable and fixed-$\bm{B}$ variants.
}
\label{tab:b_ablation_rank1_rn50}
\resizebox{\textwidth}{!}{%
\begin{tabular}{llcccccc}
\toprule
\textbf{Dataset} & \textbf{Shots} & \makecell{\textbf{Dense CoOp}\\\textbf{Best}} & \makecell{\textbf{Trainable}\\\textbf{\(BA\)}} & \makecell{\textbf{Gaussian}\\\textbf{\(B\)}} & \makecell{\textbf{Learned}\\\textbf{\(B\)}} & \makecell{\textbf{Orthogonal}\\\textbf{\(B\)}} & \makecell{\textbf{SVD}\\\textbf{\(B\)}} \\
\midrule
\multirow{3}{*}{Caltech101} & 1 & $83.90 \pm 3.20$ & $87.36 \pm 0.47$ & $87.52 \pm 1.00$ & $87.76 \pm 1.89$ & $87.25 \pm 1.01$ & $\mathbf{87.87} \pm \mathbf{0.25}$ \\
 & 4 & $88.63 \pm 0.49$ & $89.94 \pm 0.07$ & $90.01 \pm 0.21$ & $90.25 \pm 0.12$ & $\mathbf{90.29} \pm \mathbf{0.15}$ & $89.82 \pm 0.15$ \\
 & 16 & $91.83 \pm 0.38$ & $\mathbf{91.67} \pm \mathbf{0.40}$ & $91.35 \pm 0.23$ & $91.16 \pm 0.18$ & $91.35 \pm 0.47$ & $91.14 \pm 0.31$ \\
\midrule
\multirow{3}{*}{DTD} & 1 & $42.93 \pm 1.18$ & $\mathbf{43.03} \pm \mathbf{2.23}$ & $42.49 \pm 1.92$ & $42.10 \pm 0.45$ & $42.59 \pm 1.66$ & $42.43 \pm 2.54$ \\
 & 4 & $52.90 \pm 0.30$ & $\mathbf{52.15} \pm \mathbf{1.40}$ & $51.00 \pm 1.53$ & $51.64 \pm 0.89$ & $50.33 \pm 2.39$ & $50.95 \pm 1.24$ \\
 & 16 & $63.03 \pm 1.04$ & $59.28 \pm 1.74$ & $58.73 \pm 0.65$ & $59.30 \pm 1.77$ & $58.79 \pm 0.92$ & $\mathbf{59.89} \pm \mathbf{1.90}$ \\
\midrule
\multirow{3}{*}{FGVC Aircraft} & 1 & $18.17 \pm 1.50$ & $\mathbf{18.24} \pm \mathbf{0.60}$ & $17.63 \pm 0.57$ & $17.29 \pm 0.21$ & $17.74 \pm 0.45$ & $18.23 \pm 0.35$ \\
 & 4 & $22.60 \pm 1.05$ & $19.88 \pm 0.54$ & $20.65 \pm 1.01$ & $19.31 \pm 1.58$ & $20.87 \pm 1.30$ & $\mathbf{21.00} \pm \mathbf{0.34}$ \\
 & 16 & $31.30 \pm 0.62$ & $23.74 \pm 0.44$ & $24.00 \pm 0.21$ & $\mathbf{24.02} \pm \mathbf{0.89}$ & $23.84 \pm 0.22$ & $23.30 \pm 0.75$ \\
\midrule
\multirow{3}{*}{Food101} & 1 & $63.70 \pm 0.52$ & $72.56 \pm 3.44$ & $73.89 \pm 2.08$ & $73.99 \pm 1.24$ & $\mathbf{74.08} \pm \mathbf{2.00}$ & $74.05 \pm 2.34$ \\
 & 4 & $70.07 \pm 3.51$ & $76.80 \pm 0.90$ & $77.37 \pm 0.30$ & $76.37 \pm 1.59$ & $\mathbf{77.46} \pm \mathbf{0.20}$ & $76.70 \pm 0.78$ \\
 & 16 & $76.17 \pm 0.31$ & $78.96 \pm 0.19$ & $78.81 \pm 0.14$ & $\mathbf{78.99} \pm \mathbf{0.31}$ & $78.79 \pm 0.08$ & $78.89 \pm 0.26$ \\
\midrule
\multirow{3}{*}{Oxford Flowers} & 1 & $70.93 \pm 1.96$ & $67.71 \pm 0.68$ & $\mathbf{68.80} \pm \mathbf{1.77}$ & $66.40 \pm 0.56$ & $68.18 \pm 0.82$ & $68.02 \pm 1.04$ \\
 & 4 & $87.13 \pm 2.04$ & $73.27 \pm 1.49$ & $73.88 \pm 2.12$ & $\mathbf{74.90} \pm \mathbf{1.15}$ & $74.00 \pm 2.35$ & $72.91 \pm 1.09$ \\
 & 16 & $94.70 \pm 0.10$ & $\mathbf{82.74} \pm \mathbf{0.90}$ & $80.39 \pm 2.20$ & $81.30 \pm 0.43$ & $81.17 \pm 1.53$ & $81.39 \pm 1.05$ \\
\midrule
\multirow{3}{*}{Oxford Pets} & 1 & $79.43 \pm 0.81$ & $86.43 \pm 2.38$ & $87.26 \pm 0.82$ & $86.54 \pm 1.38$ & $87.31 \pm 0.57$ & $\mathbf{87.78} \pm \mathbf{0.61}$ \\
 & 4 & $85.70 \pm 1.13$ & $87.38 \pm 1.46$ & $88.07 \pm 1.14$ & $87.58 \pm 0.95$ & $88.03 \pm 0.75$ & $\mathbf{88.10} \pm \mathbf{0.99}$ \\
 & 16 & $87.40 \pm 0.69$ & $88.24 \pm 0.57$ & $88.28 \pm 0.96$ & $88.18 \pm 0.57$ & $88.34 \pm 0.76$ & $\mathbf{88.88} \pm \mathbf{1.02}$ \\
\midrule
\multirow{3}{*}{UCF101} & 1 & $58.20 \pm 3.04$ & $62.03 \pm 0.51$ & $62.28 \pm 0.23$ & $62.04 \pm 1.49$ & $\mathbf{62.60} \pm \mathbf{0.68}$ & $62.49 \pm 0.81$ \\
 & 4 & $67.90 \pm 0.44$ & $67.22 \pm 0.74$ & $\mathbf{67.42} \pm \mathbf{0.33}$ & $65.86 \pm 0.89$ & $67.18 \pm 0.33$ & $66.01 \pm 1.40$ \\
 & 16 & $75.63 \pm 0.40$ & $\mathbf{70.61} \pm \mathbf{0.70}$ & $69.77 \pm 0.89$ & $70.23 \pm 0.11$ & $70.17 \pm 0.22$ & $69.70 \pm 0.69$ \\
\bottomrule
\end{tabular}%
}
\end{table*}

\begin{table*}[h]
\centering
\scriptsize
\caption{
Token-side basis ablation, ViT-B/16 at rank 1. Mean accuracy (\%) over seven
datasets for the trainable factorization (both factors learned) and four
fixed-$\bm{B}$ variants that freeze the token-side basis and train only $\bm{A}$:
Gaussian, learned-then-frozen, orthogonal, and SVD-derived bases. Dense CoOp is shown
as a reference. Values are mean $\pm$ sample standard deviation over three seeds;
\textbf{bold} marks the best mean among the trainable and fixed-$\bm{B}$ variants.
}
\label{tab:b_ablation_rank1_vitb16}
\resizebox{\textwidth}{!}{%
\begin{tabular}{llcccccc}
\toprule
\textbf{Dataset} & \textbf{Shots} & \makecell{\textbf{Dense CoOp}\\\textbf{Best}} & \makecell{\textbf{Trainable}\\\textbf{\(BA\)}} & \makecell{\textbf{Gaussian}\\\textbf{\(B\)}} & \makecell{\textbf{Learned}\\\textbf{\(B\)}} & \makecell{\textbf{Orthogonal}\\\textbf{\(B\)}} & \makecell{\textbf{SVD}\\\textbf{\(B\)}} \\
\midrule
\multirow{3}{*}{Caltech101} & 1 & $90.50 \pm 2.07$ & $93.25 \pm 0.70$ & $\mathbf{93.36} \pm \mathbf{0.55}$ & $92.58 \pm 2.12$ & $\mathbf{93.36} \pm \mathbf{0.55}$ & $92.74 \pm 0.88$ \\
 & 4 & $94.17 \pm 0.15$ & $94.28 \pm 0.59$ & $94.47 \pm 0.38$ & $94.48 \pm 0.19$ & $94.47 \pm 0.38$ & $\mathbf{94.73} \pm \mathbf{0.21}$ \\
 & 16 & $95.47 \pm 0.06$ & $\mathbf{95.42} \pm \mathbf{0.11}$ & $95.21 \pm 0.37$ & $95.13 \pm 0.16$ & $95.21 \pm 0.37$ & $94.83 \pm 0.06$ \\
\midrule
\multirow{3}{*}{DTD} & 1 & $48.77 \pm 1.36$ & $48.94 \pm 2.71$ & $48.84 \pm 2.83$ & $49.78 \pm 2.24$ & $48.84 \pm 2.83$ & $\mathbf{50.81} \pm \mathbf{0.48}$ \\
 & 4 & $59.47 \pm 0.67$ & $56.34 \pm 2.03$ & $57.21 \pm 2.03$ & $\mathbf{57.70} \pm \mathbf{1.07}$ & $57.21 \pm 2.03$ & $56.74 \pm 1.78$ \\
 & 16 & $69.10 \pm 0.44$ & $64.64 \pm 2.31$ & $\mathbf{65.86} \pm \mathbf{1.45}$ & $64.52 \pm 0.59$ & $\mathbf{65.86} \pm \mathbf{1.45}$ & $65.03 \pm 0.65$ \\
\midrule
\multirow{3}{*}{FGVC Aircraft} & 1 & $26.47 \pm 2.49$ & $26.41 \pm 1.23$ & $26.65 \pm 1.64$ & $24.77 \pm 2.16$ & $26.65 \pm 1.64$ & $\mathbf{27.29} \pm \mathbf{1.20}$ \\
 & 4 & $32.07 \pm 1.63$ & $29.36 \pm 0.67$ & $\mathbf{29.61} \pm \mathbf{1.42}$ & $24.20 \pm 8.21$ & $\mathbf{29.61} \pm \mathbf{1.42}$ & $28.17 \pm 0.06$ \\
 & 16 & $43.87 \pm 0.96$ & $\mathbf{34.96} \pm \mathbf{1.04}$ & $34.71 \pm 0.45$ & $34.55 \pm 0.40$ & $34.71 \pm 0.45$ & $34.76 \pm 0.47$ \\
\midrule
\multirow{3}{*}{Food101} & 1 & $78.37 \pm 1.66$ & $\mathbf{84.20} \pm \mathbf{0.68}$ & $82.69 \pm 3.02$ & $83.42 \pm 1.38$ & $82.69 \pm 3.02$ & $84.15 \pm 1.77$ \\
 & 4 & $82.07 \pm 1.67$ & $85.83 \pm 0.47$ & $84.67 \pm 0.76$ & $85.89 \pm 0.96$ & $84.67 \pm 0.76$ & $\mathbf{86.26} \pm \mathbf{0.23}$ \\
 & 16 & $85.10 \pm 0.10$ & $86.87 \pm 0.13$ & $\mathbf{87.02} \pm \mathbf{0.11}$ & $86.86 \pm 0.09$ & $\mathbf{87.02} \pm \mathbf{0.11}$ & $86.92 \pm 0.05$ \\
\midrule
\multirow{3}{*}{Oxford Flowers} & 1 & $81.33 \pm 0.35$ & $\mathbf{75.45} \pm \mathbf{3.93}$ & $74.42 \pm 2.73$ & $72.66 \pm 2.26$ & $74.42 \pm 2.73$ & $74.18 \pm 3.00$ \\
 & 4 & $92.63 \pm 1.11$ & $79.55 \pm 2.40$ & $\mathbf{82.27} \pm \mathbf{1.98}$ & $78.74 \pm 1.83$ & $\mathbf{82.27} \pm \mathbf{1.98}$ & $79.52 \pm 2.03$ \\
 & 16 & $97.07 \pm 0.38$ & $\mathbf{90.68} \pm \mathbf{0.65}$ & $90.34 \pm 1.52$ & $89.65 \pm 0.81$ & $90.34 \pm 1.52$ & $88.89 \pm 1.24$ \\
\midrule
\multirow{3}{*}{Oxford Pets} & 1 & $88.07 \pm 1.67$ & $90.67 \pm 0.57$ & $\mathbf{91.21} \pm \mathbf{0.59}$ & $90.32 \pm 0.23$ & $\mathbf{91.21} \pm \mathbf{0.59}$ & $90.86 \pm 0.13$ \\
 & 4 & $91.50 \pm 1.30$ & $91.80 \pm 1.22$ & $92.30 \pm 0.25$ & $92.22 \pm 0.96$ & $92.30 \pm 0.25$ & $\mathbf{92.32} \pm \mathbf{0.70}$ \\
 & 16 & $92.33 \pm 0.83$ & $92.81 \pm 0.53$ & $\mathbf{92.92} \pm \mathbf{0.49}$ & $92.75 \pm 0.49$ & $\mathbf{92.92} \pm \mathbf{0.49}$ & $92.84 \pm 0.07$ \\
\midrule
\multirow{3}{*}{UCF101} & 1 & $68.23 \pm 1.30$ & $70.11 \pm 0.78$ & $69.90 \pm 1.25$ & $\mathbf{70.19} \pm \mathbf{2.51}$ & $69.90 \pm 1.25$ & $69.96 \pm 1.12$ \\
 & 4 & $77.27 \pm 0.74$ & $74.77 \pm 0.73$ & $\mathbf{75.73} \pm \mathbf{0.85}$ & $75.44 \pm 1.19$ & $\mathbf{75.73} \pm \mathbf{0.85}$ & $75.30 \pm 0.62$ \\
 & 16 & $82.17 \pm 0.67$ & $78.89 \pm 0.89$ & $79.08 \pm 0.80$ & $78.02 \pm 0.96$ & $79.08 \pm 0.80$ & $\mathbf{79.10} \pm \mathbf{0.44}$ \\
\bottomrule
\end{tabular}%
}
\end{table*}

\begin{table*}[h]
\centering
\scriptsize
\caption{
Token-side basis ablation, RN50 at rank 2. Mean accuracy (\%) over seven
datasets for the trainable factorization (both factors learned) and four
fixed-$\bm{B}$ variants that freeze the token-side basis and train only $\bm{A}$:
Gaussian, learned-then-frozen, orthogonal, and SVD-derived bases. Dense CoOp is shown
as a reference. Values are mean $\pm$ sample standard deviation over three seeds;
\textbf{bold} marks the best mean among the trainable and fixed-$\bm{B}$ variants.
}
\label{tab:b_ablation_rank2_rn50}
\resizebox{\textwidth}{!}{%
\begin{tabular}{llcccccc}
\toprule
\textbf{Dataset} & \textbf{Shots} & \makecell{\textbf{Dense CoOp}\\\textbf{Best}} & \makecell{\textbf{Trainable}\\\textbf{\(BA\)}} & \makecell{\textbf{Gaussian}\\\textbf{\(B\)}} & \makecell{\textbf{Learned}\\\textbf{\(B\)}} & \makecell{\textbf{Orthogonal}\\\textbf{\(B\)}} & \makecell{\textbf{SVD}\\\textbf{\(B\)}} \\
\midrule
\multirow{3}{*}{Caltech101} & 1 & $83.90 \pm 3.20$ & $85.48 \pm 2.58$ & $\mathbf{87.57} \pm \mathbf{1.54}$ & $85.57 \pm 0.77$ & $87.10 \pm 1.20$ & $85.37 \pm 0.40$ \\
 & 4 & $88.63 \pm 0.49$ & $89.21 \pm 0.35$ & $\mathbf{90.07} \pm \mathbf{0.38}$ & $89.36 \pm 0.95$ & $88.92 \pm 0.99$ & $89.18 \pm 0.81$ \\
 & 16 & $91.83 \pm 0.38$ & $91.75 \pm 0.30$ & $91.39 \pm 0.12$ & $\mathbf{91.81} \pm \mathbf{0.35}$ & $91.39 \pm 0.19$ & $91.14 \pm 0.13$ \\
\midrule
\multirow{3}{*}{DTD} & 1 & $42.93 \pm 1.18$ & $\mathbf{42.57} \pm \mathbf{0.79}$ & $40.94 \pm 0.25$ & $42.06 \pm 1.10$ & $42.42 \pm 0.33$ & $41.78 \pm 1.69$ \\
 & 4 & $52.90 \pm 0.30$ & $51.46 \pm 1.10$ & $53.39 \pm 1.01$ & $50.77 \pm 1.38$ & $\mathbf{53.59} \pm \mathbf{0.74}$ & $51.26 \pm 1.97$ \\
 & 16 & $63.03 \pm 1.04$ & $61.74 \pm 1.37$ & $61.76 \pm 0.67$ & $\mathbf{62.55} \pm \mathbf{1.11}$ & $62.43 \pm 0.69$ & $62.16 \pm 1.95$ \\
\midrule
\multirow{3}{*}{FGVC Aircraft} & 1 & $18.17 \pm 1.50$ & $17.21 \pm 1.67$ & $17.31 \pm 0.83$ & $\mathbf{17.93} \pm \mathbf{1.57}$ & $17.02 \pm 0.32$ & $17.22 \pm 1.09$ \\
 & 4 & $22.60 \pm 1.05$ & $15.65 \pm 7.92$ & $16.85 \pm 7.05$ & $18.53 \pm 5.55$ & $\mathbf{20.17} \pm \mathbf{1.01}$ & $16.71 \pm 5.65$ \\
 & 16 & $31.30 \pm 0.62$ & $\mathbf{26.70} \pm \mathbf{0.15}$ & $25.96 \pm 0.28$ & $26.44 \pm 0.67$ & $25.42 \pm 0.84$ & $25.89 \pm 0.55$ \\
\midrule
\multirow{3}{*}{Food101} & 1 & $63.70 \pm 0.52$ & $70.65 \pm 2.03$ & $71.38 \pm 1.84$ & $\mathbf{72.99} \pm \mathbf{3.60}$ & $71.63 \pm 1.78$ & $70.63 \pm 2.72$ \\
 & 4 & $70.07 \pm 3.51$ & $\mathbf{75.30} \pm \mathbf{1.52}$ & $74.33 \pm 1.44$ & $74.18 \pm 0.88$ & $74.69 \pm 1.66$ & $74.63 \pm 1.13$ \\
 & 16 & $76.17 \pm 0.31$ & $78.49 \pm 0.13$ & $78.47 \pm 0.21$ & $\mathbf{78.61} \pm \mathbf{0.14}$ & $78.59 \pm 0.15$ & $78.33 \pm 0.19$ \\
\midrule
\multirow{3}{*}{Oxford Flowers} & 1 & $70.93 \pm 1.96$ & $68.26 \pm 1.64$ & $69.14 \pm 1.85$ & $\mathbf{69.29} \pm \mathbf{1.10}$ & $68.59 \pm 0.59$ & $69.01 \pm 0.93$ \\
 & 4 & $87.13 \pm 2.04$ & $76.23 \pm 2.22$ & $\mathbf{79.16} \pm \mathbf{2.81}$ & $78.16 \pm 1.74$ & $77.59 \pm 3.06$ & $76.98 \pm 0.94$ \\
 & 16 & $94.70 \pm 0.10$ & $\mathbf{88.69} \pm \mathbf{0.48}$ & $\mathbf{88.69} \pm \mathbf{0.47}$ & $87.81 \pm 0.27$ & $88.20 \pm 0.09$ & $87.72 \pm 0.55$ \\
\midrule
\multirow{3}{*}{Oxford Pets} & 1 & $79.43 \pm 0.81$ & $83.00 \pm 2.11$ & $84.55 \pm 1.57$ & $82.98 \pm 2.26$ & $\mathbf{84.68} \pm \mathbf{1.25}$ & $84.58 \pm 1.38$ \\
 & 4 & $85.70 \pm 1.13$ & $85.15 \pm 1.68$ & $86.35 \pm 0.84$ & $85.90 \pm 0.98$ & $\mathbf{86.75} \pm \mathbf{1.79}$ & $86.68 \pm 0.88$ \\
 & 16 & $87.40 \pm 0.69$ & $87.82 \pm 0.40$ & $\mathbf{88.27} \pm \mathbf{0.53}$ & $88.05 \pm 1.34$ & $87.60 \pm 0.81$ & $87.84 \pm 0.37$ \\
\midrule
\multirow{3}{*}{UCF101} & 1 & $58.20 \pm 3.04$ & $60.27 \pm 1.08$ & $\mathbf{61.25} \pm \mathbf{0.69}$ & $59.61 \pm 1.54$ & $60.56 \pm 1.25$ & $59.60 \pm 1.42$ \\
 & 4 & $67.90 \pm 0.44$ & $67.18 \pm 1.11$ & $\mathbf{67.89} \pm \mathbf{0.44}$ & $67.51 \pm 1.28$ & $67.76 \pm 1.10$ & $67.79 \pm 0.96$ \\
 & 16 & $75.63 \pm 0.40$ & $\mathbf{73.38} \pm \mathbf{0.78}$ & $72.62 \pm 1.76$ & $72.59 \pm 1.51$ & $73.27 \pm 1.22$ & $72.22 \pm 0.69$ \\
\bottomrule
\end{tabular}%
}
\end{table*}

\begin{table*}[h]
\centering
\scriptsize
\caption{
Token-side basis ablation, ViT-B/16 at rank 2. Mean accuracy (\%) over seven
datasets for the trainable factorization (both factors learned) and four
fixed-$\bm{B}$ variants that freeze the token-side basis and train only $\bm{A}$:
Gaussian, learned-then-frozen, orthogonal, and SVD-derived bases. Dense CoOp is shown
as a reference. Values are mean $\pm$ sample standard deviation over three seeds;
\textbf{bold} marks the best mean among the trainable and fixed-$\bm{B}$ variants.
}
\label{tab:b_ablation_rank2_vitb16}
\resizebox{\textwidth}{!}{%
\begin{tabular}{llcccccc}
\toprule
\textbf{Dataset} & \textbf{Shots} & \makecell{\textbf{Dense CoOp}\\\textbf{Best}} & \makecell{\textbf{Trainable}\\\textbf{\(BA\)}} & \makecell{\textbf{Gaussian}\\\textbf{\(B\)}} & \makecell{\textbf{Learned}\\\textbf{\(B\)}} & \makecell{\textbf{Orthogonal}\\\textbf{\(B\)}} & \makecell{\textbf{SVD}\\\textbf{\(B\)}} \\
\midrule
\multirow{3}{*}{Caltech101} & 1 & $90.50 \pm 2.07$ & $92.60 \pm 1.29$ & $\mathbf{92.70} \pm \mathbf{0.52}$ & $91.97 \pm 1.36$ & $91.93 \pm 2.29$ & $92.17 \pm 1.10$ \\
 & 4 & $94.17 \pm 0.15$ & $94.37 \pm 0.13$ & $94.58 \pm 0.10$ & $94.37 \pm 0.58$ & $\mathbf{94.66} \pm \mathbf{0.28}$ & $94.40 \pm 0.16$ \\
 & 16 & $95.47 \pm 0.06$ & $95.59 \pm 0.18$ & $95.32 \pm 0.06$ & $95.20 \pm 0.33$ & $\mathbf{95.62} \pm \mathbf{0.20}$ & $95.36 \pm 0.43$ \\
\midrule
\multirow{3}{*}{DTD} & 1 & $48.77 \pm 1.36$ & $49.47 \pm 2.93$ & $49.13 \pm 2.78$ & $\mathbf{50.32} \pm \mathbf{0.56}$ & $49.68 \pm 2.53$ & $49.67 \pm 2.00$ \\
 & 4 & $59.47 \pm 0.67$ & $58.25 \pm 2.41$ & $57.72 \pm 1.11$ & $\mathbf{58.88} \pm \mathbf{0.33}$ & $58.22 \pm 1.59$ & $58.31 \pm 0.95$ \\
 & 16 & $69.10 \pm 0.44$ & $67.83 \pm 0.36$ & $\mathbf{68.30} \pm \mathbf{1.14}$ & $67.16 \pm 0.92$ & $66.77 \pm 0.81$ & $67.47 \pm 0.83$ \\
\midrule
\multirow{3}{*}{FGVC Aircraft} & 1 & $26.47 \pm 2.49$ & $\mathbf{28.11} \pm \mathbf{0.53}$ & $26.56 \pm 0.06$ & $27.00 \pm 1.20$ & $27.57 \pm 0.81$ & $27.76 \pm 0.39$ \\
 & 4 & $32.07 \pm 1.63$ & $\mathbf{31.38} \pm \mathbf{0.99}$ & $25.65 \pm 9.02$ & $23.49 \pm 10.20$ & $26.67 \pm 6.65$ & $30.68 \pm 1.11$ \\
 & 16 & $43.87 \pm 0.96$ & $\mathbf{38.17} \pm \mathbf{0.12}$ & $37.02 \pm 0.50$ & $37.81 \pm 0.42$ & $36.64 \pm 0.68$ & $37.62 \pm 0.21$ \\
\midrule
\multirow{3}{*}{Food101} & 1 & $78.37 \pm 1.66$ & $82.10 \pm 2.09$ & $\mathbf{84.75} \pm \mathbf{0.78}$ & $82.89 \pm 0.40$ & $84.36 \pm 0.83$ & $82.47 \pm 0.89$ \\
 & 4 & $82.07 \pm 1.67$ & $85.60 \pm 0.86$ & $\mathbf{85.95} \pm \mathbf{0.60}$ & $83.84 \pm 0.88$ & $85.94 \pm 0.36$ & $85.78 \pm 0.46$ \\
 & 16 & $85.10 \pm 0.10$ & $86.20 \pm 0.16$ & $\mathbf{86.61} \pm \mathbf{0.28}$ & $86.26 \pm 0.21$ & $86.52 \pm 0.33$ & $86.47 \pm 0.21$ \\
\midrule
\multirow{3}{*}{Oxford Flowers} & 1 & $81.33 \pm 0.35$ & $76.44 \pm 5.79$ & $74.54 \pm 4.67$ & $\mathbf{78.25} \pm \mathbf{2.49}$ & $74.81 \pm 4.32$ & $74.88 \pm 5.72$ \\
 & 4 & $92.63 \pm 1.11$ & $85.44 \pm 2.40$ & $\mathbf{86.16} \pm \mathbf{1.68}$ & $85.65 \pm 1.40$ & $86.11 \pm 2.41$ & $85.33 \pm 1.69$ \\
 & 16 & $97.07 \pm 0.38$ & $\mathbf{94.63} \pm \mathbf{0.19}$ & $94.17 \pm 0.63$ & $94.00 \pm 0.55$ & $94.09 \pm 0.35$ & $94.34 \pm 0.61$ \\
\midrule
\multirow{3}{*}{Oxford Pets} & 1 & $88.07 \pm 1.67$ & $90.86 \pm 0.41$ & $90.47 \pm 1.36$ & $90.57 \pm 0.53$ & $90.31 \pm 1.37$ & $\mathbf{90.89} \pm \mathbf{0.61}$ \\
 & 4 & $91.50 \pm 1.30$ & $92.32 \pm 0.50$ & $92.62 \pm 0.42$ & $92.30 \pm 0.44$ & $\mathbf{92.66} \pm \mathbf{0.70}$ & $91.83 \pm 1.39$ \\
 & 16 & $92.33 \pm 0.83$ & $93.05 \pm 0.52$ & $92.83 \pm 0.11$ & $93.00 \pm 0.16$ & $92.79 \pm 0.31$ & $\mathbf{93.30} \pm \mathbf{0.33}$ \\
\midrule
\multirow{3}{*}{UCF101} & 1 & $68.23 \pm 1.30$ & $69.60 \pm 0.42$ & $70.81 \pm 0.90$ & $\mathbf{71.66} \pm \mathbf{1.02}$ & $71.42 \pm 0.37$ & $70.74 \pm 1.43$ \\
 & 4 & $77.27 \pm 0.74$ & $\mathbf{77.22} \pm \mathbf{0.46}$ & $76.60 \pm 0.31$ & $76.64 \pm 0.96$ & $76.57 \pm 0.32$ & $76.88 \pm 0.47$ \\
 & 16 & $82.17 \pm 0.67$ & $80.53 \pm 1.64$ & $\mathbf{80.76} \pm \mathbf{0.97}$ & $80.17 \pm 0.66$ & $79.95 \pm 0.92$ & $80.62 \pm 0.43$ \\
\bottomrule
\end{tabular}%
}
\end{table*}

\begin{table*}[h]
\centering
\scriptsize
\caption{
Token-side basis ablation, RN50 at rank 4. Mean accuracy (\%) over seven
datasets for the trainable factorization (both factors learned) and four
fixed-$\bm{B}$ variants that freeze the token-side basis and train only $\bm{A}$:
Gaussian, learned-then-frozen, orthogonal, and SVD-derived bases. Dense CoOp is shown
as a reference. Values are mean $\pm$ sample standard deviation over three seeds;
\textbf{bold} marks the best mean among the trainable and fixed-$\bm{B}$ variants.
}
\label{tab:b_ablation_rank4_rn50}
\resizebox{\textwidth}{!}{%
\begin{tabular}{llcccccc}
\toprule
\textbf{Dataset} & \textbf{Shots} & \makecell{\textbf{Dense CoOp}\\\textbf{Best}} & \makecell{\textbf{Trainable}\\\textbf{\(BA\)}} & \makecell{\textbf{Gaussian}\\\textbf{\(B\)}} & \makecell{\textbf{Learned}\\\textbf{\(B\)}} & \makecell{\textbf{Orthogonal}\\\textbf{\(B\)}} & \makecell{\textbf{SVD}\\\textbf{\(B\)}} \\
\midrule
\multirow{3}{*}{Caltech101} & 1 & $83.90 \pm 3.20$ & $83.45 \pm 3.44$ & $85.17 \pm 1.32$ & $84.77 \pm 2.53$ & $\mathbf{85.91} \pm \mathbf{0.76}$ & $85.45 \pm 2.08$ \\
 & 4 & $88.63 \pm 0.49$ & $88.74 \pm 0.45$ & $\mathbf{89.74} \pm \mathbf{0.23}$ & $89.03 \pm 0.41$ & $89.29 \pm 0.65$ & $89.53 \pm 0.60$ \\
 & 16 & $91.83 \pm 0.38$ & $91.94 \pm 0.06$ & $92.16 \pm 0.34$ & $\mathbf{92.17} \pm \mathbf{0.11}$ & $91.40 \pm 0.47$ & $91.74 \pm 0.16$ \\
\midrule
\multirow{3}{*}{DTD} & 1 & $42.93 \pm 1.18$ & $41.96 \pm 0.59$ & $41.63 \pm 1.46$ & $40.96 \pm 1.34$ & $\mathbf{42.16} \pm \mathbf{0.89}$ & $41.67 \pm 0.26$ \\
 & 4 & $52.90 \pm 0.30$ & $52.44 \pm 1.92$ & $52.94 \pm 1.21$ & $52.86 \pm 1.12$ & $53.01 \pm 1.35$ & $\mathbf{53.64} \pm \mathbf{1.33}$ \\
 & 16 & $63.03 \pm 1.04$ & $63.20 \pm 0.75$ & $62.08 \pm 0.44$ & $\mathbf{63.40} \pm \mathbf{0.55}$ & $62.67 \pm 1.30$ & $63.10 \pm 0.92$ \\
\midrule
\multirow{3}{*}{FGVC Aircraft} & 1 & $18.17 \pm 1.50$ & $16.94 \pm 1.01$ & $16.77 \pm 0.98$ & $16.93 \pm 0.12$ & $16.94 \pm 1.07$ & $\mathbf{17.19} \pm \mathbf{1.20}$ \\
 & 4 & $22.60 \pm 1.05$ & $21.37 \pm 0.49$ & $21.65 \pm 0.96$ & $21.38 \pm 0.51$ & $\mathbf{21.88} \pm \mathbf{0.29}$ & $21.17 \pm 1.18$ \\
 & 16 & $31.30 \pm 0.62$ & $27.92 \pm 0.62$ & $27.60 \pm 0.56$ & $27.75 \pm 0.54$ & $\mathbf{28.10} \pm \mathbf{0.98}$ & $27.76 \pm 0.48$ \\
\midrule
\multirow{3}{*}{Food101} & 1 & $63.70 \pm 0.52$ & $68.11 \pm 0.69$ & $\mathbf{69.67} \pm \mathbf{1.79}$ & $67.26 \pm 2.70$ & $67.72 \pm 2.76$ & $67.88 \pm 1.37$ \\
 & 4 & $70.07 \pm 3.51$ & $72.61 \pm 0.48$ & $\mathbf{73.69} \pm \mathbf{0.67}$ & $73.09 \pm 0.75$ & $72.34 \pm 0.23$ & $72.56 \pm 0.71$ \\
 & 16 & $76.17 \pm 0.31$ & $77.32 \pm 0.34$ & $\mathbf{77.51} \pm \mathbf{0.41}$ & $77.36 \pm 0.45$ & $77.28 \pm 0.14$ & $77.44 \pm 0.26$ \\
\midrule
\multirow{3}{*}{Oxford Flowers} & 1 & $70.93 \pm 1.96$ & $69.79 \pm 1.96$ & $70.81 \pm 1.52$ & $70.29 \pm 1.66$ & $69.59 \pm 2.72$ & $\mathbf{71.23} \pm \mathbf{2.14}$ \\
 & 4 & $87.13 \pm 2.04$ & $\mathbf{84.59} \pm \mathbf{0.13}$ & $83.30 \pm 0.42$ & $83.25 \pm 1.57$ & $83.81 \pm 1.87$ & $84.00 \pm 1.48$ \\
 & 16 & $94.70 \pm 0.10$ & $\mathbf{92.39} \pm \mathbf{0.43}$ & $91.97 \pm 0.57$ & $92.38 \pm 0.35$ & $92.25 \pm 0.20$ & $92.19 \pm 0.52$ \\
\midrule
\multirow{3}{*}{Oxford Pets} & 1 & $79.43 \pm 0.81$ & $\mathbf{82.87} \pm \mathbf{1.41}$ & $82.65 \pm 0.80$ & $82.46 \pm 1.39$ & $82.58 \pm 0.93$ & $82.47 \pm 0.82$ \\
 & 4 & $85.70 \pm 1.13$ & $84.36 \pm 1.65$ & $\mathbf{86.29} \pm \mathbf{0.68}$ & $85.55 \pm 1.11$ & $85.22 \pm 2.20$ & $84.26 \pm 2.25$ \\
 & 16 & $87.40 \pm 0.69$ & $87.48 \pm 0.88$ & $\mathbf{87.74} \pm \mathbf{0.44}$ & $87.46 \pm 0.64$ & $87.40 \pm 0.80$ & $86.84 \pm 0.56$ \\
\midrule
\multirow{3}{*}{UCF101} & 1 & $58.20 \pm 3.04$ & $\mathbf{60.35} \pm \mathbf{0.71}$ & $58.95 \pm 1.70$ & $58.76 \pm 2.47$ & $57.23 \pm 1.62$ & $58.60 \pm 1.80$ \\
 & 4 & $67.90 \pm 0.44$ & $67.83 \pm 0.58$ & $67.26 \pm 1.63$ & $\mathbf{68.46} \pm \mathbf{0.22}$ & $67.77 \pm 0.68$ & $67.72 \pm 0.14$ \\
 & 16 & $75.63 \pm 0.40$ & $75.35 \pm 0.94$ & $73.95 \pm 1.46$ & $74.70 \pm 1.15$ & $74.94 \pm 0.43$ & $\mathbf{75.81} \pm \mathbf{0.35}$ \\
\bottomrule
\end{tabular}%
}
\end{table*}

\begin{table*}[h]
\centering
\scriptsize
\caption{
Token-side basis ablation, ViT-B/16 at rank 4. Mean accuracy (\%) over seven
datasets for the trainable factorization (both factors learned) and four
fixed-$\bm{B}$ variants that freeze the token-side basis and train only $\bm{A}$:
Gaussian, learned-then-frozen, orthogonal, and SVD-derived bases. Dense CoOp is shown
as a reference. Values are mean $\pm$ sample standard deviation over three seeds;
\textbf{bold} marks the best mean among the trainable and fixed-$\bm{B}$ variants.
}
\label{tab:b_ablation_rank4_vitb16}
\resizebox{\textwidth}{!}{%
\begin{tabular}{llcccccc}
\toprule
\textbf{Dataset} & \textbf{Shots} & \makecell{\textbf{Dense CoOp}\\\textbf{Best}} & \makecell{\textbf{Trainable}\\\textbf{\(BA\)}} & \makecell{\textbf{Gaussian}\\\textbf{\(B\)}} & \makecell{\textbf{Learned}\\\textbf{\(B\)}} & \makecell{\textbf{Orthogonal}\\\textbf{\(B\)}} & \makecell{\textbf{SVD}\\\textbf{\(B\)}} \\
\midrule
\multirow{3}{*}{Caltech101} & 1 & $90.50 \pm 2.07$ & $91.48 \pm 2.36$ & $91.67 \pm 1.49$ & $91.90 \pm 1.55$ & $91.93 \pm 1.08$ & $\mathbf{92.17} \pm \mathbf{0.85}$ \\
 & 4 & $94.17 \pm 0.15$ & $93.97 \pm 0.55$ & $\mathbf{94.52} \pm \mathbf{0.31}$ & $\mathbf{94.52} \pm \mathbf{0.25}$ & $94.39 \pm 0.34$ & $94.28 \pm 0.88$ \\
 & 16 & $95.47 \pm 0.06$ & $95.51 \pm 0.29$ & $95.33 \pm 0.41$ & $95.43 \pm 0.45$ & $\mathbf{95.58} \pm \mathbf{0.41}$ & $95.51 \pm 0.28$ \\
\midrule
\multirow{3}{*}{DTD} & 1 & $48.77 \pm 1.36$ & $\mathbf{50.45} \pm \mathbf{1.58}$ & $50.41 \pm 2.30$ & $50.04 \pm 1.79$ & $49.55 \pm 2.16$ & $49.76 \pm 2.63$ \\
 & 4 & $59.47 \pm 0.67$ & $58.51 \pm 0.91$ & $\mathbf{59.40} \pm \mathbf{1.43}$ & $58.39 \pm 1.55$ & $58.51 \pm 0.41$ & $59.26 \pm 0.51$ \\
 & 16 & $69.10 \pm 0.44$ & $68.26 \pm 0.20$ & $68.79 \pm 1.97$ & $\mathbf{69.11} \pm \mathbf{0.94}$ & $68.85 \pm 0.53$ & $68.40 \pm 1.19$ \\
\midrule
\multirow{3}{*}{FGVC Aircraft} & 1 & $26.47 \pm 2.49$ & $27.95 \pm 0.93$ & $27.86 \pm 0.69$ & $26.95 \pm 1.01$ & $26.60 \pm 1.15$ & $\mathbf{28.08} \pm \mathbf{0.83}$ \\
 & 4 & $32.07 \pm 1.63$ & $31.75 \pm 1.40$ & $31.25 \pm 0.55$ & $\mathbf{31.97} \pm \mathbf{0.44}$ & $31.32 \pm 0.91$ & $31.92 \pm 1.40$ \\
 & 16 & $43.87 \pm 0.96$ & $40.46 \pm 0.48$ & $39.24 \pm 0.43$ & $\mathbf{40.53} \pm \mathbf{1.07}$ & $40.11 \pm 0.70$ & $39.85 \pm 0.79$ \\
\midrule
\multirow{3}{*}{Food101} & 1 & $78.37 \pm 1.66$ & $80.85 \pm 1.63$ & $81.94 \pm 1.16$ & $\mathbf{81.94} \pm \mathbf{1.40}$ & $80.10 \pm 1.67$ & $80.70 \pm 1.54$ \\
 & 4 & $82.07 \pm 1.67$ & $84.45 \pm 0.70$ & $84.44 \pm 0.92$ & $\mathbf{85.17} \pm \mathbf{0.28}$ & $84.44 \pm 0.99$ & $84.81 \pm 0.01$ \\
 & 16 & $85.10 \pm 0.10$ & $85.78 \pm 0.15$ & $\mathbf{85.91} \pm \mathbf{0.25}$ & $85.68 \pm 0.24$ & $85.66 \pm 0.08$ & $85.84 \pm 0.27$ \\
\midrule
\multirow{3}{*}{Oxford Flowers} & 1 & $81.33 \pm 0.35$ & $79.32 \pm 0.79$ & $76.83 \pm 5.06$ & $\mathbf{80.13} \pm \mathbf{1.11}$ & $78.39 \pm 2.83$ & $79.66 \pm 0.91$ \\
 & 4 & $92.63 \pm 1.11$ & $90.03 \pm 0.43$ & $87.44 \pm 3.21$ & $\mathbf{90.58} \pm \mathbf{0.48}$ & $88.94 \pm 3.41$ & $89.21 \pm 0.94$ \\
 & 16 & $97.07 \pm 0.38$ & $\mathbf{96.20} \pm \mathbf{0.29}$ & $96.05 \pm 0.47$ & $96.16 \pm 0.15$ & $95.91 \pm 0.37$ & $96.05 \pm 0.38$ \\
\midrule
\multirow{3}{*}{Oxford Pets} & 1 & $88.07 \pm 1.67$ & $\mathbf{91.27} \pm \mathbf{0.44}$ & $90.95 \pm 0.38$ & $89.76 \pm 1.31$ & $89.85 \pm 1.17$ & $91.05 \pm 1.00$ \\
 & 4 & $91.50 \pm 1.30$ & $91.81 \pm 0.50$ & $91.63 \pm 1.06$ & $91.47 \pm 1.16$ & $92.25 \pm 1.15$ & $\mathbf{92.33} \pm \mathbf{0.56}$ \\
 & 16 & $92.33 \pm 0.83$ & $92.48 \pm 0.57$ & $\mathbf{92.87} \pm \mathbf{0.48}$ & $92.35 \pm 0.51$ & $92.83 \pm 0.34$ & $92.45 \pm 0.62$ \\
\midrule
\multirow{3}{*}{UCF101} & 1 & $68.23 \pm 1.30$ & $69.13 \pm 0.53$ & $68.72 \pm 1.45$ & $68.61 \pm 1.87$ & $\mathbf{69.50} \pm \mathbf{0.54}$ & $68.76 \pm 0.78$ \\
 & 4 & $77.27 \pm 0.74$ & $76.69 \pm 0.37$ & $77.13 \pm 0.90$ & $76.30 \pm 0.61$ & $76.77 \pm 0.65$ & $\mathbf{77.94} \pm \mathbf{0.41}$ \\
 & 16 & $82.17 \pm 0.67$ & $81.86 \pm 1.32$ & $81.89 \pm 0.25$ & $\mathbf{82.26} \pm \mathbf{1.24}$ & $81.45 \pm 0.49$ & $\mathbf{82.26} \pm \mathbf{0.78}$ \\
\bottomrule
\end{tabular}%
}
\end{table*}

\begin{table*}[h]
\centering
\scriptsize
\caption{
Token-side basis ablation, RN50 at rank 8. Mean accuracy (\%) over seven
datasets for the trainable factorization (both factors learned) and four
fixed-$\bm{B}$ variants that freeze the token-side basis and train only $\bm{A}$:
Gaussian, learned-then-frozen, orthogonal, and SVD-derived bases. Dense CoOp is shown
as a reference. Values are mean $\pm$ sample standard deviation over three seeds;
\textbf{bold} marks the best mean among the trainable and fixed-$\bm{B}$ variants.
}
\label{tab:b_ablation_rank8_rn50}
\resizebox{\textwidth}{!}{%
\begin{tabular}{llcccccc}
\toprule
\textbf{Dataset} & \textbf{Shots} & \makecell{\textbf{Dense CoOp}\\\textbf{Best}} & \makecell{\textbf{Trainable}\\\textbf{\(BA\)}} & \makecell{\textbf{Gaussian}\\\textbf{\(B\)}} & \makecell{\textbf{Learned}\\\textbf{\(B\)}} & \makecell{\textbf{Orthogonal}\\\textbf{\(B\)}} & \makecell{\textbf{SVD}\\\textbf{\(B\)}} \\
\midrule
\multirow{3}{*}{Caltech101} & 1 & $83.90 \pm 3.20$ & $83.71 \pm 2.15$ & $\mathbf{85.95} \pm \mathbf{1.76}$ & $83.65 \pm 2.49$ & $84.81 \pm 2.40$ & $85.44 \pm 0.18$ \\
 & 4 & $88.63 \pm 0.49$ & $88.61 \pm 1.02$ & $\mathbf{88.80} \pm \mathbf{0.32}$ & $88.72 \pm 0.25$ & $88.33 \pm 0.48$ & $88.24 \pm 1.48$ \\
 & 16 & $91.83 \pm 0.38$ & $\mathbf{91.98} \pm \mathbf{0.05}$ & $91.66 \pm 0.21$ & $91.78 \pm 0.87$ & $91.72 \pm 0.28$ & $91.64 \pm 0.07$ \\
\midrule
\multirow{3}{*}{DTD} & 1 & $42.93 \pm 1.18$ & $43.42 \pm 0.34$ & $42.53 \pm 1.09$ & $\mathbf{43.52} \pm \mathbf{3.10}$ & $42.36 \pm 1.04$ & $42.22 \pm 1.01$ \\
 & 4 & $52.90 \pm 0.30$ & $53.33 \pm 0.25$ & $\mathbf{53.80} \pm \mathbf{0.39}$ & $53.41 \pm 0.86$ & $53.29 \pm 2.58$ & $53.07 \pm 0.27$ \\
 & 16 & $63.03 \pm 1.04$ & $\mathbf{63.48} \pm \mathbf{0.41}$ & $63.40 \pm 0.82$ & $63.20 \pm 1.09$ & $63.46 \pm 0.59$ & $63.44 \pm 0.76$ \\
\midrule
\multirow{3}{*}{FGVC Aircraft} & 1 & $18.17 \pm 1.50$ & $13.81 \pm 5.26$ & $\mathbf{17.05} \pm \mathbf{0.95}$ & $16.72 \pm 0.42$ & $16.93 \pm 0.98$ & $12.76 \pm 5.65$ \\
 & 4 & $22.60 \pm 1.05$ & $22.64 \pm 0.71$ & $22.40 \pm 1.34$ & $22.26 \pm 0.66$ & $\mathbf{22.72} \pm \mathbf{1.16}$ & $22.59 \pm 1.38$ \\
 & 16 & $31.30 \pm 0.62$ & $30.31 \pm 0.60$ & $28.66 \pm 0.08$ & $30.29 \pm 1.01$ & $\mathbf{30.39} \pm \mathbf{0.54}$ & $30.11 \pm 0.45$ \\
\midrule
\multirow{3}{*}{Food101} & 1 & $63.70 \pm 0.52$ & $66.60 \pm 1.92$ & $\mathbf{70.18} \pm \mathbf{3.19}$ & $65.88 \pm 1.70$ & $66.50 \pm 3.80$ & $68.13 \pm 0.83$ \\
 & 4 & $70.07 \pm 3.51$ & $70.34 \pm 1.45$ & $\mathbf{74.04} \pm \mathbf{1.47}$ & $71.24 \pm 0.56$ & $70.87 \pm 1.75$ & $72.14 \pm 0.75$ \\
 & 16 & $76.17 \pm 0.31$ & $75.97 \pm 0.36$ & $\mathbf{76.81} \pm \mathbf{0.33}$ & $75.96 \pm 0.50$ & $76.06 \pm 0.25$ & $76.19 \pm 0.34$ \\
\midrule
\multirow{3}{*}{Oxford Flowers} & 1 & $70.93 \pm 1.96$ & $71.23 \pm 0.87$ & $\mathbf{72.23} \pm \mathbf{0.71}$ & $70.25 \pm 1.22$ & $70.63 \pm 2.15$ & $70.96 \pm 1.96$ \\
 & 4 & $87.13 \pm 2.04$ & $\mathbf{86.40} \pm \mathbf{1.97}$ & $85.36 \pm 0.80$ & $85.97 \pm 1.70$ & $86.33 \pm 0.79$ & $86.37 \pm 1.31$ \\
 & 16 & $94.70 \pm 0.10$ & $93.92 \pm 0.14$ & $93.15 \pm 0.91$ & $\mathbf{94.15} \pm \mathbf{0.32}$ & $93.94 \pm 0.31$ & $93.87 \pm 0.53$ \\
\midrule
\multirow{3}{*}{Oxford Pets} & 1 & $79.43 \pm 0.81$ & $81.49 \pm 1.53$ & $81.51 \pm 2.17$ & $80.98 \pm 2.28$ & $81.45 \pm 1.29$ & $\mathbf{81.96} \pm \mathbf{1.53}$ \\
 & 4 & $85.70 \pm 1.13$ & $84.70 \pm 2.71$ & $\mathbf{85.46} \pm \mathbf{0.27}$ & $82.93 \pm 1.18$ & $83.55 \pm 0.46$ & $83.88 \pm 2.20$ \\
 & 16 & $87.40 \pm 0.69$ & $86.63 \pm 1.00$ & $\mathbf{87.74} \pm \mathbf{0.71}$ & $86.77 \pm 0.57$ & $87.23 \pm 0.57$ & $86.26 \pm 0.67$ \\
\midrule
\multirow{3}{*}{UCF101} & 1 & $58.20 \pm 3.04$ & $57.52 \pm 2.18$ & $\mathbf{58.28} \pm \mathbf{1.19}$ & $57.13 \pm 1.25$ & $56.23 \pm 0.40$ & $58.10 \pm 2.09$ \\
 & 4 & $67.90 \pm 0.44$ & $67.27 \pm 0.58$ & $66.88 \pm 1.56$ & $66.32 \pm 0.48$ & $\mathbf{67.54} \pm \mathbf{1.17}$ & $66.65 \pm 0.70$ \\
 & 16 & $75.63 \pm 0.40$ & $74.76 \pm 1.03$ & $75.10 \pm 0.94$ & $74.39 \pm 0.37$ & $75.73 \pm 0.72$ & $\mathbf{75.80} \pm \mathbf{0.66}$ \\
\bottomrule
\end{tabular}%
}
\end{table*}

\begin{table*}[h]
\centering
\scriptsize
\caption{
Token-side basis ablation, ViT-B/16 at rank 8. Mean accuracy (\%) over seven
datasets for the trainable factorization (both factors learned) and four
fixed-$\bm{B}$ variants that freeze the token-side basis and train only $\bm{A}$:
Gaussian, learned-then-frozen, orthogonal, and SVD-derived bases. Dense CoOp is shown
as a reference. Values are mean $\pm$ sample standard deviation over three seeds;
\textbf{bold} marks the best mean among the trainable and fixed-$\bm{B}$ variants.
}
\label{tab:b_ablation_rank8_vitb16}
\resizebox{\textwidth}{!}{%
\begin{tabular}{llcccccc}
\toprule
\textbf{Dataset} & \textbf{Shots} & \makecell{\textbf{Dense CoOp}\\\textbf{Best}} & \makecell{\textbf{Trainable}\\\textbf{\(BA\)}} & \makecell{\textbf{Gaussian}\\\textbf{\(B\)}} & \makecell{\textbf{Learned}\\\textbf{\(B\)}} & \makecell{\textbf{Orthogonal}\\\textbf{\(B\)}} & \makecell{\textbf{SVD}\\\textbf{\(B\)}} \\
\midrule
\multirow{3}{*}{Caltech101} & 1 & $90.50 \pm 2.07$ & $91.14 \pm 1.72$ & $91.39 \pm 1.86$ & $\mathbf{91.97} \pm \mathbf{1.83}$ & $90.48 \pm 2.66$ & $91.72 \pm 1.97$ \\
 & 4 & $94.17 \pm 0.15$ & $94.16 \pm 0.32$ & $94.27 \pm 0.37$ & $94.46 \pm 0.18$ & $93.82 \pm 0.70$ & $\mathbf{94.47} \pm \mathbf{0.06}$ \\
 & 16 & $95.47 \pm 0.06$ & $95.58 \pm 0.15$ & $95.71 \pm 0.14$ & $\mathbf{95.73} \pm \mathbf{0.27}$ & $95.33 \pm 0.29$ & $95.66 \pm 0.28$ \\
\midrule
\multirow{3}{*}{DTD} & 1 & $48.77 \pm 1.36$ & $50.33 \pm 1.22$ & $47.60 \pm 3.09$ & $50.28 \pm 0.39$ & $49.19 \pm 2.19$ & $\mathbf{51.16} \pm \mathbf{1.18}$ \\
 & 4 & $59.47 \pm 0.67$ & $58.25 \pm 1.50$ & $\mathbf{59.83} \pm \mathbf{1.00}$ & $59.75 \pm 0.89$ & $59.04 \pm 0.87$ & $59.63 \pm 1.08$ \\
 & 16 & $69.10 \pm 0.44$ & $69.29 \pm 0.65$ & $69.54 \pm 1.64$ & $69.41 \pm 0.44$ & $69.50 \pm 0.37$ & $\mathbf{69.74} \pm \mathbf{0.12}$ \\
\midrule
\multirow{3}{*}{FGVC Aircraft} & 1 & $26.47 \pm 2.49$ & $26.49 \pm 1.41$ & $26.52 \pm 0.35$ & $26.96 \pm 1.09$ & $27.31 \pm 0.84$ & $\mathbf{27.48} \pm \mathbf{1.59}$ \\
 & 4 & $32.07 \pm 1.63$ & $30.42 \pm 3.81$ & $\mathbf{33.47} \pm \mathbf{1.37}$ & $32.95 \pm 1.65$ & $32.88 \pm 0.89$ & $30.66 \pm 3.05$ \\
 & 16 & $43.87 \pm 0.96$ & $42.06 \pm 0.11$ & $41.30 \pm 0.52$ & $41.87 \pm 0.30$ & $41.68 \pm 0.47$ & $\mathbf{42.12} \pm \mathbf{0.56}$ \\
\midrule
\multirow{3}{*}{Food101} & 1 & $78.37 \pm 1.66$ & $79.79 \pm 2.36$ & $78.94 \pm 0.77$ & $77.91 \pm 0.79$ & $77.51 \pm 3.17$ & $\mathbf{80.32} \pm \mathbf{2.24}$ \\
 & 4 & $82.07 \pm 1.67$ & $82.32 \pm 0.34$ & $\mathbf{83.44} \pm \mathbf{0.42}$ & $82.85 \pm 0.66$ & $81.94 \pm 0.64$ & $82.99 \pm 1.05$ \\
 & 16 & $85.10 \pm 0.10$ & $85.00 \pm 0.08$ & $\mathbf{85.67} \pm \mathbf{0.23}$ & $85.01 \pm 0.33$ & $85.17 \pm 0.20$ & $84.93 \pm 0.10$ \\
\midrule
\multirow{3}{*}{Oxford Flowers} & 1 & $81.33 \pm 0.35$ & $80.34 \pm 1.77$ & $\mathbf{81.43} \pm \mathbf{0.82}$ & $81.31 \pm 0.22$ & $81.04 \pm 1.28$ & $80.09 \pm 1.64$ \\
 & 4 & $92.63 \pm 1.11$ & $\mathbf{92.11} \pm \mathbf{2.30}$ & $91.43 \pm 1.40$ & $91.89 \pm 2.08$ & $91.73 \pm 1.76$ & $91.49 \pm 1.44$ \\
 & 16 & $97.07 \pm 0.38$ & $96.67 \pm 0.32$ & $96.47 \pm 0.25$ & $\mathbf{96.74} \pm \mathbf{0.29}$ & $96.64 \pm 0.41$ & $96.71 \pm 0.15$ \\
\midrule
\multirow{3}{*}{Oxford Pets} & 1 & $88.07 \pm 1.67$ & $90.59 \pm 1.30$ & $90.08 \pm 1.11$ & $89.48 \pm 2.04$ & $89.55 \pm 2.53$ & $\mathbf{91.20} \pm \mathbf{1.07}$ \\
 & 4 & $91.50 \pm 1.30$ & $91.78 \pm 0.76$ & $91.63 \pm 0.45$ & $90.37 \pm 1.04$ & $91.07 \pm 1.34$ & $\mathbf{92.34} \pm \mathbf{0.26}$ \\
 & 16 & $92.33 \pm 0.83$ & $92.41 \pm 0.55$ & $92.36 \pm 0.21$ & $\mathbf{92.59} \pm \mathbf{0.65}$ & $92.02 \pm 0.60$ & $92.58 \pm 0.90$ \\
\midrule
\multirow{3}{*}{UCF101} & 1 & $68.23 \pm 1.30$ & $68.81 \pm 2.57$ & $68.88 \pm 0.51$ & $67.46 \pm 3.06$ & $68.58 \pm 1.41$ & $\mathbf{70.02} \pm \mathbf{1.61}$ \\
 & 4 & $77.27 \pm 0.74$ & $76.01 \pm 0.33$ & $\mathbf{77.12} \pm \mathbf{0.59}$ & $76.88 \pm 0.59$ & $76.84 \pm 0.69$ & $76.64 \pm 0.68$ \\
 & 16 & $82.17 \pm 0.67$ & $82.20 \pm 0.26$ & $\mathbf{82.40} \pm \mathbf{0.88}$ & $82.19 \pm 0.70$ & $81.97 \pm 0.12$ & $81.34 \pm 0.93$ \\
\bottomrule
\end{tabular}%
}
\end{table*}



\clearpage
\subsection{Base-to-new generalization}

\begin{table*}[h]
\centering
\scriptsize
\caption{Per-dataset base-to-new generalization on RN50. Each model is trained on the
seen (base) classes and evaluated on the seen classes, the held-out unseen (new)
classes, and their harmonic mean (\textbf{Seen}/\textbf{Unseen}/\textbf{H}), at 1, 4,
and 16 shots. Methods are dense CoOp, the trainable factorization at ranks 4 and 8,
and the fixed orthogonal-$\bm{B}$ factorization at ranks 4 and 8. Values are mean (\%)
$\pm$ sample standard deviation over three seeds; \textbf{bold} marks the best value
across methods for each dataset, shot, and metric.}
\label{tab:base_to_new_rn50_dataset}
\resizebox{\textwidth}{!}{
\begin{tabular}{llccccccccc}
\toprule
\multirow{2}{*}{\textbf{Dataset}} & \multirow{2}{*}{\textbf{Method}} & \multicolumn{3}{c}{\textbf{1-shot}} & \multicolumn{3}{c}{\textbf{4-shot}} & \multicolumn{3}{c}{\textbf{16-shot}} \\
\cmidrule(lr){3-5} \cmidrule(lr){6-8} \cmidrule(lr){9-11}
& & \textbf{Seen} & \textbf{Unseen} & \textbf{H} & \textbf{Seen} & \textbf{Unseen} & \textbf{H} & \textbf{Seen} & \textbf{Unseen} & \textbf{H} \\
\midrule
\multirow{5}{*}{Caltech101} & Dense CoOp & $91.33 \pm 0.81$ & $86.27 \pm 0.87$ & $88.73 \pm 0.83$ & $93.57 \pm 0.15$ & $83.70 \pm 0.85$ & $88.36 \pm 0.50$ & $95.37 \pm 0.15$ & $84.83 \pm 3.97$ & $89.76 \pm 2.20$ \\
 & Factorized CoOp, $r=4$ & $91.70 \pm 1.21$ & $\mathbf{89.33 \pm 0.86}$ & $\mathbf{90.50 \pm 0.68}$ & $93.80 \pm 0.85$ & $\mathbf{87.60 \pm 2.16}$ & $90.58 \pm 1.16$ & $\mathbf{95.40 \pm 0.52}$ & $84.10 \pm 4.26$ & $89.35 \pm 2.14$ \\
 & Factorized CoOp, $r=8$ & $\mathbf{93.00 \pm 0.61}$ & $87.63 \pm 2.85$ & $90.23 \pm 1.77$ & $93.37 \pm 1.01$ & $86.80 \pm 1.82$ & $89.96 \pm 1.43$ & $95.17 \pm 0.35$ & $82.70 \pm 3.12$ & $88.48 \pm 1.94$ \\
 & Fixed Orthogonal $B$, $r=4$ & $91.97 \pm 1.69$ & $87.50 \pm 2.44$ & $89.67 \pm 1.99$ & $94.03 \pm 0.31$ & $87.57 \pm 3.53$ & $\mathbf{90.66 \pm 2.05}$ & $95.23 \pm 0.21$ & $\mathbf{86.97 \pm 0.80}$ & $\mathbf{90.91 \pm 0.37}$ \\
 & Fixed Orthogonal $B$, $r=8$ & $91.60 \pm 0.17$ & $87.20 \pm 2.05$ & $89.34 \pm 1.15$ & $\mathbf{94.17 \pm 0.51}$ & $86.13 \pm 4.32$ & $89.94 \pm 2.54$ & $95.13 \pm 0.31$ & $85.60 \pm 1.51$ & $90.11 \pm 0.97$ \\
\midrule
\multirow{5}{*}{DTD} & Dense CoOp & $51.00 \pm 2.26$ & $44.33 \pm 4.27$ & $47.31 \pm 2.22$ & $65.50 \pm 0.69$ & $\mathbf{42.93 \pm 4.23}$ & $\mathbf{51.80 \pm 3.13}$ & $74.27 \pm 0.57$ & $34.77 \pm 1.94$ & $47.34 \pm 1.75$ \\
 & Factorized CoOp, $r=4$ & $49.10 \pm 3.02$ & $44.77 \pm 4.27$ & $46.70 \pm 2.16$ & $65.23 \pm 1.03$ & $42.90 \pm 2.14$ & $51.74 \pm 1.61$ & $\mathbf{75.33 \pm 0.57}$ & $36.03 \pm 2.25$ & $48.72 \pm 2.10$ \\
 & Factorized CoOp, $r=8$ & $50.80 \pm 4.50$ & $43.30 \pm 6.46$ & $46.33 \pm 2.28$ & $65.87 \pm 1.06$ & $42.47 \pm 1.63$ & $51.62 \pm 0.90$ & $74.80 \pm 0.98$ & $38.90 \pm 1.49$ & $51.18 \pm 1.52$ \\
 & Fixed Orthogonal $B$, $r=4$ & $\mathbf{52.40 \pm 3.75}$ & $40.13 \pm 4.90$ & $45.22 \pm 2.39$ & $64.63 \pm 1.57$ & $42.10 \pm 2.95$ & $50.97 \pm 2.60$ & $74.50 \pm 1.30$ & $37.20 \pm 3.48$ & $49.58 \pm 3.36$ \\
 & Fixed Orthogonal $B$, $r=8$ & $50.87 \pm 5.23$ & $\mathbf{45.53 \pm 5.26}$ & $\mathbf{47.69 \pm 1.23}$ & $\mathbf{65.97 \pm 1.80}$ & $41.63 \pm 5.86$ & $50.94 \pm 4.93$ & $74.83 \pm 0.76$ & $\mathbf{41.23 \pm 1.97}$ & $\mathbf{53.15 \pm 1.50}$ \\
\midrule
\multirow{5}{*}{FGVC Aircraft} & Dense CoOp & $9.53 \pm 7.07$ & $5.90 \pm 6.85$ & $7.04 \pm 7.25$ & $21.40 \pm 1.84$ & $9.03 \pm 4.50$ & $12.42 \pm 4.66$ & $\mathbf{31.40 \pm 0.46}$ & $12.73 \pm 1.53$ & $18.08 \pm 1.51$ \\
 & Factorized CoOp, $r=4$ & $\mathbf{18.87 \pm 0.81}$ & $18.57 \pm 0.97$ & $18.71 \pm 0.84$ & $\mathbf{22.13 \pm 1.10}$ & $\mathbf{19.17 \pm 1.36}$ & $\mathbf{20.51 \pm 0.80}$ & $28.30 \pm 1.14$ & $11.53 \pm 3.61$ & $16.20 \pm 3.58$ \\
 & Factorized CoOp, $r=8$ & $15.53 \pm 3.67$ & $12.60 \pm 6.56$ & $13.65 \pm 5.71$ & $22.10 \pm 1.42$ & $12.40 \pm 2.15$ & $15.85 \pm 2.11$ & $30.07 \pm 0.25$ & $14.73 \pm 0.21$ & $19.78 \pm 0.23$ \\
 & Fixed Orthogonal $B$, $r=4$ & $18.67 \pm 0.47$ & $\mathbf{19.17 \pm 3.07}$ & $\mathbf{18.85 \pm 1.69}$ & $21.90 \pm 0.70$ & $17.40 \pm 6.07$ & $18.97 \pm 3.23$ & $28.57 \pm 1.45$ & $14.90 \pm 0.40$ & $19.57 \pm 0.02$ \\
 & Fixed Orthogonal $B$, $r=8$ & $18.00 \pm 0.85$ & $18.57 \pm 1.72$ & $18.27 \pm 1.27$ & $22.10 \pm 1.57$ & $15.10 \pm 2.91$ & $17.75 \pm 1.61$ & $30.97 \pm 1.66$ & $\mathbf{15.90 \pm 1.30}$ & $\mathbf{21.01 \pm 1.49}$ \\
\midrule
\multirow{5}{*}{Food101} & Dense CoOp & $71.73 \pm 0.31$ & $74.30 \pm 2.21$ & $72.98 \pm 1.07$ & $74.67 \pm 2.07$ & $73.10 \pm 2.71$ & $73.87 \pm 2.38$ & $79.57 \pm 0.71$ & $74.53 \pm 1.57$ & $76.97 \pm 1.14$ \\
 & Factorized CoOp, $r=4$ & $77.87 \pm 1.17$ & $77.33 \pm 4.42$ & $77.58 \pm 2.79$ & $76.30 \pm 1.01$ & $\mathbf{74.67 \pm 0.84}$ & $\mathbf{75.47 \pm 0.86}$ & $81.70 \pm 0.66$ & $\mathbf{79.67 \pm 0.55}$ & $\mathbf{80.67 \pm 0.22}$ \\
 & Factorized CoOp, $r=8$ & $75.13 \pm 0.15$ & $77.70 \pm 0.61$ & $76.39 \pm 0.28$ & $\mathbf{76.60 \pm 1.75}$ & $74.00 \pm 2.70$ & $75.27 \pm 2.00$ & $80.77 \pm 0.31$ & $75.73 \pm 2.11$ & $78.16 \pm 1.21$ \\
 & Fixed Orthogonal $B$, $r=4$ & $\mathbf{78.83 \pm 2.49}$ & $\mathbf{79.00 \pm 6.32}$ & $\mathbf{78.88 \pm 4.43}$ & $75.93 \pm 0.50$ & $73.90 \pm 1.57$ & $74.89 \pm 0.61$ & $\mathbf{81.77 \pm 0.76}$ & $77.87 \pm 1.23$ & $79.77 \pm 0.90$ \\
 & Fixed Orthogonal $B$, $r=8$ & $75.20 \pm 0.85$ & $77.77 \pm 1.62$ & $76.46 \pm 1.17$ & $75.83 \pm 2.00$ & $74.37 \pm 3.45$ & $75.09 \pm 2.72$ & $80.43 \pm 1.25$ & $74.47 \pm 2.47$ & $77.33 \pm 1.85$ \\
\midrule
\multirow{5}{*}{Oxford Flowers} & Dense CoOp & $75.37 \pm 2.14$ & $62.90 \pm 4.26$ & $68.55 \pm 3.29$ & $88.60 \pm 1.48$ & $54.07 \pm 2.20$ & $67.12 \pm 1.43$ & $\mathbf{96.30 \pm 0.46}$ & $54.17 \pm 2.97$ & $69.30 \pm 2.45$ \\
 & Factorized CoOp, $r=4$ & $77.43 \pm 1.59$ & $65.67 \pm 0.97$ & $71.06 \pm 0.81$ & $88.90 \pm 1.40$ & $61.20 \pm 1.65$ & $72.49 \pm 1.46$ & $95.73 \pm 0.21$ & $\mathbf{59.70 \pm 1.42}$ & $\mathbf{73.53 \pm 1.14}$ \\
 & Factorized CoOp, $r=8$ & $\mathbf{78.17 \pm 0.60}$ & $65.77 \pm 2.10$ & $71.42 \pm 1.35$ & $89.37 \pm 1.93$ & $56.90 \pm 4.16$ & $69.43 \pm 2.61$ & $\mathbf{96.30 \pm 0.53}$ & $55.50 \pm 1.51$ & $70.41 \pm 1.31$ \\
 & Fixed Orthogonal $B$, $r=4$ & $77.50 \pm 0.17$ & $66.70 \pm 1.25$ & $71.69 \pm 0.73$ & $87.87 \pm 1.47$ & $\mathbf{64.30 \pm 1.71}$ & $\mathbf{74.25 \pm 1.28}$ & $95.40 \pm 0.70$ & $58.73 \pm 4.83$ & $72.64 \pm 3.81$ \\
 & Fixed Orthogonal $B$, $r=8$ & $77.27 \pm 3.42$ & $\mathbf{67.70 \pm 4.40}$ & $\mathbf{72.14 \pm 3.68}$ & $\mathbf{89.83 \pm 0.68}$ & $59.03 \pm 4.08$ & $71.18 \pm 2.81$ & $95.73 \pm 0.45$ & $55.80 \pm 2.82$ & $70.47 \pm 2.18$ \\
\midrule
\multirow{5}{*}{Oxford Pets} & Dense CoOp & $80.03 \pm 3.69$ & $91.00 \pm 3.50$ & $85.09 \pm 1.91$ & $87.20 \pm 1.57$ & $88.40 \pm 3.60$ & $87.76 \pm 1.83$ & $89.20 \pm 0.30$ & $85.03 \pm 6.51$ & $86.99 \pm 3.51$ \\
 & Factorized CoOp, $r=4$ & $84.87 \pm 2.87$ & $93.53 \pm 0.95$ & $88.97 \pm 1.77$ & $88.03 \pm 4.22$ & $93.37 \pm 0.32$ & $90.59 \pm 2.22$ & $89.83 \pm 0.81$ & $\mathbf{92.83 \pm 1.66}$ & $\mathbf{91.30 \pm 0.95}$ \\
 & Factorized CoOp, $r=8$ & $84.43 \pm 2.08$ & $\mathbf{93.90 \pm 0.36}$ & $88.90 \pm 1.06$ & $88.10 \pm 2.05$ & $\mathbf{94.60 \pm 1.15}$ & $91.22 \pm 1.14$ & $89.53 \pm 0.21$ & $91.30 \pm 2.67$ & $90.39 \pm 1.32$ \\
 & Fixed Orthogonal $B$, $r=4$ & $85.83 \pm 2.55$ & $92.93 \pm 2.00$ & $\mathbf{89.24 \pm 2.25}$ & $\mathbf{88.93 \pm 1.01}$ & $94.47 \pm 1.36$ & $\mathbf{91.61 \pm 1.04}$ & $89.10 \pm 1.67$ & $91.00 \pm 1.23$ & $90.04 \pm 1.41$ \\
 & Fixed Orthogonal $B$, $r=8$ & $\mathbf{86.00 \pm 2.96}$ & $92.43 \pm 1.86$ & $89.08 \pm 1.83$ & $87.33 \pm 2.23$ & $93.43 \pm 0.71$ & $90.27 \pm 1.49$ & $\mathbf{89.90 \pm 0.46}$ & $89.17 \pm 2.55$ & $89.52 \pm 1.30$ \\
\midrule
\multirow{5}{*}{UCF101} & Dense CoOp & $64.00 \pm 3.97$ & $56.97 \pm 2.91$ & $60.25 \pm 2.94$ & $72.93 \pm 1.50$ & $52.50 \pm 4.11$ & $60.98 \pm 2.71$ & $80.43 \pm 0.64$ & $46.27 \pm 2.74$ & $58.72 \pm 2.30$ \\
 & Factorized CoOp, $r=4$ & $\mathbf{68.83 \pm 2.11}$ & $\mathbf{63.20 \pm 2.78}$ & $\mathbf{65.89 \pm 2.46}$ & $72.90 \pm 2.07$ & $\mathbf{54.00 \pm 5.01}$ & $\mathbf{62.00 \pm 4.08}$ & $80.23 \pm 1.10$ & $\mathbf{50.33 \pm 5.80}$ & $\mathbf{61.72 \pm 4.22}$ \\
 & Factorized CoOp, $r=8$ & $66.10 \pm 1.15$ & $56.03 \pm 0.99$ & $60.65 \pm 1.03$ & $\mathbf{74.50 \pm 1.04}$ & $50.40 \pm 2.69$ & $60.10 \pm 1.97$ & $79.67 \pm 1.02$ & $46.43 \pm 7.45$ & $58.46 \pm 5.98$ \\
 & Fixed Orthogonal $B$, $r=4$ & $65.97 \pm 0.49$ & $60.30 \pm 2.19$ & $63.00 \pm 1.42$ & $73.47 \pm 1.91$ & $51.13 \pm 5.35$ & $60.24 \pm 4.31$ & $80.07 \pm 0.51$ & $43.57 \pm 3.85$ & $56.37 \pm 3.39$ \\
 & Fixed Orthogonal $B$, $r=8$ & $65.23 \pm 3.60$ & $58.20 \pm 0.90$ & $61.50 \pm 2.04$ & $72.27 \pm 2.43$ & $53.70 \pm 8.23$ & $61.47 \pm 6.14$ & $\mathbf{80.70 \pm 1.01}$ & $48.80 \pm 4.03$ & $60.77 \pm 3.40$ \\
\bottomrule
\end{tabular}
}
\end{table*}

\begin{table*}[h]
\centering
\scriptsize
\caption{Per-dataset base-to-new generalization on ViT-B/16. Each model is trained on
the seen (base) classes and evaluated on the seen classes, the held-out unseen (new)
classes, and their harmonic mean (\textbf{Seen}/\textbf{Unseen}/\textbf{H}), at 1, 4,
and 16 shots. Methods are dense CoOp, the trainable factorization at ranks 4 and 8,
and the fixed orthogonal-$\bm{B}$ factorization at ranks 4 and 8. Values are mean (\%)
$\pm$ sample standard deviation over three seeds; \textbf{bold} marks the best value
across methods for each dataset, shot, and metric.}
\label{tab:base_to_new_vitb16_dataset}
\resizebox{\textwidth}{!}{
\begin{tabular}{llccccccccc}
\toprule
\multirow{2}{*}{\textbf{Dataset}} & \multirow{2}{*}{\textbf{Method}} & \multicolumn{3}{c}{\textbf{1-shot}} & \multicolumn{3}{c}{\textbf{4-shot}} & \multicolumn{3}{c}{\textbf{16-shot}} \\
\cmidrule(lr){3-5} \cmidrule(lr){6-8} \cmidrule(lr){9-11}
& & \textbf{Seen} & \textbf{Unseen} & \textbf{H} & \textbf{Seen} & \textbf{Unseen} & \textbf{H} & \textbf{Seen} & \textbf{Unseen} & \textbf{H} \\
\midrule
\multirow{5}{*}{Caltech101} & Dense CoOp & $96.03 \pm 0.42$ & $92.43 \pm 0.38$ & $94.20 \pm 0.37$ & $96.90 \pm 0.20$ & $90.67 \pm 3.00$ & $93.66 \pm 1.54$ & $97.93 \pm 0.40$ & $89.27 \pm 3.23$ & $93.38 \pm 1.69$ \\
 & Factorized CoOp, $r=4$ & $\mathbf{96.87 \pm 0.57}$ & $92.83 \pm 1.46$ & $\mathbf{94.81 \pm 1.01}$ & $\mathbf{97.40 \pm 0.36}$ & $93.60 \pm 1.31$ & $\mathbf{95.46 \pm 0.73}$ & $98.00 \pm 0.17$ & $89.83 \pm 0.93$ & $93.74 \pm 0.49$ \\
 & Factorized CoOp, $r=8$ & $95.70 \pm 1.87$ & $\mathbf{93.60 \pm 0.70}$ & $94.63 \pm 1.11$ & $97.33 \pm 0.15$ & $92.97 \pm 1.25$ & $95.10 \pm 0.58$ & $\mathbf{98.17 \pm 0.12}$ & $\mathbf{92.07 \pm 1.01}$ & $\mathbf{95.02 \pm 0.57}$ \\
 & Fixed Orthogonal $B$, $r=4$ & $95.73 \pm 0.29$ & $92.80 \pm 1.45$ & $94.24 \pm 0.87$ & $97.33 \pm 0.70$ & $91.60 \pm 0.26$ & $94.38 \pm 0.47$ & $97.93 \pm 0.15$ & $90.67 \pm 2.16$ & $94.15 \pm 1.21$ \\
 & Fixed Orthogonal $B$, $r=8$ & $95.90 \pm 1.30$ & $92.43 \pm 1.63$ & $94.13 \pm 1.31$ & $97.17 \pm 0.06$ & $\mathbf{93.77 \pm 0.42}$ & $95.44 \pm 0.24$ & $98.03 \pm 0.50$ & $90.90 \pm 0.40$ & $94.33 \pm 0.20$ \\
\midrule
\multirow{5}{*}{DTD} & Dense CoOp & $58.73 \pm 3.26$ & $46.60 \pm 4.01$ & $51.80 \pm 1.47$ & $71.27 \pm 1.00$ & $47.33 \pm 3.40$ & $56.85 \pm 2.78$ & $\mathbf{80.97 \pm 1.87}$ & $44.63 \pm 7.08$ & $57.39 \pm 6.23$ \\
 & Factorized CoOp, $r=4$ & $\mathbf{59.50 \pm 3.72}$ & $\mathbf{53.33 \pm 3.98}$ & $\mathbf{56.08 \pm 0.84}$ & $71.00 \pm 1.51$ & $\mathbf{49.87 \pm 3.31}$ & $\mathbf{58.57 \pm 2.80}$ & $80.50 \pm 0.36$ & $47.13 \pm 3.47$ & $\mathbf{59.41 \pm 2.73}$ \\
 & Factorized CoOp, $r=8$ & $58.33 \pm 3.70$ & $47.87 \pm 1.40$ & $52.56 \pm 2.17$ & $71.83 \pm 0.46$ & $45.40 \pm 1.32$ & $55.63 \pm 1.04$ & $80.00 \pm 0.10$ & $\mathbf{47.20 \pm 1.25}$ & $59.36 \pm 1.00$ \\
 & Fixed Orthogonal $B$, $r=4$ & $56.23 \pm 2.84$ & $47.37 \pm 9.00$ & $50.93 \pm 4.26$ & $71.53 \pm 1.44$ & $49.60 \pm 3.30$ & $58.53 \pm 2.28$ & $80.23 \pm 0.50$ & $46.67 \pm 3.78$ & $58.94 \pm 2.89$ \\
 & Fixed Orthogonal $B$, $r=8$ & $57.87 \pm 5.45$ & $53.27 \pm 0.71$ & $55.37 \pm 2.06$ & $\mathbf{72.10 \pm 1.47}$ & $47.97 \pm 3.69$ & $57.56 \pm 2.85$ & $80.87 \pm 0.67$ & $46.80 \pm 2.50$ & $59.27 \pm 2.18$ \\
\midrule
\multirow{5}{*}{FGVC Aircraft} & Dense CoOp & $26.97 \pm 0.55$ & $23.67 \pm 5.70$ & $24.95 \pm 3.51$ & $31.27 \pm 1.81$ & $15.00 \pm 9.27$ & $19.20 \pm 9.75$ & $\mathbf{41.97 \pm 1.27}$ & $22.50 \pm 1.87$ & $29.26 \pm 1.64$ \\
 & Factorized CoOp, $r=4$ & $29.47 \pm 1.10$ & $\mathbf{27.97 \pm 0.59}$ & $\mathbf{28.69 \pm 0.79}$ & $32.27 \pm 0.65$ & $\mathbf{27.47 \pm 0.35}$ & $\mathbf{29.67 \pm 0.48}$ & $40.53 \pm 0.86$ & $\mathbf{25.10 \pm 4.61}$ & $\mathbf{30.80 \pm 3.12}$ \\
 & Factorized CoOp, $r=8$ & $22.33 \pm 9.99$ & $19.83 \pm 12.86$ & $20.72 \pm 12.03$ & $30.63 \pm 2.29$ & $12.37 \pm 9.31$ & $16.28 \pm 11.05$ & $41.63 \pm 1.08$ & $21.40 \pm 2.66$ & $28.21 \pm 2.40$ \\
 & Fixed Orthogonal $B$, $r=4$ & $\mathbf{29.63 \pm 0.86}$ & $26.57 \pm 2.72$ & $27.99 \pm 1.92$ & $31.50 \pm 3.64$ & $21.37 \pm 11.23$ & $24.59 \pm 10.11$ & $39.70 \pm 1.25$ & $21.77 \pm 3.96$ & $27.92 \pm 3.21$ \\
 & Fixed Orthogonal $B$, $r=8$ & $27.40 \pm 0.36$ & $25.67 \pm 2.58$ & $26.47 \pm 1.57$ & $\mathbf{32.63 \pm 0.93}$ & $21.30 \pm 3.14$ & $25.69 \pm 2.42$ & $40.73 \pm 0.80$ & $22.40 \pm 2.82$ & $28.82 \pm 2.31$ \\
\midrule
\multirow{5}{*}{Food101} & Dense CoOp & $84.90 \pm 0.53$ & $87.00 \pm 1.51$ & $85.93 \pm 0.76$ & $86.17 \pm 0.42$ & $85.13 \pm 1.74$ & $85.64 \pm 0.67$ & $87.90 \pm 0.17$ & $83.63 \pm 3.02$ & $85.70 \pm 1.62$ \\
 & Factorized CoOp, $r=4$ & $\mathbf{87.67 \pm 1.25}$ & $87.63 \pm 2.39$ & $87.65 \pm 1.74$ & $\mathbf{87.63 \pm 0.81}$ & $\mathbf{87.23 \pm 2.11}$ & $\mathbf{87.43 \pm 1.46}$ & $\mathbf{89.37 \pm 0.38}$ & $87.13 \pm 1.48$ & $88.23 \pm 0.84$ \\
 & Factorized CoOp, $r=8$ & $86.83 \pm 1.42$ & $\mathbf{88.80 \pm 1.20}$ & $\mathbf{87.80 \pm 1.19}$ & $86.87 \pm 0.67$ & $85.77 \pm 1.91$ & $86.31 \pm 1.12$ & $88.67 \pm 0.49$ & $84.93 \pm 1.47$ & $86.76 \pm 0.85$ \\
 & Fixed Orthogonal $B$, $r=4$ & $86.47 \pm 2.04$ & $87.37 \pm 1.30$ & $86.91 \pm 1.60$ & $87.33 \pm 1.18$ & $85.80 \pm 2.52$ & $86.56 \pm 1.83$ & $89.33 \pm 0.42$ & $\mathbf{87.97 \pm 0.55}$ & $\mathbf{88.64 \pm 0.45}$ \\
 & Fixed Orthogonal $B$, $r=8$ & $84.73 \pm 1.60$ & $85.47 \pm 1.97$ & $85.08 \pm 1.09$ & $85.90 \pm 0.95$ & $85.80 \pm 1.25$ & $85.85 \pm 0.84$ & $88.37 \pm 0.25$ & $85.13 \pm 4.20$ & $86.69 \pm 2.23$ \\
\midrule
\multirow{5}{*}{Oxford Flowers} & Dense CoOp & $83.27 \pm 2.99$ & $64.97 \pm 1.26$ & $72.98 \pm 1.77$ & $92.10 \pm 1.49$ & $60.63 \pm 3.43$ & $73.08 \pm 2.37$ & $97.70 \pm 0.53$ & $56.97 \pm 3.87$ & $71.92 \pm 3.09$ \\
 & Factorized CoOp, $r=4$ & $83.70 \pm 3.30$ & $72.33 \pm 2.10$ & $77.54 \pm 0.42$ & $91.23 \pm 1.39$ & $66.17 \pm 2.65$ & $76.70 \pm 2.26$ & $97.47 \pm 0.50$ & $61.53 \pm 1.86$ & $75.43 \pm 1.54$ \\
 & Factorized CoOp, $r=8$ & $83.90 \pm 1.93$ & $72.93 \pm 4.69$ & $77.97 \pm 2.82$ & $91.57 \pm 2.04$ & $65.03 \pm 1.27$ & $76.03 \pm 0.63$ & $\mathbf{97.77 \pm 0.74}$ & $\mathbf{64.40 \pm 6.03}$ & $\mathbf{77.53 \pm 4.20}$ \\
 & Fixed Orthogonal $B$, $r=4$ & $82.90 \pm 1.95$ & $\mathbf{73.03 \pm 2.19}$ & $77.65 \pm 2.00$ & $90.77 \pm 1.67$ & $\mathbf{70.53 \pm 3.93}$ & $\mathbf{79.31 \pm 1.87}$ & $97.57 \pm 0.25$ & $61.33 \pm 2.04$ & $75.30 \pm 1.51$ \\
 & Fixed Orthogonal $B$, $r=8$ & $\mathbf{85.40 \pm 1.21}$ & $72.17 \pm 3.85$ & $\mathbf{78.20 \pm 2.57}$ & $\mathbf{92.33 \pm 1.70}$ & $65.50 \pm 2.65$ & $76.63 \pm 2.30$ & $\mathbf{97.77 \pm 0.49}$ & $63.37 \pm 2.72$ & $76.88 \pm 2.08$ \\
\midrule
\multirow{5}{*}{Oxford Pets} & Dense CoOp & $89.13 \pm 2.03$ & $94.23 \pm 2.78$ & $91.57 \pm 0.23$ & $91.10 \pm 2.95$ & $92.73 \pm 4.45$ & $91.87 \pm 2.96$ & $93.93 \pm 0.45$ & $92.40 \pm 3.90$ & $93.13 \pm 1.76$ \\
 & Factorized CoOp, $r=4$ & $\mathbf{93.50 \pm 0.66}$ & $\mathbf{97.50 \pm 0.17}$ & $\mathbf{95.46 \pm 0.36}$ & $93.33 \pm 1.31$ & $\mathbf{96.43 \pm 0.96}$ & $\mathbf{94.86 \pm 1.11}$ & $94.00 \pm 0.66$ & $\mathbf{96.17 \pm 0.12}$ & $\mathbf{95.07 \pm 0.39}$ \\
 & Factorized CoOp, $r=8$ & $93.13 \pm 1.76$ & $97.07 \pm 0.21$ & $95.05 \pm 0.87$ & $\mathbf{94.30 \pm 0.82}$ & $95.30 \pm 2.10$ & $94.79 \pm 1.34$ & $93.63 \pm 0.93$ & $92.20 \pm 4.91$ & $92.88 \pm 2.90$ \\
 & Fixed Orthogonal $B$, $r=4$ & $92.77 \pm 1.17$ & $95.97 \pm 1.72$ & $94.33 \pm 0.75$ & $93.17 \pm 1.91$ & $95.90 \pm 1.25$ & $94.51 \pm 1.54$ & $\mathbf{94.50 \pm 0.17}$ & $95.57 \pm 0.85$ & $95.03 \pm 0.43$ \\
 & Fixed Orthogonal $B$, $r=8$ & $91.30 \pm 1.65$ & $\mathbf{97.50 \pm 0.44}$ & $94.30 \pm 1.08$ & $91.63 \pm 2.68$ & $96.17 \pm 0.31$ & $93.83 \pm 1.55$ & $93.73 \pm 0.65$ & $91.53 \pm 6.98$ & $92.53 \pm 3.66$ \\
\midrule
\multirow{5}{*}{UCF101} & Dense CoOp & $71.83 \pm 2.28$ & $61.40 \pm 4.05$ & $66.12 \pm 1.89$ & $\mathbf{79.57 \pm 0.83}$ & $52.93 \pm 2.08$ & $63.55 \pm 1.36$ & $\mathbf{84.60 \pm 1.04}$ & $53.83 \pm 8.78$ & $65.50 \pm 6.21$ \\
 & Factorized CoOp, $r=4$ & $\mathbf{74.77 \pm 2.50}$ & $\mathbf{70.17 \pm 1.85}$ & $\mathbf{72.38 \pm 1.83}$ & $79.20 \pm 1.61$ & $\mathbf{64.23 \pm 2.10}$ & $\mathbf{70.93 \pm 1.66}$ & $84.57 \pm 0.71$ & $53.83 \pm 1.70$ & $65.77 \pm 1.21$ \\
 & Factorized CoOp, $r=8$ & $73.87 \pm 3.04$ & $65.77 \pm 1.29$ & $69.56 \pm 1.45$ & $79.43 \pm 1.48$ & $63.60 \pm 10.23$ & $70.39 \pm 7.11$ & $83.77 \pm 0.15$ & $55.23 \pm 6.02$ & $66.44 \pm 4.44$ \\
 & Fixed Orthogonal $B$, $r=4$ & $72.03 \pm 2.57$ & $67.40 \pm 5.10$ & $69.60 \pm 3.59$ & $78.77 \pm 0.15$ & $63.07 \pm 2.17$ & $70.04 \pm 1.41$ & $83.83 \pm 0.64$ & $57.23 \pm 2.42$ & $68.01 \pm 1.92$ \\
 & Fixed Orthogonal $B$, $r=8$ & $73.87 \pm 0.85$ & $66.00 \pm 3.68$ & $69.69 \pm 2.44$ & $78.57 \pm 0.49$ & $58.63 \pm 8.18$ & $66.92 \pm 5.27$ & $84.37 \pm 0.15$ & $\mathbf{58.77 \pm 2.22}$ & $\mathbf{69.26 \pm 1.51}$ \\
\bottomrule
\end{tabular}
}
\end{table*}

\clearpage
\section{Research assets and responsible use}
\label{app:assets}

\subsection{Existing datasets, models, and software}

The implementation builds on CoOp, Dassl, and OpenAI CLIP.
The repository retains the inherited MIT license for
CoOp-derived code and the MIT license for bundled Dassl.
Its third-party notices identify the OpenAI CLIP
implementation and its upstream MIT license.
These software licenses do not establish the licensing
terms of the benchmark datasets.

The dataset and pretrained-model sources are credited
in the paper. A complete verified record of the terms
and versions applicable to all seven historical dataset
copies has not yet been recovered. Pretrained CLIP
weights are obtained through the upstream download
mechanism and are not included in the preserved
release assets.

\subsection{Code and result availability}

The preserved project contains implementation code,
experiment configurations, launch scripts, a workbook
exporter, numerical result records, and reproduction
documentation. Dense, joint-factorized, and transfer
records have archived CSV counterparts, while some
fixed-basis and class-generalization results are
preserved only in the workbook.

Complete historical training logs and trained factor
checkpoints were not recovered in the repository audit.
The available records support reconstruction of the
preserved numerical summaries, subject to the coverage
of the exporter, rather than exact replay of every
historical training run.

\subsection{Potential impacts and limitations of use}

Reducing the number of trainable prompt parameters
can reduce the adaptation state associated with each
task. This study does not establish reductions in
training energy, total GPU memory, or inference latency.

The adapted models inherit limitations of the pretrained
CLIP model and the benchmark data. The experiments do
not evaluate demographic fairness, privacy protection,
or suitability for consequential deployment.
The findings concern CoOp-style text prompts on the
evaluated classification benchmarks; they do not
establish the same behavior for deeper multimodal
prompting systems or other pretrained model families.


\end{document}